\newcommand{\cmark}{\ding{51}}%
\newcommand{\xmark}{\ding{55}}%
\theoremstyle{plain}%
\newtheorem{definition}{Definition}%
\theoremstyle{remark}%
\newtheorem{example}{Example}%
\newtheorem{remark}{Remark}%
\theoremstyle{definition}%
\newtheorem{theorem}{Theorem}
\newtheorem{proposition}[theorem]{Proposition}%
\newtheorem{corollary}[theorem]{Corollary}%
\newtheorem*{proposition*}{Proposition} 
\tikzstyle{edge} = [fill,opacity=.2,fill opacity=.9,line cap=round, line join=round, line width=50pt]
\newcommand{\gridgraph}[1]{
    \begin{tikzpicture}
    \tikzmath{\k = #1-1;\m = #1-2;}
      \foreach \x in {0,1,...,#1} {
        \foreach \y in {0,1,...,#1} {
          \node[circle, fill, inner sep=1pt] at (\x, \y) {};
        }
      }
    
      \foreach \x in {0,1,...,\k} {
        \draw[thick] (\x, #1) -- (\x+1, #1);
        \draw[thick] (#1, \x) -- (#1, \x+1);
        \foreach \y in {0,1,...,\k} {
          \draw[thick] (\x, \y) -- (\x+1, \y);
          \draw[thick] (\x, \y) -- (\x, \y+1);
        }
      }
\end{tikzpicture}
}
\newcommand{\diamondgraph}[1]{
    \begin{tikzpicture}
    \tikzmath{\k = #1-1;\m = #1-2;}
      \foreach \x in {0,1,...,\k} {
        \foreach \y in {0,1,...,#1} {
          \node[circle, fill, inner sep=1pt] at (\x, \y) {};
          \node[circle, fill, inner sep=1pt] at (\y-0.5, \x+0.5) {};
        }
      }
    
      \foreach \x in {0,1,...,\m} {
        \draw[thick] (\x, #1) -- (\x+1, #1);
        \draw[thick] (#1-0.5, \x+0.5) -- (#1-0.5, \x+1.5);
        \draw[thick] (-0.5, \x+0.5) -- (-0.5, \x+1.5);
        \foreach \y in {0,1,...,\k} {
          \draw[thick] (\x, \y) -- (\x+1, \y);
          \draw[thick] (\y-0.5, \x+0.5) -- (\y-0.5, \x+1.5);
          \draw[thick] (\x, \y) -- (\x-0.5, \y+0.5);
          \draw[thick] (\x, \y) -- (\x+0.5, \y+0.5);
          \draw[thick] (\x+0.5, \y+0.5) -- (\x, \y+1);
          \draw[thick] (\x+0.5, \y+0.5) -- (\x+1, \y+1);
        }
      }
      \foreach \y in {0,1,...,\k} {
          \draw[thick] (#1-1, \y) -- (#1-1.5, \y+0.5);
          \draw[thick] (#1-1, \y) -- (#1-0.5, \y+0.5);
          \draw[thick] (-0.5, \y+0.5) -- (0, \y+1);
          \draw[thick] (#1-0.5, \y+0.5) -- (#1-1, \y+1);
          \draw[thick] (-0.5, \y+0.5) -- (0, \y+1);
          \draw[thick] (\y-0.5, #1-0.5) -- (\y, #1);
        }
        \foreach \x in {0,1,...,#1} {
            \draw[line width=0.5mm, red] (-0.5+0.5*\x, 0.5*#1-0.5*\x) -- (0.5*#1-0.5+0.5*\x, #1-0.5*\x);
            \draw[line width=0.5mm, red] (-0.5+0.5*\x, 0.5*#1+0.5*\x) -- (0.5*#1-0.5+0.5*\x, 0.5*\x);
        }
    \end{tikzpicture}
}
\newcommand{\ie}{\textit{i}.\textit{e}. }
\newcommand{\eg}{\textit{e}.\textit{g}.\ }
\DeclareMathOperator*{\argmax}{argmax}
\definecolor{lightblue}{rgb}{0.96, 0.99, 0.99}
\definecolor{darkblue}{rgb}{0.7,0.81,0.87}
\definecolor{brun}{rgb}{0.87,0.72,0.53}
\definecolor{chamois}{rgb}{1.0,0.90,0.70}
\definecolor{darkpurple}{rgb}{0.81,0.7,0.87}
\definecolor{lightpurple}{rgb}{0.99, 0.96, 0.99}
\definecolor{lightbrown}{rgb}{0.99, 0.99, 0.96}
\definecolor{darkbrown}{rgb}{0.87,0.81,0.7}
\definecolor{dark}{rgb}{0.81,0.87,0.7}
\begin{document}

\title[A Complexity Map of Probabilistic Reasoning for Neurosymbolic Classification Techniques]{A Complexity Map of Probabilistic Reasoning for Neurosymbolic Classification Techniques}


\author*[1,2]{\fnm{Arthur} \sur{Ledaguenel}}\email{arthur.ledaguenel@irt-systemx.fr}

\author[2]{\fnm{Céline} \sur{Hudelot}}

\author[1]{\fnm{Mostepha} \sur{Khouadjia}}

\affil[1]{\orgname{IRT SystemX}, \orgaddress{\city{Palaiseau}, \country{France}}}

\affil[2]{\orgdiv{MICS}, \orgname{CentraleSupélec}, \orgaddress{\state{Saclay}, \country{France}}}




\abstract{Neurosymbolic artificial intelligence is a growing field of research aiming to combine neural network learning capabilities with the reasoning abilities of symbolic systems. Informed multi-label classification is a sub-field of neurosymbolic AI which studies how to leverage prior knowledge to improve neural classification systems. Recently, a family of neurosymbolic techniques for informed classification based on probabilistic reasoning has gained significant traction. Unfortunately, depending on the language used to represent prior knowledge, solving certain probabilistic reasoning problems can become prohibitively hard when the number of classes increases. Therefore, the asymptotic complexity of probabilistic reasoning is of cardinal importance to assess the scalability of such techniques. In this paper, we develop a unified formalism for four probabilistic reasoning problems. Then, we compile several known and new tractability results into a single complexity map of probabilistic reasoning. We build on top of this complexity map to characterize the domains of scalability of several techniques. We hope this work will help neurosymbolic AI practitioners navigate the scalability landscape of probabilistic neurosymbolic techniques.}

\keywords{Neurosymbolic, Probabilistic Reasoning, Computational complexity}



\maketitle

\section{Introduction} \label{sec:intro}
Neurosymbolic Artificial Intelligence (NeSy AI) is a growing field of research aiming to combine neural network learning capabilities with the reasoning abilities of symbolic systems. This hybridization can take many shapes depending on how the neural and symbolic components interact \cite{Kautz2022T,wang2023dataand}. An important sub-field of neurosymbolic AI is Informed Machine Learning \cite{VonRueden2023}, which studies how to leverage background knowledge to improve neural systems. In particular, \textbf{informed classification} studies multi-label classification tasks where prior knowledge specifies which combinations of labels are semantically valid.


A specific family of neurosymbolic techniques for informed classification, that has gained significant traction in the recent literature, interprets the output of a neural network as independent probabilities on output variables and leverages \textbf{probabilistic reasoning} to integrate prior knowledge \cite{Deng2014,Xu2018,Yang2020,Manhaeve2021,Ahmed2022spl,Ahmed2022nesyer}. Doing so, they rely on solving several probabilistic reasoning problems such as computing the probability of a propositional theory to be satisfied, a counting problem known as Probabilistic Query Evaluation (\texttt{PQE}) \cite{Suciu2020}, or finding the most probable assignment accepted by a propositional theory, an optimization problem known as Most Probable Explanation (\texttt{MPE}) \cite{Kwisthout2011}.

In this context, the \textbf{asymptotic complexity} of probabilistic reasoning is of cardinal importance to assess the \textbf{scalability} of these techniques on large classification tasks. In particular, we are interested in knowing how the complexity of a particular technique will evolve when the number of classes increases, as it is not uncommon for multi-label classification tasks to include thousands of classes (\eg ImageNet dataset \cite{Russakovsky2015} contains 1,000 classes, and up to 1,860 when adding parent classes in the WordNet hierarchy \cite{Miller1995}, Census Cars dataset \cite{Gebru2017} contains 2,675 classes of cars and iNaturalist dataset \cite{iNaturalist} contains 5,089 classes of natural species). However, most papers in the field focus on performance metrics and use toy datasets where complexity issues are not yet relevant (\eg \cite{Xu2018} tackles with simple paths in $4 \times 4$ grids, preference rankings over $4$ classes or classification with $10$ mutually exclusive classes like MNIST \cite{LeCun1998}, Fashion-MNIST \cite{Xiao2017} or Cifar-10 \cite{Krizhevsky2009}). Therefore, issues regarding computational complexity are rarely tackled in the neurosymbolic literature. This can lead to misconceptions regarding the computational limits of a given technique. For instance, \cite{Krieken2022} highlights scalability issues of existing implementations of neurosymbolic techniques on the multi-digit MNIST-addition task \cite{Manhaeve2021} and introduces an approximate method to overcome these issues. Then, \cite{Maene2023} later shows that a different encoding of the task provides a linear time (in the number of digits) computational scheme. We believe that the neurosymbolic community would benefit from a systematic study of the computational complexity of probabilistic reasoning. We hope this work will help to fill this gap.

The asymptotic complexity of probabilistic reasoning depends on the type of problems being tackled and the \textbf{representation language} used for the prior knowledge. In this paper, we draft a \textbf{complexity map} of probabilistic reasoning: we give tractability and intractability results of several probabilistic reasoning problems for different representation languages. In particular, we focus on succinct languages to guarantee the scalability of neurosymbolic techniques on specific classes of prior knowledge: hierarchical, cardinal, simple paths and matching constraints.

Finally, most probabilistic neurosymbolic techniques found in the literature rely on knowledge compilation to fragments of boolean circuits \cite{Darwiche2002} (specifically in Decomposable Negation Normal Form \texttt{DNNF} or in deterministic Decomposable Negation Normal Form \texttt{d-DNNF}) to perform probabilistic reasoning. We discuss the benefits and limits of this approach: in particular, we show that \texttt{DNNF} and its fragments do not cover the full range of scalability for probabilistic neurosymbolic techniques.


We start with preliminary definitions on graphs, knowledge representation languages, probabilistic reasoning and neurosymbolic techniques in Section \ref{sec:prem}. We characterize the conditions for scalability of probabilistic neurosymbolic techniques in Section \ref{sec:scalability}. Then, we examine the benefits and limits of knowledge compilation in Section \ref{sec:kc}. Finally, we analyze in Section \ref{sec:cmap} the asymptotical complexity of probabilistic reasoning for several succinct languages, which represent types of prior knowledge commonly used in the neurosymbolic literature. Full proofs of our results can be found in Section \ref{sec:proofs}. We mention related work in Section \ref{sec:related} and conclude with possible future research questions in Section \ref{sec:conclusion}.

Our contributions are the following. First, to the best of our knowledge, we draft the first complexity map for probabilistic reasoning that includes counting, optimization and enumeration problems. We hope this work will help neurosymbolic AI practitioners to navigate the scalability landscape of probabilistic neurosymbolic techniques. Besides, we extend known \texttt{PQE} and \texttt{MPE} tractability results for simple path and cardinal constraints with efficient compilation algorithms to \texttt{d-DNNF}, which implies \texttt{EQE} and \texttt{ThreshEnum} tractability. Finally, we show that matching constraints are \texttt{MPE}-tractable but cannot be compiled to \texttt{DNNF}: this shows that the dominant trend of compiling input theories to \texttt{DNNF}, \texttt{d-DNNF}, or one of its fragments, does not cover all cases of tractability.

\section{Preliminaries} \label{sec:prem}

\subsection{Graph theory}
A \textbf{graph} $G=(V, E)$ is composed of a finite set of vertices $V$ and a set of edges $E \subset V \times V$. $G$ is \textbf{undirected} when edges are taken as sets (\ie $E \subset \{\{u, v\}| u, v \in V \}$) and is \textbf{directed} if edges are taken as ordered pairs (\ie $E \subset \{(u, v)| u, v \in V \}$, with $(u, v) \ne (v, u)$).

Let's assume a directed graph $G=(V, E)$ and a vertex $v \in V$. The set of \textbf{incoming edges} to $v$ is $E_{in}(v)=\{(u, v) \in E\}$ and the set of \textbf{outgoing edges} from $v$ is $E_{out}(v)=\{(v, u) \in E\}$. A \textbf{source} in $G$ is a vertex that has no incoming edge. A \textbf{sink} in $G$ is a vertex that has no outgoing edge.

A \textbf{closure} in $G$ is a subset of vertices $U \subset V$ with no incoming edge from remaining vertices (\ie if $(u, v) \in E$ and $v \in U$, then $u \in U$).

A \textbf{path} in $G$ is a sequence of edges $p=(e_1=(u_1, v_1), ..., e_m=(u_m, v_m)) \in E^m$ such that $\forall 1 \leq i \leq m-1, v_i = u_{i+1}$. We say that $p$ is a path from $u_1$ to $v_m$ (or $u_1-v_m$ path). A \textbf{simple} path, or sometimes a self-avoiding path, is a path $p$ such that no vertex is visited twice (\ie $u_i \ne u_j$ for $i \ne j$ and $u_1 \ne v_m$). Notice that given a set of edges in $G$, we can easily check if a simple path can be formed using these edges. Moreover, if that is the case, this simple path is unique. A \textbf{total} path is a path from a source to a sink of $G$. A \textbf{cycle} is a path $p$ that starts and end with the same vertex with no other vertex visited twice (\ie $u_i \ne u_j$ for $i \ne j$ and $u_1 = v_m$).

$G$ is \textbf{acyclic} if it has no cycle. For a directed acyclic graph $G=(V, E)$, a \textbf{topological ordering} of the vertices is a bijection $\sigma:V \mapsto \llbracket 1, |V| \rrbracket$ such that for $u, v \in V$, if there is a path from $u$ to $v$ then $\sigma(u) < \sigma(v)$. A \textbf{topological ordering} of the edges is a bijection $\sigma:E \mapsto \llbracket 1, |V| \rrbracket$ such that for $e_1=(u_1, v_1), e_2=(u_2, v_2) \in E$, if there is a path from $v_1$ to $u_2$ then $\sigma(e_1) < \sigma(e_2)$.

$G$ is a (directed) \textbf{tree} if it has a single source $r$, called a \textbf{root}, and for any other vertex $v$ there is a unique path from $r$ to $v$. A sink in a tree is called a \textbf{leaf}.

A \textbf{matching} in an undirected graph $G=(V, E)$ is a set of edges $M \subset E$ such that no vertex is covered twice (\ie $\forall e_1, e_2 \in M, e_1 \neq e_2 \iff e_1 \cap e_2 = \emptyset$).


\subsection{Knowledge representation} \label{sec:kr}
In its more abstract form, \textbf{knowledge} about a \textbf{world} tells us in what \textbf{states} this world can be observed. In this paper, we only consider propositional knowledge, where the states correspond to subsets of a discrete set of variables $\mathbf{Y}$ and knowledge tells us what combinations of variables can be observed in the world. The set of \textit{possible} states is $\mathbb{B}^{\mathbf{Y}}$, where $\mathbb{B}:= \{0, 1\}$ is the set of boolean values. A state $\mathbf{y} \in \mathbb{B}^{\mathbf{Y}}$ can be seen as a subset of $\mathbf{Y}$ as well as an application that maps each variable to $\mathbb{B}$ (\ie for a variable $Y_i \in \mathbf{Y}$, $\mathbf{y}_i=1$ is equivalent to $Y_i \in \mathbf{y}$). Knowledge defines a set of states that are considered \textit{valid}. An \textit{abstract} representation of this knowledge is a \textbf{boolean function} $f \in \mathbb{B}^{\mathbb{B}^{\mathbf{Y}}}$, which can be viewed either as a function that maps states in $\mathbb{B}^{\mathbf{Y}}$ to $\mathbb{B}$ or as a subset of $\mathbb{B}^{\mathbf{Y}}$. However, in order to exploit this knowledge (\eg reason, query, communicate, etc.), we need a \textit{concrete} language to represent it.

A language for representing knowledge has two sides. The \textbf{syntax} defines admissible \textbf{statements} that can be made about the world. The \textbf{semantic} determines the relation between statements and states: it specifies in which states a statement can be considered \textit{true} or \textit{false}, or conversely for which statements a state is considered \textbf{valid}.

Knowledge representation languages for boolean functions come from diverse fields and are often designed to meet specific needs. For instance, propositional logic comes from the field of logic and the \texttt{SAT} community, boolean circuits and decision diagrams \cite{Darwiche2002} come from the fields of knowledge representation and automated reasoning, while random forests \cite{breiman_random_2001}, boosted trees \cite{Freund1995} or binarized neural networks \cite{hubara_binarized_2016} come from the field of machine learning.

To present a unified view, we propose the following definition of a \textbf{propositional language}, inspired by the work in \cite{Brewka2007,Lierler2024}.

\begin{definition}[Propositional language] \label{def:kr}
    A \textbf{propositional language} is a couple $\mathtt{L}:=(\mathtt{T}, \mathscr{s})$ such that for any discrete set of variables $\mathbf{Y}$:
        \begin{itemize}
            \item the \textbf{syntax} $\mathtt{T}$ defines a set of sentences $\mathtt{T}(\mathbf{Y})$ called \textbf{theories} on $\mathbf{Y}$ and written from $\mathbf{Y}$ and a finite set of symbols specific to $\mathtt{L}$.
            \item the \textbf{semantics} $\mathscr{s}$ maps a theory on $\mathbf{Y}$ to a boolean function in $\mathbb{B}^{\mathbb{B}^{\mathbf{Y}}}$:
                \begin{equation*}
                    \mathscr{s}(\mathbf{Y}) : \mathtt{T}(\mathbf{Y}) \rightarrow \mathbb{B}^{\mathbb{B}^{\mathbf{Y}}}
                \end{equation*}
        \end{itemize}
\end{definition}

When the set of variables is clear from context, we simply note $\kappa \in \mathtt{T}$ and $\mathscr{s}(\kappa)$ in place of $\kappa \in \mathtt{T}(\mathbf{Y})$ and $\mathscr{s}(\mathbf{Y})(\kappa)$. 

\paragraph{Satisfiability}

A state $\mathbf{y} \in \mathbb{B}^{\mathbf{Y}}$ \textbf{satisfies} a theory $\kappa\in \mathtt{T}$ iff it belongs to the boolean formula represented by $\kappa$ (\eg $\mathbf{y} \in \mathscr{s}(\mathbf{Y})(\kappa)$). We also say that $\kappa$ accepts $\mathbf{y}$. A theory $\kappa$ is \textbf{satisfiable} if it is satisfied by a state, \ie if $\mathscr{s}(\mathbf{Y})(\kappa) \ne \emptyset$.

\paragraph{Equivalence}

Two theories $\kappa_1$ and $\kappa_2$ are \textbf{equivalent} iff they represent the same boolean function (\eg $\mathscr{s}(\kappa_1) = \mathscr{s}(\kappa_2)$). This notion can be extended to two theories belonging to distinct propositional languages, which is key for knowledge compilation (see Section \ref{sec:kc}).

\paragraph{Fragments}
We say that a propositional language $\mathtt{L}_2:=(\mathtt{T}_2, \mathscr{s}_2)$ is a \textbf{fragment} of a propositional language $\mathtt{L}_1:=(\mathtt{T}_1, \mathscr{s}_1)$, noted $\mathtt{L}_2 \subset \mathtt{L}_1$, iff for any discrete set of variables $\mathbf{Y}$: $\mathtt{T}_2(\mathbf{Y}) \subset \mathtt{T}_1(\mathbf{Y})$ and $\mathscr{s}_2(\mathbf{Y})$ is the restriction of $\mathscr{s}_1(\mathbf{Y})$ to $\mathtt{T}_2(\mathbf{Y})$.

\paragraph{Completeness}

A propositional language is \textbf{complete} iff, for any discrete set of variables $\mathbf{Y}$, any boolean function $\mathscr{f} \in \mathbb{B}^{\mathbb{B}^{\mathbf{Y}}}$ can be represented by a theory, \ie $\forall \mathscr{f} \in \mathbb{B}^{\mathbb{B}^{\mathbf{Y}}}, \exists \kappa \in \mathtt{T}(\mathbf{Y}), \mathscr{s}(\kappa) = \mathscr{f}$.

\paragraph{Size}

The \textbf{size} of a theory $\kappa$, noted $|\kappa|$, is its length as a sentence. However, the syntax is sometimes easier to represent using set of sentences or graphs, in which case we detail for each language the appropriate size measure.

\vspace{5mm}
As we illustrate below, Definition \ref{def:kr} covers many knowledge representation languages commonly found in the neurosymbolic literature. We give on Table \ref{tab:lang_acro} a list of the main acronyms we will use to designate propositional languages throughout the paper.

\begin{table}[h]
    \centering
    \begin{tabular}{r l}
        \hline
        \textbf{Acronyms} & \textbf{Languages} \\
        \hline
        \multicolumn{2}{l}{\textbf{Complete}} \\
        \texttt{PL} & Propositional Logic \\
        \texttt{CNF} & Conjunctive Normal Form \\
        \texttt{BC} & Boolean Circuits \\
        \texttt{DNNF} & Decomposable Negation Normal Form \\
        \texttt{d-DNNF} & deterministic Decomposable Negation Normal Form \\
        \texttt{OBDD} & Ordered Binary Decision Diagram \\
        \texttt{BLP} & Binary Linear Programming \\
        \\
        \multicolumn{2}{l}{\textbf{Succinct}} \\
        \texttt{Hex} & Hierarchical and exclusion constraints \\
        \texttt{H} & Hierarchical constraints \\
        \texttt{T-H} & Hierarchical constraints on a tree \\
        \texttt{E-H} & Hierarchical constraints with assumed exclusions \\
        \texttt{TE-H} & Hierarchical constraints on a tree with assumed exclusions \\
        \texttt{Card} & Cardinal constraints \\
        \texttt{SPath} & Simple path constraints \\
        \texttt{ASPath} & Simple path constraints on acyclic graphs \\
        \texttt{Match} & Matching constraints \\
    \hline
    \end{tabular}
    \caption{Acronyms used for propositional languages.}
    \label{tab:lang_acro}
\end{table}

\paragraph{Propositional logic} Propositional logic $\mathtt{PL}:=(\mathtt{T}_{PL}, \mathscr{s}_{PL})$ is the most common propositional language, typically used as an introduction to logic and knowledge representation in many textbooks \cite{Russell2021}.

A theory $\kappa \in \mathtt{T}_{PL}(\mathbf{Y})$ is called a \textbf{propositional formula} and is formed inductively from variables and other formulas by using unary ($\neg$, which expresses negation) or binary ($\lor, \land$, which express disjunction and conjunction respectively) connectives:
\begin{gather*}
        \phi := \quad v \quad | \quad  \neg \phi \quad | \quad \phi \land \varphi \quad | \quad \phi \lor \varphi, \\
        v \in \mathbf{Y}, \phi, \varphi \in \mathtt{T}_{PL}(\mathbf{Y})
\end{gather*}

A variable or its negation is called a \textbf{literal} (respectively a \textbf{positive} or a \textbf{negative} literal), a disjunction of literals $\bigvee_i l_i$ is called a \textbf{clause}, a conjunction of literals $\bigwedge_i l_i$ is called a \textbf{term}. A formula is in Conjunctive Normal Form (\texttt{CNF}) if it is a conjunction of clauses $\bigwedge_i \bigvee_j l_{i, j}$. A formula is in Disjunctive Normal Form (\texttt{DNF}) if it is a disjunction of terms $\bigvee_i \bigwedge_j l_{i, j}$. A \texttt{CNF} is in \texttt{Horn} if each of its clauses contains at most one positive literal. A \texttt{CNF} is in 2-\texttt{CNF} if each of its clauses contains at most two literals. A formula is in 2-\texttt{Horn} if it is both in \texttt{Horn} and 2-\texttt{CNF}. A \texttt{CNF} is \textbf{monotone} (resp. \textbf{negative}) if each of its clauses contains only positive (resp. negative) literals. The set of monotone (resp. negative) 2-\texttt{CNF} is noted \texttt{monotone 2-CNF} (resp. \texttt{neg 2-CNF}). Finally, a \texttt{DNF} is in \texttt{MODS} if each of its terms contains every variable either as a positive or a negative literal (\ie each term accepts a single state).

\begin{remark}
    The \texttt{MODS} fragment of propositional logic is the closest to the boolean function representation since all accepted states are extensively represented.
\end{remark}

The semantics of propositional logic can be inductively derived from the formula following the standard semantics of negation, conjunction and disjunction, \ie a state $\mathbf{y}$ satisfies:
\begin{itemize}
    \item a variable $Y_i \in \mathbf{Y}$ if $y_i=1$
    \item a formula $\neg \phi$ if $\mathbf{y}$ does not satisfy $\phi$
    \item a formula $\phi \lor \psi$ if $\mathbf{y}$ satisfies $\phi$ or $\psi$
    \item a formula $\phi \land \psi$ if $\mathbf{y}$ satisfies $\phi$ and $\psi$
\end{itemize}

\begin{example}
    $Y_1 \land Y_2$ is satisfied by $\mathbf{y}$ iff $Y_1$ and $Y_2$ are satisfied by $\mathbf{y}$, which means iff $y_1=y_2=1$.
\end{example}

\paragraph{Boolean circuits} Boolean circuits \cite{Darwiche2002} $\mathtt{BC}:=(\mathtt{T_{BC}}, \mathscr{s}_{C})$ is a representation language that has gained a lot of traction in recent years because some of its fragments provide tractable algorithms of many reasoning tasks.

A \textbf{boolean circuit} $C \in \mathtt{T_{BC}}(\mathbf{Y})$ is a couple $C:=(G, \varsigma)$ where:
    \begin{itemize}
        \item $G=(N, W)$ is a directed acyclic graph
        \item vertices in $N$ are called \textbf{nodes} (or sometimes \textbf{gates}) and edges in $W$ are called \textbf{wires}
        \item $G$ has a single root $r$ (\ie a node without incoming wire)
        \item $\varsigma: N \rightarrow \mathbf{Y} \cup \{\top, \bot, \neg, \land, \lor\}$ such that:
        \begin{itemize}
            \item $\varsigma(n) \in \mathbf{Y} \cup \{\top, \bot\}$ iff $n$ is a leaf
            \item $\varsigma(n)=\neg$ iff $n$ has exactly one child
            \item $\varsigma(n) \in \{\land, \lor\}$ iff $n$ has at least two children
        \end{itemize}
    \end{itemize}

\begin{figure}[t]
    \centering
    \includegraphics[width=0.6\linewidth]{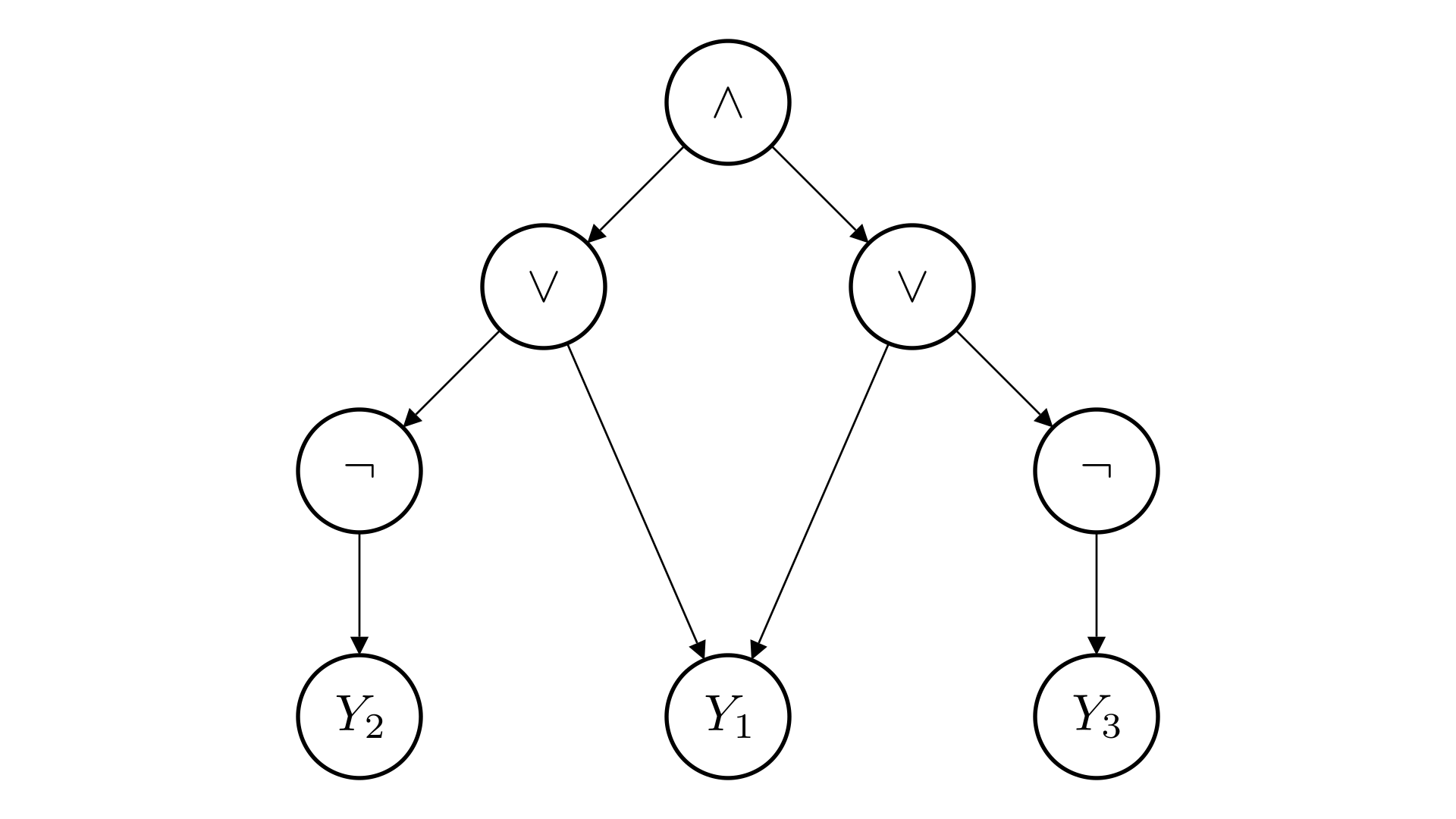}
    \caption{A boolean circuit.}
    \label{fig:circuit}
\end{figure}
    
The size of a circuit is its number of wires, \ie $|C|=|W|$. The set of children of a node $n \in N$ is noted $ch(n)$. The set of variables of a circuit is noted $var(C)$. Given a node $n \in N$, we note $C^n$ the circuit obtained by keeping all nodes that are descendants of $n$ in $G$. We sometimes note $var(n)$ for $var(C^n)$.

\vspace{5mm}
Let's assume a state $\mathbf{y} \in \mathbb{B}^{\mathbf{Y}}$ and a circuit $C:=(G, \varsigma) \in \mathtt{T_{BC}}(\mathbf{Y})$ of root $r$. To know if $\mathbf{y}$ satisfies $C$, we evaluate the circuit bottom up, mapping each node $n$ to $0$ or $1$. First, leaf nodes $n$ are mapped to $1$ if $\varsigma(n)=\top$, to $0$ if $\varsigma(n)=\bot$ and to $\mathbf{y}(\varsigma(n))$ if $\varsigma(n) \in \mathbf{Y}$. Then, for any internal node $n$, it is valued $1$ if $\varsigma(n)=\neg$ and its child is valued $1$, or if $\varsigma(n)=\lor$ and one of its children is valued $1$ or if $\varsigma(n)=\land$ and all its children are valued $1$. Otherwise it is valued $0$. The state $\mathbf{y}$ satisfies the circuit if the root is valued at $1$.

A circuit is in Negation Normal Form (\texttt{NNF}) if all negation nodes have a variable node as a child. A $\land$-node $u$ is \textbf{decomposable} if the sub-circuits rooted in each of its children do not share variables. A circuit is in Decomposable Negation Normal Form (\texttt{DNNF}) if it is \texttt{NNF} and all of its $\land$-nodes are decomposable. A $\lor$-node $u$ is \textbf{deterministic} if the sub-circuits rooted in each of its children do not share satisfying states. A circuit is in deterministic Decomposable Negation Normal Form (\texttt{d-DNNF}) if it is \texttt{DNNF} and all of its $\lor$-nodes are deterministic.

Any propositional formula can be translated in linear time into an equivalent boolean circuit by reading the formula in the standard priority order. Therefore, usual fragments of propositional logic (\eg \texttt{CNF}, \texttt{DNF}, etc.) translate into fragments of boolean circuits. Likewise, decision diagrams like Ordered Binary Decision Diagrams (\texttt{OBDD}) \cite{Bryant1986} also correspond to fragments of boolean circuits \cite{Amarilli24}.

\vspace{5mm} 
\begin{example}
    The circuit represented on Figure \ref{fig:circuit} is equivalent to the \texttt{CNF} formula $(Y_1 \lor \neg Y_2) \land (Y_1 \lor \neg Y_3)$. It is in \texttt{NNF} but is neither decomposable (the two children of the root $\land$-node share variable $Y_1$) nor deterministic (both $\lor$-nodes are not deterministic).
\end{example}

\vspace{5mm} 
In this paper, we adopt a different graphical representation for our circuits, which was designed for \texttt{SDD} in \cite{Darwiche2011} and is well suited to the kind of circuits we need to represent. The circuit represented on Figure \ref{fig:node} is equivalent to $(\alpha_1 \land \beta_1) \lor (\alpha_2 \land \beta_2)$ where $\alpha_1, \beta_1, \alpha_2$ are terminal nodes (\ie literals, $\top$ or $\bot$) and $\beta_2$ is a sub-circuit.

\begin{figure}[t]
    \centering
    \includegraphics[width=0.3\linewidth]{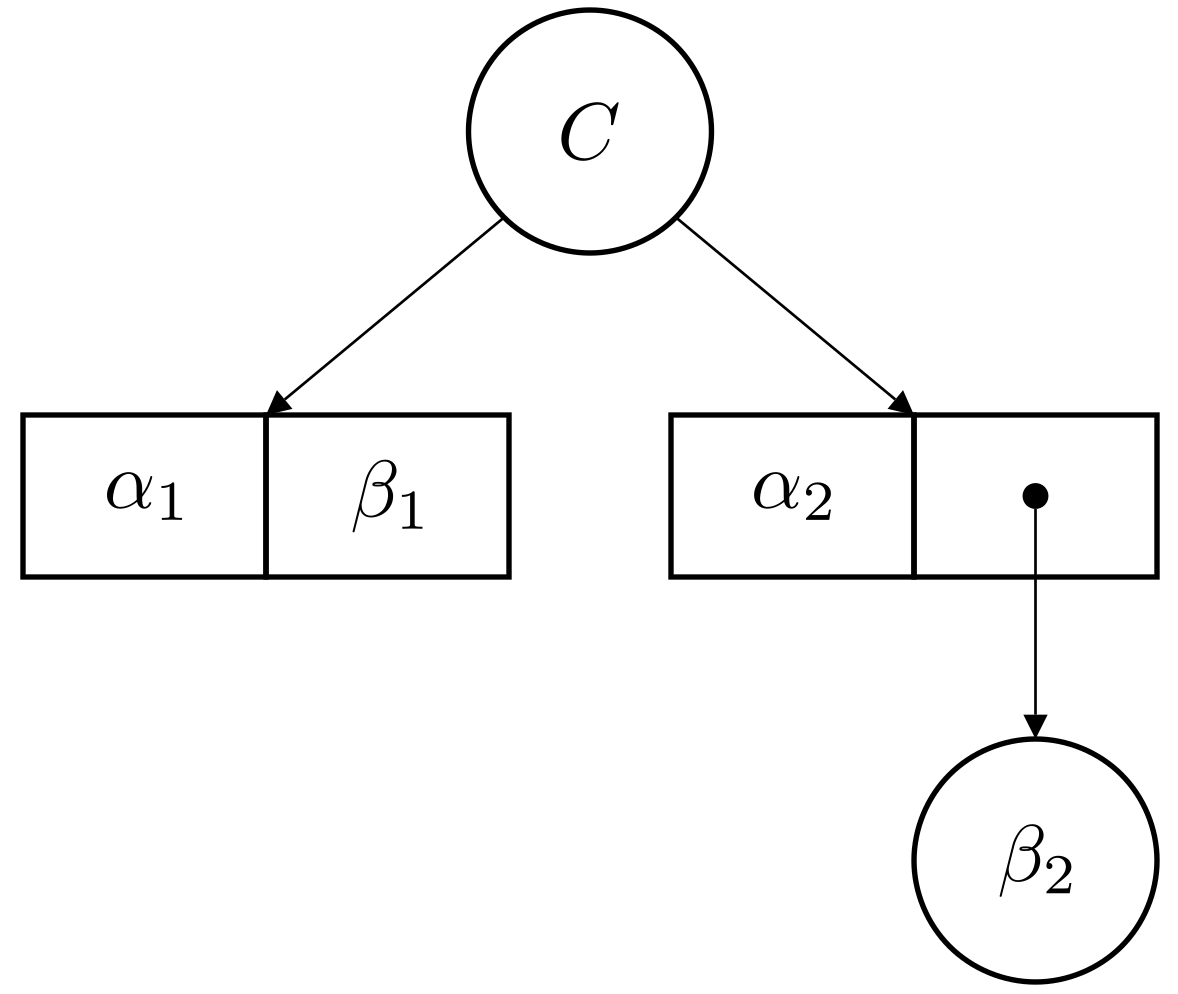}
    \caption{Illustration of a boolean circuit $C$ following the graphical representation drawn from \cite{Darwiche2011}.}
    \label{fig:node}
\end{figure}

\paragraph{Binary Linear Programming} Binary Linear Programming is traditionally associated to constrained optimization problems, but it can be used to define a propositional language $\mathtt{BLP}:=(\mathtt{T_{BLP}}, \mathscr{s}_{\mathtt{BLP}})$ naturally suited to express many real-world problems. This language is also known as conjunction of pseudo-boolean constraints (\texttt{PBC}) \cite{LeBerre2018}.

A theory $\Pi \in \mathtt{T_{BLP}}(\mathbf{Y})$, called a \textbf{linear program}, is a set of formulas called \textbf{linear constraints}. A linear constraint $r$ on variables $\mathbf{Y}$ is of the shape:
    \begin{equation*}
        r: b_1.Y_{i_1} + ... + b_m.Y_{i_m} \leq c
    \end{equation*}
with $Y_{i_1},..., Y_{i_m} \in \mathbf{Y}$ and $b_1,..., b_m, c \in \mathbb{Z}$.

We use relative weights on variables instead of positive weights on literals as in \cite{LeBerre2018}. The two notations are strictly equivalent but we prefer relative weights on variables because it allows to define our \texttt{Card} fragment more easily (see Section \ref{sec:card}).

For a more rigorous definition, relative integers in $\mathbb{Z}$ are represented in binary (\ie linear constraints are written with symbols in $\mathbf{Y} \cup \{0,1,.,+,-,\leq\}$), which means that the size of a linear constraint takes into account the size of the binary representation of its coefficients. Therefore, the size of a linear constraint $r:b_1.Y_{i_1} + ... + b_m.Y_{i_m} \leq c$ is $\ulcorner \log(c) \urcorner + \sum_i \ulcorner log(b_i) \urcorner$ and the size of a linear program is the sum of sizes of its linear constraints. Besides, the space of coefficients $\mathbb{Z}$ can be extended to $\mathbb{Q}$ without affecting either the succinctness of the language nor its expressivity. However, it cannot be extended to $\mathbb{R}$ because arbitrary irrational coefficients would require an infinite size to represent.

To lighten notations, a linear constraint is sometimes noted $r:\langle \mathbf{Z}, \mathbf{b} \rangle \leq c$ where $\mathbf{Z}:= (Y_{i_1},..., Y_{i_m})$ and $\mathbf{b}:= (b_1,..., b_m)$. We can also use the symbol $\geq$ to note $\langle \mathbf{Z}, \mathbf{b} \rangle \geq c$ instead of $\langle \mathbf{Z}, -\mathbf{b} \rangle \leq -c$ and the symbol $=$ to note $\langle \mathbf{Z}, \mathbf{b} \rangle = c$ in place of the two linear constraints $\langle \mathbf{Z}, \mathbf{b} \rangle \leq c$ and $\langle \mathbf{Z}, \mathbf{b} \rangle \geq c$.

\vspace{5mm}            
A state $\mathbf{y} \in \mathbb{B}^{\mathbf{Y}}$ \textbf{satisfies} a linear constraint $r: b_1.Y_{i_1} + ... + b_m.Y_{i_m} \leq c$ iff $b_1.y_{i_1} + ... + b_m.y_{i_m} \leq c$ in the usual arithmetical sense. A state $\mathbf{y} \in \mathbb{B}^{\mathbf{Y}}$ satisfies a linear program $\Pi \in \mathtt{T_{BLP}}(\mathbf{Y})$ iff it satisfies all linear constraints in $\Pi$.
\vspace{5mm}
\begin{example}
    Imagine a catalog of products $\mathbf{P}:=\{P_1, P_2, P_3, P_4\}$ with corresponding prices $\mathbf{p}:=\{10, 100, 20, 50\}$. A basket of products corresponds to a state on $\mathbf{P}$. An online website might want to suggest a basket of additional products to go with the order of a client. However, it noticed that large or expansive baskets are less likely to be picked. However, they would like to make sure that the suggested basket is not too cheap. Therefore, they defined a maximum size $3$ as well as maximum and minimum budgets $150$ and $30$ for the suggested baskets. Baskets that match those constraints are: $(P_1, P_2), (P_1, P_3), (P_1, P_4), (P_1, P_3, P_4), (P_2), (P_3, P_4)$. This set of baskets corresponds to a boolean function on $\mathbf{P}$, which can be represented by the following linear program:
        \begin{gather*}
        \Pi := \left\{
        \begin{array}{cc}
            P_1 + P_2 + P_3 + P_4 \leq 3 \\
            10 \times P_1 + 100 \times P_2 + 20 \times P_3 + 50 \times P_4 \leq 110 \\
            10 \times P_1 + 100 \times P_2 + 20 \times P_3 + 50 \times P_4 \geq 30
        \end{array}
        \right.
        \end{gather*}
\end{example}


\paragraph{Graph-based languages} Although they are not usually thought of as propositional languages, mapping variables to elements of a graph allow us to represent a boolean function as the set of substructures of the graph that verifies a specific property. For instance, we use this to introduce the \texttt{SPath} language, which maps variables to edges of a directed graph and represents the sets of edges that form a total simple path in the graph (more details in Section \ref{sec:paths}), or the \texttt{Match} language, which maps variables to edges of a undirected graph and represents the sets of edges that form a matching in the graph (more details in Section \ref{sec:match}).

A graph-based language is \textbf{edge-based} (resp. \textbf{vertex-based}) when a theory maps variables in $\mathbf{Y}$ to edges (resp. vertices) of a graph $G=(V, E)$: a theory is a couple $\kappa:=(G, \varsigma)$ where $G=(V, E)$ is a graph and $\varsigma: E \rightarrow \mathbf{Y}$ (resp. $\varsigma: V \rightarrow \mathbf{Y}$) is \textbf{bijective}. A graph-based language is \textbf{directed} (resp. \textbf{undirected}) when theories are composed of directed (resp. undirected) graphs. A directed graph-based language can also be \textbf{acyclic} if theories are composed of directed acyclic graphs. For instance, \texttt{SPath} and \texttt{Match} are two edge-based languages.

The size of a graph-based theory is taken as the size of the graph on which it is based. As opposed to most traditional propositional languages, graph-based languages are not \textbf{complete} (\ie they cannot represent any boolean function), but specialized for a specific type of knowledge. However, graphical languages are naturally \textbf{succinct} (\ie the size of a theory is polynomial in its number of variables) since the size of a graph is at most quadratic in its number of nodes or edges.


\subsection{Probabilistic reasoning} \label{sec:probs}
One challenge of neurosymbolic AI is to bridge the gap between the discrete nature of logic and the continuous nature of neural networks. Probabilistic reasoning can provide the interface between these two realms by allowing us to reason about uncertain facts. 

\paragraph{Distributions} A joint probability distribution on a set of \textbf{boolean variables} $\mathbf{Y}$ is an application $\mathcal{P}:\mathbb{B}^{\mathbf{Y}} \mapsto \mathbb{R}^+$ that maps each state $\mathbf{y}$ to a probability $\mathcal{P}(\mathbf{y})$, such that $\sum_{\mathbf{y} \in \mathbb{B}^{\mathbf{Y}}} \mathcal{P}(\mathbf{y})=1$. To define internal operations between distributions, like multiplication, we extend this definition to un-normalized distributions $\mathcal{E}:\mathbb{B}^{\mathbf{Y}} \mapsto \mathbb{R}^+$. The \textbf{partition function} $\mathsf{Z}:\mathcal{E} \mapsto \sum_{\mathbf{y} \in \mathbb{B}^{\mathbf{Y}}} \mathcal{E}(\mathbf{y})$ maps each distribution to its sum, and we note $\overline{\mathcal{E}} := \frac{\mathcal{E}}{\mathsf{Z}(\mathcal{E})}$ the normalized distribution (when $\mathsf{Z}(\mathcal{E}) \ne 0$). The \textbf{entropy function} $\mathsf{H}:\mathcal{P} \mapsto \sum_{\mathbf{y} \in \mathbb{B}^{\mathbf{Y}}} - \mathcal{P}(\mathbf{y}) \cdot \log(\mathcal{P}(\mathbf{y}))$ maps each probability distribution to its entropy. The \textbf{mode} of a distribution $\mathcal{E}$ is its most probable state, ie $\underset{\mathbf{y} \in \mathbb{B}^{\mathbf{Y}}}{\argmax}\mathcal{E}(\mathbf{y})$.

A standard hypothesis for joint distributions of boolean variables is \textbf{independence}. Assume a vector of probabilities $\mathbf{p}=(p_i)_{1 \leq i \leq k} \in [0, 1]^k$, one for each variable, the joint distribution of independent Bernoulli variables $\mathcal{B}(p_i)_{1 \leq i \leq k}$ is the distribution $\mathcal{P}(\cdot | \mathbf{p})$ such that:
    \begin{equation}
        \mathcal{P}(\cdot | \mathbf{p}):  \mathbf{y} \mapsto \prod_{\substack{1 \leq i\leq k \\ y_i=1}} p_i \times \prod_{\substack{1 \leq i\leq k \\ y_i=0}} (1-p_i)
    \end{equation}

    

\paragraph{Reasoning} Typically, when belief about random variables is expressed through a probability distribution and new information is collected in the form of evidence (\ie a partial assignment of the variables), we are interested in two things: computing the probability of such evidence and updating our beliefs using Bayes' rules by conditioning the distribution on the evidence. Probabilistic reasoning allows us to perform the same operations with logical knowledge in place of evidence.

Let's assume a probability distribution $\mathcal{P}$ on variables $\mathbf{Y}:= \{Y_j\}_{1\leq j \leq k}$ and a \textbf{satisfiable} theory $\kappa$ from a propositional language $\mathtt{L}:=(\mathtt{T}, \mathscr{s})$. Notice that the boolean function $\mathscr{s}(\kappa)$ represented by $\kappa$ is an un-normalized distribution on $\mathbf{Y}$.
\vspace{5mm}
\begin{definition}
    The \textbf{probability} of $\kappa$ under $\mathcal{P}$ is:
    \begin{equation}
        \mathcal{P}(\kappa) := \mathsf{Z}(\mathcal{P} \cdot \mathscr{s}(\kappa)) = \sum_{\mathbf{y} \in \mathbb{B}^{\mathbf{Y}}} \mathcal{P}(\mathbf{y}) \cdot \mathscr{s}(\kappa)(\mathbf{y})
    \end{equation}

    The distribution $\mathcal{P}$ \textbf{conditioned on} $\kappa$, noted $\mathcal{P}(\cdot | \kappa)$, is:
    \begin{equation}
        \mathcal{P}(\cdot | \kappa):= \overline{\mathcal{P} \cdot \mathscr{s}(\kappa)}
    \end{equation}
\end{definition}


For the remainder of the paper, we assume non-trivial (\ie $\forall i, p_i \notin \{0, 1\}$) and rational probabilities (\ie $\forall i, p_i \notin \{0, 1\}$ and $p_i \in \mathbb{Q}$). This implies that $\mathcal{P}(\cdot | \mathbf{p})$ is strictly positive (\ie takes a strictly positive value for each state). Since $\kappa$ is satisfiable, we can properly define:
    \begin{equation*}
        \mathcal{P}(\kappa | \mathbf{p}):=\mathsf{Z}(\mathcal{P}(\cdot | \mathbf{p}) \cdot \mathscr{s}(\kappa)) \qquad \qquad \mathcal{P}(\cdot | \mathbf{p}, \kappa):=\frac{\mathcal{P}(\cdot | \mathbf{p}) \cdot \mathscr{s}(\kappa)}{\mathcal{P}(\kappa| \mathbf{p})}
    \end{equation*}

Performing those operations or computing other quantities relative to conditioned distributions constitute probabilistic reasoning problems. We describe below several probabilistic reasoning problems, divided into three types: counting, optimization and enumeration problems. A list of acronyms used throughout the paper for reasoning problems is given on Table \ref{tab:pr_acro}. Solving these problems is at the core of many neurosymbolic techniques, as shown in Section \ref{sec:techniques}.

\begin{table}[h]
    \centering
    \begin{tabular}{r l}
        \hline
        \textbf{Acronyms} & \textbf{Problems} \\
        \hline
        \multicolumn{1}{l}{\textbf{Classification}} &  \\
        \texttt{MC} & Model Counting \\
        \texttt{PQE} & Probabilistic Query Evaluation \\
        \texttt{EQE} & Entropy Query Evaluation \\
        \\
        \multicolumn{1}{l}{\textbf{Optimization}} &  \\
        \texttt{MPE} & Most Probable Explanation \\
        \texttt{top-k} & Top-k States \\
        \\
        \multicolumn{1}{l}{\textbf{Enumeration}} &  \\
        \texttt{ThreshEnum} & Threshold Enumeration \\
        \hline
    \end{tabular}
    \caption{Acronyms used for reasoning problems.}
    \label{tab:pr_acro}
\end{table}

\paragraph{Counting} Computing $\mathcal{P}(\kappa | \mathbf{p})$ is a \textbf{counting} problem called \textbf{Probabilistic Query Evaluation} (\texttt{PQE}) (equivalent to the Weighted Model Counting (\texttt{WMC}) problem) \cite{Suciu2020}. Another counting problem called \textbf{Entropy Query Evaluation} (\texttt{EQE}) consists in computing the entropy $\mathsf{H}(\mathcal{P}(\cdot | \mathbf{p}, \kappa))$ of the conditioned distribution. Notice that the standard reasoning problem of \textbf{Model Counting} \texttt{MC}, \ie counting the number of satisfying states for a theory $\kappa$ (sometimes noted \#\texttt{SAT} when $\kappa$ is a \texttt{CNF}), can be reduced to both \texttt{PQE} and \texttt{EQE} by using identical probabilities. Notice that computing $\mathcal{P}(\mathbf{y} | \mathbf{p}, \kappa)$ for a satisfying state $\mathbf{y} \in \mathscr{s}(\kappa)$ is equivalent to solving \texttt{PQE} because $\mathcal{P}(\mathbf{y} | \mathbf{p})$ can be computed in polynomial time and:
\begin{equation*}
    \mathcal{P}(\mathbf{y} | \mathbf{p}, \kappa) = \frac{\mathcal{P}(\mathbf{y} | \mathbf{p})}{\mathcal{P}(\kappa | \mathbf{p})}
\end{equation*}

\paragraph{Optimization} Computing the mode of $\mathcal{P}(\cdot | \mathbf{p}, \kappa)$ is an \textbf{optimization} problem called \textbf{Most Probable Explanation} (\texttt{MPE}) \cite{Kwisthout2011}. The mode of $\mathcal{P}(\cdot | \mathbf{p}, \kappa)$ is also called the maximum a posteriori (\texttt{MAP}) prediction and \texttt{MPE} is sometimes referred as \texttt{MAP} inference. This task can be extended to compute the top-k states of $\mathcal{P}(\cdot | \mathbf{p}, \kappa)$ (\texttt{top-k}). The decision problem of satisfiability \texttt{SAT} (\ie deciding if a theory $\kappa$ is satisfied by a state or not) can be reduced to \texttt{MPE}. When $\kappa$ is in \texttt{CNF}, \texttt{MPE} on $\mathcal{P}(\cdot | \mathbf{p}, \kappa)$ can be seen as a partial weighted MaxSAT \cite{Li2021} (\texttt{pw-MaxSAT}) instance: $\kappa$ constitutes the \textit{hard} part of the instance and the \textit{soft} parts is composed of weighted positives literals $(Y_i, \sigma^{-1}(p_i))$ where $\sigma^{-1}$ is the inverse of the sigmoid function $\sigma : a \mapsto \frac{e^a}{1+e^a}$. The abstract view on optimization in propositional frameworks introduced in \cite{Lierler2024} allows to extend the parallel between \texttt{MPE} and \texttt{pw-MaxSAT} to arbitrary propositional languages.

\paragraph{Enumeration} Listing the set of satisfying states in decreasing order of probability is an \textbf{enumeration} problem called \textbf{Ranked Enumeration} (\texttt{RankedEnum}). A closely related enumeration problem, which we call \textbf{Threshold Enumeration} (\texttt{ThreshEnum}), consists in listing the set of satisfying states with a probability superior to a given threshold.


\subsection{Probabilistic neurosymbolic techniques} \label{sec:techniques}

In machine learning, the objective is usually to learn a functional relationship $f:\mathcal{X} \mapsto \mathcal{Y}$ between an \textbf{input domain} $\mathcal{X}$ and an \textbf{output domain} $\mathcal{Y}$ from data samples. Multi-label classification is a type of machine learning tasks where input samples are labeled with subsets of a finite set of classes $\mathbf{Y}$. Therefore, labels can be understood as states on the set of variables $\mathbf{Y}$. In this case, the output space of the task, \ie the set of all labels, is $\mathcal{Y} = \mathbb{B}^{\mathbf{Y}}$. In \textbf{informed} classification, prior knowledge (sometimes called background knowledge) specifies which states in the output domain are semantically \textbf{valid}, \ie to which states can input samples be mapped. The set of valid states constitute a boolean function $\mathscr{f} \in \mathbb{B}^{\mathbb{B}^{\mathbf{Y}}}$ on the set of variables $\mathbf{Y}$, which can be represented as a theory $\kappa \in \mathtt{T}(\mathbf{Y})$ of a propositional language $\mathtt{L}:=(\mathtt{T}, \mathscr{s})$ such that $\mathscr{s}(\kappa) = \mathscr{f}$. For instance, hierarchical and exclusion constraints are used in \cite{Deng2014}, propositional formulas in \texttt{CNF} in \cite{Xu2018}, boolean circuits in \cite{Ahmed2022spl}, ASP programs in \cite{Yang2020} and linear programs in \cite{niepert_implicit_2021}.

Unless mentioned otherwise, we assume that a neural classification system is composed of three modules: 
\begin{itemize}
    \item a parametric and differentiable \textbf{model}, which takes in an input sample $x \in \mathcal{X}$ and produces a vector of probability scores $\mathbf{p}_{\theta}(x)$ (one for each output variable).
    \item a non-parametric and differentiable \textbf{loss}, which takes in $\mathbf{p}_{\theta}(x)$ and a label $\mathbf{y} \in \mathcal{Y}$ and produces a positive scalar that is minimized through gradient descent during training.
    \item a non-parametric \textbf{inference} module, which transforms $\mathbf{p}_{\theta}(x)$ into a predicted state $\hat{\mathbf{y}} \in \mathcal{Y}$.
\end{itemize}



In the context of informed classification, a neurosymbolic technique is a method to systematically integrate prior knowledge in a neural-based classification system. Most papers in the field assume that the architecture of the neural model (\eg fully connected, convolutional, transformer-based, etc.) mainly depends on the modality of the input space (\eg images, texts, etc.) and develop model-agnostic \textbf{neurosymbolic techniques} that integrate prior knowledge during learning, inference or both, but leave the design of the architecture outside the reach of the technique. We also consider neurosymbolic techniques for conformal classification \cite{ledaguenel2024conformal}, which integrate prior knowledge in the non-conformity measure and when computing the confidence set.


Recently, probabilistic neurosymbolic techniques, a family of techniques that leverage probabilistic reasoning to \textit{inform} a neural classifier with prior knowledge (see Figure \ref{fig:techniques}), have gained significant traction in the literature. We give below a short inventory of the main probabilistic neurosymbolic techniques found in the literature. Table \ref{tab:techniques} summarizes on which probabilistic reasoning problems each technique is built.

\begin{figure}[h]
\centering
\includegraphics[width=0.6\linewidth]{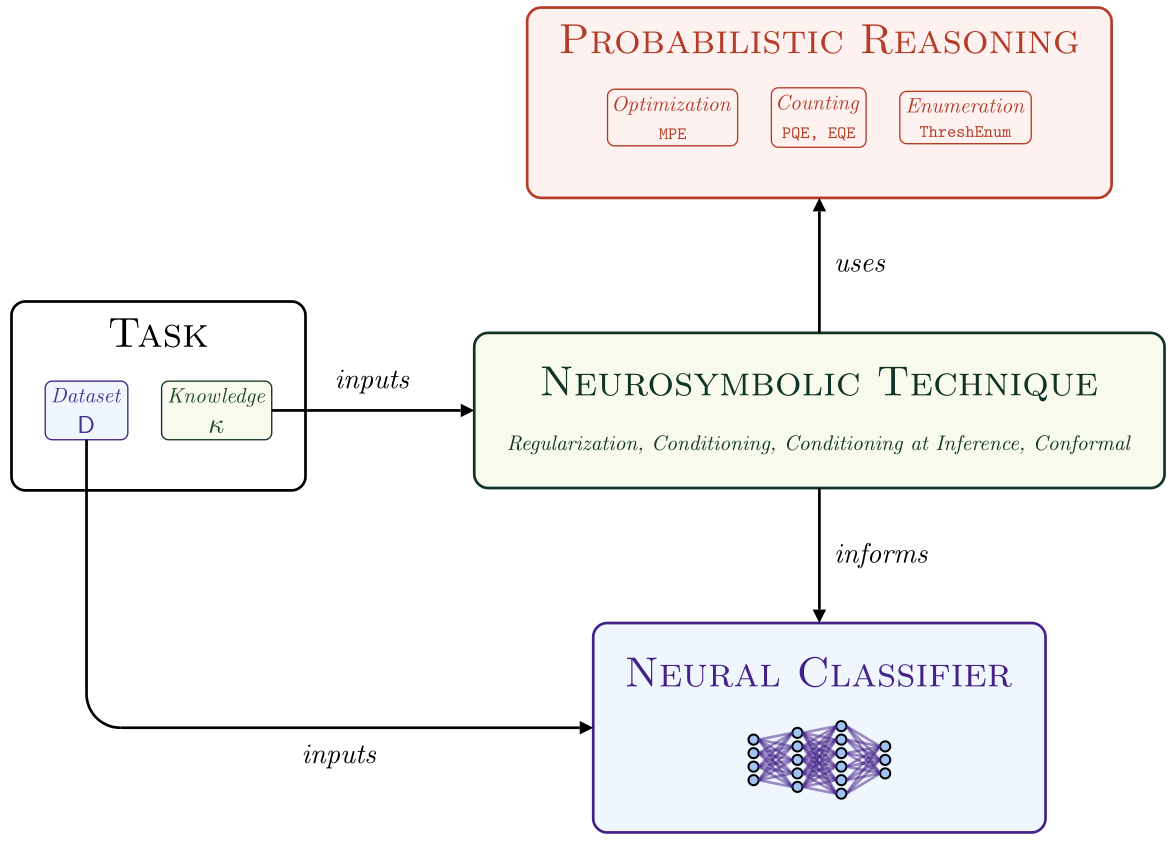}
\caption{A schematic illustration of probabilistic neurosymbolic techniques.}
\label{fig:techniques}
\end{figure}

\begin{table}[h]
    \centering
    \begin{tabular}{l c c c c}
        \hline
        \textbf{Techniques} & \texttt{MPE} & \texttt{ThreshEnum} & \texttt{PQE} & \texttt{EQE} \\
        \hline
        \multicolumn{4}{l}{\textbf{Classification}} \\
        Semantic loss \cite{Xu2018} & $\circ$ & $\circ$ & \cmark & $\circ$ \\
        Neurosymbolic entropy regularization \cite{Ahmed2022nesyer} & $\circ$ & $\circ$ & $\circ$ & \cmark \\
        Conditioning \cite{Deng2014,Ahmed2022spl,Yang2020} & \cmark & $\circ$ & \cmark & $\circ$ \\
        Conditioning at inference \cite{ledaguenel2024improving} & \cmark & $\circ$ & $\circ$ & $\circ$ \\
        \\
        \multicolumn{4}{l}{\textbf{Conformal classification}} \\
        Filtered conformal classification \cite{ledaguenel2024conformal} & $\circ$ & \cmark & $\circ$ & $\circ$ \\
        Conditioned conformal classification \cite{ledaguenel2024conformal} & $\circ$ & \cmark & \cmark & $\circ$ \\
        \hline
    \end{tabular}
    \caption{Summary of probabilistic neurosymbolic techniques: \cmark (resp. $\circ$) indicates that a technique relies (resp. does not rely) on solving a given probabilistic reasoning problem.}
    \label{tab:techniques}
\end{table}

\paragraph{Regularization} First introduced using fuzzy logics \cite{Diligenti2017a,Marra2019_tnorms,Badreddine2022}, regularization neurosymbolic techniques augment a standard multi-label loss (\eg binary cross-entropy) with an additional term that measures the consistency of logits with the prior knowledge, in order to steer the model towards \textit{valid} states. A probabilistic version, the semantic loss \cite{Xu2018}, uses the probability of prior knowledge $\mathcal{P}(\kappa|\mathbf{p}_{\theta}(x))$ (\texttt{PQE}) as a regularization term. Neurosymbolic entropy regularization \cite{Ahmed2022nesyer} is based on the principle of entropy regularization \cite{Grandvalet2004} but conditions the entropy term on prior knowledge $\kappa$. It relies on solving \texttt{EQE} during training. Regularization techniques are particularly adapted to the semi-supervised setting because the additional loss term is not based on any label.

\paragraph{Conditioning} Several neurosymbolic techniques build upon a probabilistic interpretation of neural networks: logits produced by the neural network are seen as parameters of a conditional probability distribution of the output given the input $\mathcal{P}(\cdot|\mathbf{p}_{\theta}(x))$, during training the loss computes the negative log-likelihood of the label under that distribution and during inference the most probable state given the learned distribution is predicted. A natural way to integrate prior knowledge $\kappa$ is to condition the distribution $\mathcal{P}(\cdot|\mathbf{p}_{\theta}(x))$ on $\kappa$. It was first introduced in \citep{Deng2014} for Hierarchical-Exclusion (HEX) graphs constraints. Semantic probabilistic layers \citep{Ahmed2022spl} can be used to implement semantic conditioning on tractable circuits. NeurASP \citep{Yang2020} defines semantic conditioning on a predicate extension of ASP programs. During training, computing the conditioned negative log-likelihood $- \log (\mathcal{P}(\mathbf{y} | \mathbf{p}_{\theta}(x), \kappa))$ relies on solving \texttt{PQE}. During inference, predicting the most probable output of $\mathcal{P}(\cdot | \mathbf{p}_{\theta}(x), \kappa)$ relies on solving \texttt{MPE}. An approached method for semantic conditioning on linear programs is proposed in \citep{niepert_implicit_2021}.

\paragraph{Conditioning at inference} Derived from semantic conditioning, semantic conditioning at inference \cite{ledaguenel2024improving} only applies conditioning during inference (\ie predicts the most probable state that satisfies prior knowledge) while retaining the standard negative log-likelihood loss during training. Thus, this technique only rely on solving \texttt{MPE} during inference.

\paragraph{Conformal classification techniques}
One of the great limitations of machine learning algorithms is their lack of guarantee regarding the validity of their predictions. Even when the algorithm is underpinned by a probabilistic interpretation, like often in deep learning, many experiments show that these probabilities are poorly calibrated. This results in a lack of trust in machine learning systems and is a major obstacle to their widespread adoption.

Conformal Prediction (CP) is a distribution-free and model agnostic framework that can solve this issue by transforming a machine learning algorithm from a point-wise predictor into a conformal predictor that outputs sets of predictions (called confidence sets) guaranteed to include the ground truth with a confidence level $1 - \alpha$, where $\alpha$ is a user-defined miscoverage rate. A good review on CP algorithms for multi-label classification tasks can be found in \cite{Wang2015}.

Probabilistic neurosymbolic techniques for informed conformal classification have recently been proposed in \cite{ledaguenel2024conformal}: instead of the \textbf{loss} and \textbf{inference} modules as mentioned above, prior knowledge is integrated in the \textbf{non-conformity measure} and when computing the \textbf{confidence set}. The first technique consists in filtering out invalid states (\ie states that do not satisfy prior knowledge $\kappa$) from the confidence set defined as $\{\mathbf{y} | \mathcal{P}(\mathbf{y}|\mathbf{p}_{\theta}(x)) \geq t\}$, where $t$ is a threshold determined by the calibration set. Therefore, this technique relies on solving \texttt{ThreshEnum}. A second technique further conditions the non-conformity measure on prior knowledge, therefore relying on both \texttt{PQE} and \texttt{ThreshEnum}.

\section{Scalability} \label{sec:scalability}
As mentioned in Section \ref{sec:techniques}, several neurosymbolic techniques rely on solving various probabilistic reasoning problems (\ie \texttt{MPE}, \texttt{ThreshEnum}, \texttt{PQE} or \texttt{EQE}). Hence, it is critical to understand the computational complexity of these problems to assess the scalability of such techniques to large classification tasks: ImageNet dataset \cite{Russakovsky2015} contains 1,000 classes, and up to 1,860 when adding parent classes in the WordNet hierarchy \cite{Miller1995}, Census Cars dataset \cite{Gebru2017} contains 2,675 of cars and iNaturalist dataset \cite{iNaturalist} contains 5,089 classes of natural species.

Most neurosymbolic techniques are defined using general purpose knowledge representation languages (\eg propositional logic, boolean circuits, linear programming, etc.). Although it makes sense to use a complete language to \textbf{define} a technique on the whole spectrum of knowledge, it also gives the false impression that the technique would \textbf{scale} properly on the whole spectrum of knowledge, which is never the case for two reasons: \textbf{tractability} and \textbf{succinctness}.

First, for most general purpose languages mentioned above, all probabilistic reasoning problems are \textbf{intractable}: there is no polynomial time algorithm (in the size of the input theory) that solves them (unless \texttt{P=NP}). Hopefully, there are complete fragments of these languages that provide \texttt{MPE}, \texttt{ThreshEnum}, \texttt{PQE} or \texttt{EQE}-tractability while remaining complete (see Section \ref{sec:kc}). Unfortunately, this is not enough, as these tractable languages tend to be less succinct.

Indeed, even though a complete language can represent any boolean function, it cannot do so \textbf{succinctly} (\ie with theories of size polynomial in the number of variables). This means that the size of the smallest theory to represent a boolean function is in general exponential in the number of variables. Since a reasoning algorithm must at least read the theory in its computations, it cannot scale on the whole spectrum of knowledge. This also means that all succinct languages are inherently specialized. Notice that graph-based languages are naturally succinct: the size of a graph is polynomially bounded by its number of edges or vertices, which are in bijection with the set of variables. Therefore, in this paper, we will often use graph-based languages to represent succinct fragments of complete languages.

These observations lead to the following definition of scalability.
\begin{definition}
    We say that a neurosymbolic technique is \textbf{scalable} on a given propositional language iff the language verifies two criteria:
\begin{itemize}
    \item \textbf{Succinctness}: theories in the language must be of polynomial size (in their number of variables).
    \item \textbf{Tractability}: the set of probabilistic reasoning problems on which the technique relies must be solvable in polynomial time in the size of the theories (for counting and optimization problems) or in the combined size of the theories and the output (for enumeration problems).
\end{itemize}
\end{definition}

\begin{remark}
    By definition, a neurosymbolic technique cannot be scalable on a complete language (which cannot be succinct), but only on specialized languages. Besides, we only look at time complexity for enumeration problems, but space complexity could become the main limiting factor for some algorithms.
\end{remark}



\section{Knowledge compilation} \label{sec:kc}
Knowledge compilation is the process of translating theories from a source language (\eg \texttt{CNF}) into a target language (\eg \texttt{d-DNNF}) that is tractable for a set of problems of interest to the user. We say that a source language $\mathtt{L}_s$ can be \textbf{compiled} into a target language $\mathtt{L}_t$ (noted $\mathtt{L}_s \to_c \mathtt{L}_t$) if any theory $\kappa_s \in \mathtt{T}_s$ has an equivalent theory $\kappa_t \in \mathtt{T}_s$ of polynomial size (\ie $|\kappa_t|=\mathcal{O}(p(|\kappa_s|))$ with $p$ a polynomial). Moreover, we say that $\mathtt{L}_s$ can be \textbf{efficiently compiled} into $\mathtt{L}_t$ (noted $\mathtt{L}_s \to_{ec} \mathtt{L}_t$) if there is a polynomial time algorithm that performs this translation. Notice that $\to_{c}$ and $\to_{ec}$ are both transitive relations and that $\mathtt{L}_s \to_{ec} \mathtt{L}_t$ implies $\mathtt{L}_s \to_c \mathtt{L}_t$. Efficient compilation from $\mathtt{L}_s$ to $\mathtt{L}_t$ is a key property for the study of computational complexity, as any problem tractable for $\mathtt{L}_t$ will also be tractable for $\mathtt{L}_s$.

A seminal work in the field was the compilation map introduced in \cite{Darwiche2002}. This map helps us understand the limits of several languages both in terms of succinctness (which languages can be compiled to which) and tractability (which reasoning problems can be performed in polynomial time for which languages).

In the context of probabilist reasoning, two fragments of boolean circuits are particularly relevant as target languages: \texttt{DNNF} and \texttt{d-DNNF}. \texttt{DNNF} is \texttt{MPE} and \texttt{ThreshEnum}-tractable \cite{Bourhis2022} and \texttt{d-DNNF} is additionally \texttt{PQE} and \texttt{EQE}-tractable \cite{Kiesel2023}. Therefore, if a language can be efficiently compiled into \texttt{d-DNNF} then it is \texttt{MPE}, \texttt{ThreshEnum}, \texttt{PQE} and \texttt{EQE}-tractable. Additionally, several other fragments of boolean circuits were identified as suitable target languages: Sentential Decision Diagrams (\texttt{SDD}) \cite{Darwiche2011} is a fragment of \texttt{d-DNNF} that offers polynomial negation, conjunction and disjunction. Besides, \cite{Darwiche2011} shows that a propositional formula $\kappa$ in conjunctive normal form with $k$ variables and a tree-width $\tau(\kappa)$ has an equivalent compressed and trimmed \texttt{SDD} of size $\mathcal{O}(k2^{\tau(\kappa)})$. Due to these properties, \texttt{SDD} has become a standard target logic for probabilistic neurosymbolic systems \citep{Xu2018,Ahmed2022spl}.

Knowledge compilation has both theoritical and practical interests. First, as mentioned above, it simplifies tractability proofs as several tractability results can be implied by a single efficient compilation property. From a practical standpoint, an algorithm for the source language can be obtained by composing the compilation algorithm with an algorithm for the target language. Besides, it enables to push as much of the computations during the \textit{offline} compilation phase to speed up the remaining computations needed in \textit{online} executing phase (see \textit{Compilation complexity} in Section \ref{sec:cc}). This is particularly interesting in the context of probabilistic neurosymbolic techniques, where prior knowledge is compiled once \textit{offline} in the required target language and is used \textit{online} many times (once for each learning step or test sample). Moreover, because algorithms for \texttt{MPE}, \texttt{PQE} and \texttt{EQE} based on \texttt{DNNF} and \texttt{d-DNNF} strictly follow the structure of the circuit during computations, they can be easily parallelized to leverage the power of GPUs. Finally, knowledge compilation offers a practical measure of complexity that goes beyond asymptotical complexity: for a given theory in a source language (\eg \texttt{CNF}), the size of the compiled theory determines the computing time of probabilistic reasoning problems for this specific theory, regardless of wether or not the source language was tractable.

Knowledge compilation has also some limitations, in particular in the context of probabilistic reasoning. First, the two main target languages identified so far for their tractability on probabilistic reasoning problems (\ie \texttt{DNNF} and \texttt{d-DNNF}) do not cover the full spectrum of tractability: we show this regarding \texttt{DNNF} and \texttt{MPE}-tractability with the case of matching constraints in Section \ref{sec:match}. Moreover, as most knowledge compilation algorithms developed so far directly compile into \texttt{d-DNNF} (or one of its fragments) \cite{Darwiche2004,Muise2012,Lagniez2017,Kiesel2023}, they cannot exploit the succinctness gap between \texttt{DNNF} and \texttt{d-DNNF}, and therefore neither the complexity gap between \texttt{MPE} and \texttt{PQE}. Finally, because algorithms follow the structure of the circuit during computations, this means that the computational graph cannot adapt to the particular probabilities used in the reasoning problem, as combinatorial solvers do to speed up computations. In practice, combinatorial solvers can therefore be faster than algorithms relying on knowledge compilation to boolean circuits.



\section{A complexity map} \label{sec:cmap}
In this section, we examine the tractability of several succinct languages. We use standard propositional languages defined in Section \ref{sec:kr} and the acronyms of Table \ref{tab:lang_acro} throughout the section. As mentioned earlier, a typical criteria to identify a tractable language is to show that it can compiled into a \texttt{CNF} of bounded tree-width. However, most types of prior knowledge commonly found in the neurosymbolic literature do not meet this criteria. Therefore, in this section, we analyze the tractability of succinct languages of unbounded tree-width.

Moreover, counting problems are known to be much harder in general than optimization problems \cite{Toda1991}. Therefore, it is natural to look for succinct languages which are \texttt{MPE}-tractable and for which \texttt{PQE} is still \#P-hard. These languages have great relevance in the context of probabilistic neurosymbolic techniques, as some techniques remain scalable on such languages (\eg semantic conditioning at inference) while others do not (\eg semantic conditioning and semantic regularization).

Our results regarding the tractability of succinct languages are displayed in the complexity map shown on Figure \ref{fig:cmap}. Then, Table \ref{tab:scalable} combine Figure \ref{fig:cmap} with Table \ref{tab:summary}, which summaries on which probabilistic reasoning problem each neurosymbolic technique relies, to determine which probabilistic neurosymbolic techniques are scalable depending on the type of tasks considered, characterized by their corresponding succinct representation languages.




\begin{figure}[h]
\centering
\includegraphics[width=\linewidth]{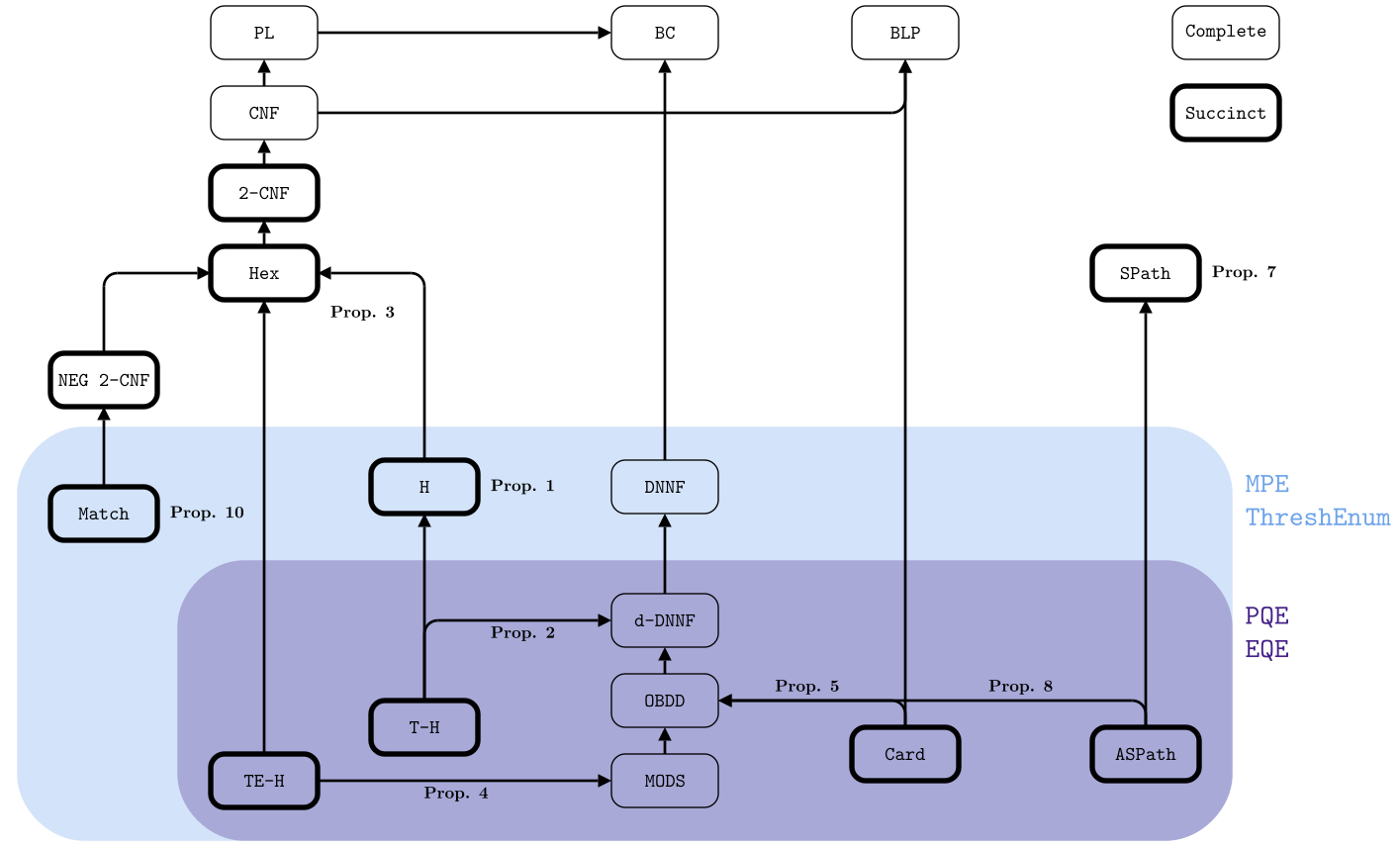}
\caption{A complexity map of probabilistic reasoning. An arrow $\mathtt{L_1} \to \mathtt{L_2}$ implies that $\mathtt{L_1}$ can be efficiently compiled to $\mathtt{L_2}$. Color regions indicate on which probabilistic reasoning problems a language is tractable: notice that the {\color{violet} tractability region of \texttt{PQE} and \texttt{EQE}} is included in the {\color{blue} tractability region of \texttt{MPE} and \texttt{ThreshEnum}}. Complete languages are represented with \textit{thin} frames and succinct languages with \textbf{thick} frames. When the tractability of a language $\mathtt{L_1}$ is proved through knowledge compilation $\mathtt{L_1} \to \mathtt{L_2}$, the corresponding proposition is referenced next to the arrow $\to$, otherwise the proposition is referenced next to the frame of the language $\mathtt{L_1}$.}
\label{fig:cmap}
\end{figure}

\begin{table}[h]
    \centering
    \begin{tabular}{l c c c c c c c c}
        \hline
        \textbf{Techniques} & \texttt{Hex} & \texttt{H} & \texttt{T-H} & \texttt{TE-H} & \texttt{Card} & \texttt{SP} & \texttt{ASP} & \texttt{Match} \\
        \hline
        \multicolumn{4}{l}{\textbf{Classification}} \\
        Semantic loss \cite{Xu2018} & \xmark & \xmark & \cmark & \cmark & \cmark & \xmark & \cmark & \xmark \\
        Neurosymbolic entropy regularization \cite{Ahmed2022nesyer} & \xmark & \xmark & \cmark & \cmark & \cmark & \xmark & \cmark & \xmark \\
        Conditioning \cite{Deng2014,Ahmed2022spl,Yang2020} & \xmark & \xmark & \cmark & \cmark & \cmark & \xmark & \cmark & \xmark \\
        Conditioning at inference \cite{ledaguenel2024improving} & \xmark & \cmark & \cmark & \cmark & \cmark & \xmark & \cmark & \cmark \\
        \\
        \multicolumn{4}{l}{\textbf{Conformal classification}} \\
        Filtered conformal classification \cite{ledaguenel2024conformal} & \xmark & \cmark & \cmark & \cmark & \cmark & \xmark & \cmark & \cmark \\
        Conditioned conformal classification \cite{ledaguenel2024conformal} & \xmark & \xmark & \cmark & \cmark & \cmark & \xmark & \cmark & \xmark \\
        \hline
    \end{tabular}
    \caption{Summary of probabilistic neurosymbolic techniques: \cmark (resp. \xmark) indicates that a technique is scalable (resp. does not scale) for a given succinct representation language. \texttt{SP} and \texttt{ASP} respectively stand for \texttt{SPath} and \texttt{ASPath}.}
    \label{tab:scalable}
\end{table}

\subsection{Hierarchical constraints} \label{sec:hex}
Hierarchical constraints are ubiquitous in artificial intelligence, because we are used to organize concepts in taxonomies, to the point where hierarchical classification (\ie a classification task where the set of output classes are organized in a hierarchy) is a field of research on its own.

These constraints are usually represented by a directed acyclic graph $G=(V, E_h)$ where the vertices in $V$ correspond to variables in $\mathbf{Y}$ and the edges $E_h$ express subsumption relations between those variables (\eg a dog is an animal). Therefore, such constraints naturally correspond to a directed acyclic and vertex-based language $\mathtt{H}:=(\mathtt{T_H}, \mathscr{s}_{\mathtt{H}})$ where a theory $(G:=(V, E), \varsigma) \in \mathtt{T_H}(\mathbf{Y})$ accepts a state $\mathbf{y} \in \mathbb{B}^{\mathbf{Y}}$ (\ie $\mathbf{y} \in \mathscr{s}_{\mathtt{H}}((G, \varsigma))$) if the vertices selected in $V$ respect the hierarchical constraints expressed in $G$: if a vertex $v \in V$ belongs to an accepted state $\mathbf{y}$ (\ie $\varsigma(v) \in \mathbf{y}$), then all its parents in $G$ also belong to $\mathbf{y}$. In other terms, the states accepted by a theory $(G:=(V, E), \varsigma)$ correspond exactly to closures in $G$.


\vspace{5mm}
\begin{proposition}
    \texttt{H} is \texttt{MPE} and \texttt{ThreshEnum}-tractable and \texttt{PQE} and \texttt{EQE}-intractable.
\end{proposition}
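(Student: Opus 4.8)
The plan is to handle the two tractability claims and the two intractability claims separately, using throughout the fact established just above that the states accepted by a theory $(G,\varsigma) \in \mathtt{H}$ are exactly the closures of the DAG $G$.

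\textbf{\texttt{MPE}.} The mode of $\mathcal{P}(\cdot \mid \mathbf{p}, \kappa)$ is the closure $U \subseteq V$ maximizing $\prod_{i : y_i=1} p_i \prod_{i : y_i=0}(1-p_i)$, since renormalization only multiplies by a positive constant. Taking logarithms and dropping the constant $\sum_i \log(1-p_i)$, this is equivalent to maximizing $\sum_{i \in U} w_i$ with $w_i := \log\frac{p_i}{1-p_i}$ over all closures $U$, which is precisely the maximum-weight closure problem, solvable in polynomial time by reduction to minimum cut / maximum flow. Since the $p_i$ are rational and I only ever compare sums of the $w_i$ (equivalently, products of the rationals $\frac{p_i}{1-p_i}$), a strongly polynomial max-flow routine carries out the needed arithmetic exactly, so \texttt{MPE} is tractable. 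I would also record that the same reduction solves \texttt{MPE} under any partial assignment: forcing a variable in (resp. out) amounts to setting its weight to $+\infty$ (resp. $-\infty$), which remains a max-weight closure instance, with the closure property propagating the forcing to ancestors (resp. descendants) automatically.

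\textbf{\texttt{ThreshEnum}.} Building on this constrained \texttt{MPE} oracle, I would enumerate the satisfying states in decreasing order of probability with polynomial delay via the standard Lawler--Murty partitioning scheme: maintain a priority queue of subproblems, each fixing a partial assignment; repeatedly pop the best, output its optimum, and branch by fixing the next free variable. Each queue operation costs one constrained-\texttt{MPE} call and is therefore polynomial. To solve \texttt{ThreshEnum} for a threshold $t$, I run this enumeration and halt as soon as the next extracted optimum falls below $t$; the total running time is then polynomial in the combined size of the input and the output, as required.

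\textbf{\texttt{PQE} and \texttt{EQE}.} For intractability I would show that \texttt{MC} for \texttt{H} --- counting the closures of a DAG --- is \#P-hard, and then invoke the reductions from \texttt{MC} to \texttt{PQE} and to \texttt{EQE} noted earlier: taking uniform $p_i = \tfrac12$ gives $\mathcal{P}(\kappa \mid \mathbf{p}) = M/2^{|V|}$ and $\mathsf{H}(\mathcal{P}(\cdot \mid \mathbf{p}, \kappa)) = \log M$, where $M$ is the number of closures, so a polynomial algorithm for either problem would recover $M$. The closures of $G$ are exactly the order ideals (down-sets) of the poset it induces, and counting order ideals --- equivalently antichains --- of a poset is a classical \#P-complete problem, which transfers \#P-hardness to \texttt{MC} and hence to both \texttt{PQE} and \texttt{EQE}. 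The hard part will be this last step: giving a clean, self-contained \#P-hardness reduction for counting closures (or citing the appropriate poset-counting result), since everything else follows from the standard maximum-closure and ranked-enumeration machinery.
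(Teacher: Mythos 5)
Your proof is correct, and the \texttt{MPE} part coincides with the paper's: both reduce to maximum-weight closure solved via minimum cut \cite{Picard1976} (you add a welcome remark on exact arithmetic with the log-odds weights that the paper glosses over). The two other parts take genuinely different routes. For \texttt{ThreshEnum}, the paper stays inside the cut formulation and invokes an algorithm for the $K$ best cuts \cite{Hamacher1984}, whereas you build a ranked enumerator from scratch via Lawler--Murty partitioning over a constrained-\texttt{MPE} oracle; your route is more self-contained and generalizes to any language with a partial-assignment-closed \texttt{MPE} oracle, while the paper's is a one-line citation. (Your forcing-via-$\pm\infty$ trick is fine, but note you must also discard subproblems whose forced assignment admits no closure, e.g.\ a vertex forced in whose ancestor is forced out; the returned optimum reveals this.) For the hardness, you reduce from counting order ideals (equivalently antichains) of a poset, while the paper reduces from \texttt{MC} on positive partitioned \texttt{2-CNF}, observing that negating the variables on one side of the bipartition turns $\bigwedge (Y_u \lor Y_v)$ into the implication clauses $\bigwedge (Y_u \lor \neg Y_v)$ of a bipartite \texttt{H} theory without changing the model count. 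These are two faces of the same result --- both are established in Provan and Ball \cite{Provan1983} --- so the ``hard part'' you flag is indeed just a citation away; the paper's \texttt{PP2CNF} formulation has the advantage of making the reduction to \texttt{H} a one-line variable renaming rather than an appeal to the ideal--closure correspondence. Your uniform-probability reductions of \texttt{MC} to \texttt{PQE} ($M/2^{|V|}$) and to \texttt{EQE} ($\log M$) match what the paper relies on.
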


\begin{proof}
    \texttt{MPE} on \texttt{H} is equivalent to finding the maximum weighted closure in a weighted directed graph, which is can be done in time polynomial by reduction to a maximum flow problem (or equivalently minimum cut problem) \cite{Picard1976}. Likewise, this reduction can be used to solve \texttt{ThreshEnum} by using an algorithm for finding the $K$ best cuts (\ie the $k$ cuts of minimum weight) \cite{Hamacher1984}. Therefore \texttt{H} is \texttt{MPE} and \texttt{ThreshEnum}-tractable.

    Besides, \texttt{MC} on positive partitioned 2-\texttt{CNF} (\texttt{PP2CNF}) is known to be \#P-hard \cite{Provan1983}, where a formula $\kappa$ is \texttt{PP2CNF} if there is a bipartite graph $G=(U \cup V, E)$ such that $\kappa = \bigvee_{(u, v) \in E} (Y_u \lor Y_v)$. Notice that the number of models of $\kappa$ is equal to the number of models of $\kappa^*:= \bigvee_{(u, v) \in E} (Y_u \lor \neg Y_v)$. Hence, \texttt{MC} on \texttt{PP2CNF} can be reduced to \texttt{MC} on \texttt{H}. This implies that \texttt{H} is \texttt{MC}-intractable and therefore is \texttt{PQE} and \texttt{EQE}-intractable.
\end{proof}

If we impose that the directed acyclic graph $G$ is a tree, we obtain a fragment of the \texttt{H} that we will call \texttt{T-H}.
\vspace{5mm}
\begin{proposition}
    \texttt{T-H} is \texttt{MPE}, \texttt{ThreshEnum}, \texttt{PQE} and \texttt{EQE}-tractable.
\end{proposition}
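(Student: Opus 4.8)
The plan is to establish all four tractability results at once through efficient compilation into \texttt{d-DNNF}. Recall from Section~\ref{sec:kc} that \texttt{d-DNNF} is \texttt{MPE}, \texttt{ThreshEnum}, \texttt{PQE} and \texttt{EQE}-tractable; hence it suffices to show that \texttt{T-H} $\to_{ec}$ \texttt{d-DNNF}, i.e.\ to build in polynomial time an equivalent \texttt{d-DNNF} circuit from a tree theory $(G, \varsigma)$. The crucial feature I intend to exploit is that, in a tree, every non-root vertex has a \emph{unique} parent, so the subtrees rooted at the children of a vertex are pairwise variable-disjoint. This is exactly what will deliver decomposability, and it is the property that fails for the general \texttt{H} language.

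I would then give a recursive construction driven by the tree structure. Rooting $G$ at its source $r$, write $T_v$ for the subtree rooted at $v$ and $ch(v)$ for the children of $v$ in $G$, and for each vertex $v$ define two sub-circuits over the variables of $T_v$: a circuit $C_v^{1}$ encoding the valid configurations of $T_v$ in which $v$ is selected, and a circuit $C_v^{0}$ encoding those in which $v$ is not selected. Since selecting a vertex forces its parent to be selected, whenever $v$ is excluded the entire subtree must be excluded, so I set $C_v^{0} := \neg Y_v \land \bigwedge_{c \in ch(v)} C_c^{0}$ (with $C_v^{0} = \neg Y_v$ at a leaf), which forces every variable of $T_v$ to $0$. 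When $v$ is selected, each child may independently take either of its two states, so I set $C_v^{1} := Y_v \land \bigwedge_{c \in ch(v)} \big( C_c^{1} \lor C_c^{0} \big)$. The circuit for the whole theory is $C := C_r^{1} \lor C_r^{0}$, the root being selected or not.

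The verification is the routine part. The circuit is in \texttt{NNF} since negations sit directly on variables. Decomposability of each $\land$-node follows because the child subtrees $T_c$ are variable-disjoint and $Y_v$ occurs in none of them. Determinism of each $\lor$-node $C_c^{1} \lor C_c^{0}$ (and of the root disjunction) follows because the two disjuncts disagree on the value of $Y_c$ (resp.\ $Y_r$) and hence share no satisfying state. A correctness argument by induction on the tree confirms that $\mathscr{s}(C)$ is exactly the set of closures of $G$, i.e.\ the semantics of the theory. For the size bound I would note that the nodes $C_v^{0}$ and $C_v^{1}$ are shared across the construction, so the circuit has $O(|V|)$ nodes, each of out-degree $|ch(v)|+1$; summing over vertices gives $O(|V|)$ wires and a linear-time construction. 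Since the theory size is $\Theta(|V|)$, this establishes \texttt{T-H} $\to_{ec}$ \texttt{d-DNNF} and therefore the full claim.

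I expect the only genuine obstacle to be conceptual rather than computational: getting the ``$v$ not selected'' branch right, recognizing that it must force the whole subtree off, which is precisely what makes the disjunctions deterministic and the circuit a genuine \texttt{d-DNNF}. Everything else is a direct structural induction, and I do not anticipate needing any nontrivial estimate.
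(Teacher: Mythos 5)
Your proof is correct, but it takes a genuinely different route from the paper's. The paper does not construct a circuit at all: it observes that a \texttt{T-H} theory $(G,\varsigma)$ compiles in linear time to the \texttt{2-CNF} $\bigwedge_{(u,v)\in E}\varsigma(u)\lor\neg\varsigma(v)$, whose primal graph is $G$ itself and hence has tree-width $1$, and then invokes the general fact (recalled in Section~\ref{sec:kc} via the \texttt{SDD} bound $\mathcal{O}(k2^{\tau(\kappa)})$) that bounded-tree-width \texttt{CNF} compiles efficiently to \texttt{d-DNNF}. You instead build the \texttt{d-DNNF} explicitly by structural recursion on the tree, with the pair of circuits $C_v^{1}$, $C_v^{0}$ and the key observation that deselecting $v$ forces its whole subtree off; your verification of decomposability (disjoint child subtrees), determinism (the two disjuncts disagree on $Y_v$), and the $O(|V|)$ size bound are all sound. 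The paper's argument is shorter and reuses existing machinery, and it places \texttt{T-H} inside the well-studied bounded-tree-width region; your argument is self-contained, produces an explicit linear-size circuit (in fact an \texttt{OBDD}-like structure analogous to the paper's constructions for \texttt{Card} and \texttt{ASPath}), and makes transparent exactly which structural property of trees -- the unique parent, hence variable-disjoint child subtrees -- delivers decomposability and fails for general \texttt{H}. Either route establishes \texttt{T-H} $\to_{ec}$ \texttt{d-DNNF} and thus all four tractability claims.
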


\begin{proof}
    First, we show that \texttt{T-H} can be efficiently compiled to \texttt{2-CNF} of tree-width 1. A theory $(G:=(V, E), \varsigma) \in \mathtt{T_{T-H}}$ can be efficiently compiled to a \texttt{2-CNF} formula:
    \begin{equation} \label{eq:hex2cnf}
        \kappa_{(G, \varsigma)} := \bigg(\bigwedge_{(u, v) \in E} \varsigma(u) \lor \neg \varsigma(v) \bigg)
    \end{equation}
    Notice that the primal graph of $\kappa_{(G, \varsigma)}$ is $G$, therefore when the theory belongs to \texttt{T-H}, $G$ is a tree and its tree-width is 1.
    
    Since \texttt{T-H} can be efficiently compiled to 2-\texttt{CNF} of tree-width 1, it can also be compiled to \texttt{d-DNNF}. Therefore, \texttt{T-H} is \texttt{MPE}, \texttt{ThreshEnum}, \texttt{PQE} and \texttt{EQE}-tractable.
\end{proof}

This language can be enriched with exclusion edges that represent mutual exclusion between variables (\eg a state cannot be both a dog and a cat), like in HEX-graphs \cite{Deng2014}. In such language $\mathtt{Hex}:=(\mathtt{T}_{\mathtt{Hex}}, \mathscr{s}_{\mathtt{Hex}})$, a theory $(H=(V, E_h, E_e), \varsigma)$ is composed of a directed acyclic graph $(V, E_h)$ and an undirected graph $(V, E_e)$ sharing the same set of vertices $V$ which is in bijection with $\mathbf{Y}$ through $\varsigma$. Such a theory accepts a state $\mathbf{y} \in \mathbb{B}^{\mathbf{Y}}$ if both hierarchical constraints are satisfied as mentioned above and no two exclusive variables belong to $\mathbf{y}$: for two vertices such that $(u, v) \in E_e$, if one belong to $\mathbf{y}$ then the other does not. The \texttt{H} language can be seen as the fragment of the \texttt{Hex} language where the set of exclusion edges is empty.

\vspace{5mm}
\begin{remark}
    A \texttt{Hex} theory $(H=(V, E_h, E_e), \varsigma)$ can be efficiently compiled into a 2-\texttt{Horn} formula (a \texttt{CNF} formula where every clause contains 2 literals, with at most one positive literal):
    \begin{equation} \label{eq:hex2cnf}
        \kappa_H := \bigg(\bigwedge_{(u, v) \in E_h} \varsigma(u) \lor \neg \varsigma(v) \bigg) \land \bigg(\bigwedge_{(u, v) \in E_e} \neg \varsigma(u) \lor \neg \varsigma(v) \bigg)
    \end{equation}
\end{remark}
\vspace{5mm}
\begin{proposition}
    \texttt{Hex} is \texttt{MPE}, \texttt{ThreshEnum}, \texttt{PQE} and \texttt{EQE}-intractable.
\end{proposition}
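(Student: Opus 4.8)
The plan is to prove the four intractability claims in two groups, treating \texttt{PQE}/\texttt{EQE} and \texttt{MPE}/\texttt{ThreshEnum} separately. For \texttt{PQE} and \texttt{EQE} I would simply invoke a fragment argument: setting the exclusion graph to be empty ($E_e = \emptyset$) shows that \texttt{H} is a fragment of \texttt{Hex}, so every \texttt{H} theory is also a \texttt{Hex} theory. Any polynomial-time algorithm solving \texttt{PQE} (resp. \texttt{EQE}) on \texttt{Hex} would in particular solve it on \texttt{H}, so the \texttt{PQE}- and \texttt{EQE}-intractability of \texttt{H} established in the earlier proposition transfers immediately to \texttt{Hex}.

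The substance lies in \texttt{MPE} and \texttt{ThreshEnum}, which I would derive from a single reduction from Maximum Independent Set. Given an undirected graph $G=(V,E)$, I build the \texttt{Hex} theory with empty hierarchy $E_h = \emptyset$ and exclusion graph $(V,E)$, so that the valid states are exactly the independent sets of $G$. Choosing uniform probabilities $p_i = 2/3$ for every variable, the probability of a valid state $S$ is $\mathcal{P}(S \mid \mathbf{p}) = (1/3)^{|V|} \, 2^{|S|}$, which is strictly increasing in $|S|$. Hence the mode of the conditioned distribution is precisely a maximum-cardinality independent set, so solving \texttt{MPE} solves Maximum Independent Set. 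Since the latter is NP-hard, \texttt{MPE} on \texttt{Hex} is intractable unless $\texttt{P}=\texttt{NP}$. Note this already holds for the sub-fragment with no hierarchical edges, so it is the exclusion edges alone that destroy the \texttt{MPE}-tractability enjoyed by \texttt{H}.

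For \texttt{ThreshEnum} I would reuse the same construction to show that the associated nonemptiness problem is NP-hard: with threshold $t = (1/3)^{|V|} \, 2^{k}$, the set of valid states of probability at least $t$ is nonempty iff $G$ has an independent set of size at least $k$, which is the NP-complete decision version of Maximum Independent Set. It then remains to bridge this to the output-sensitive notion of tractability. Here I would use the standard observation that if a hypothetical enumeration algorithm runs in total time $q(n+m)$, with $n$ the input size and $m$ the number of enumerated states, then evaluating the bound at $m=0$ gives a time $q(n)$ within which the algorithm must halt whenever no state exceeds the threshold; running it for $q(n)$ steps therefore decides nonemptiness (halting with empty output means ``no'', while producing any state or failing to halt within $q(n)$ steps means ``yes''). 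This would decide an NP-hard problem in polynomial time, contradicting $\texttt{P} \ne \texttt{NP}$.

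The main obstacle is precisely this \texttt{ThreshEnum} step: unlike the counting and optimization cases, there is no direct hardness-of-evaluation result to invoke, and the delicate point is that output-polynomial time does not guarantee polynomial delay, so the first enumerated state could a priori appear only after superpolynomial work. The argument succeeds because the $m=0$ instance of the time bound supplies a polynomial halting time that isolates the empty case. Everything else — the fragment inclusion for \texttt{PQE}/\texttt{EQE} and the monotone weighting that makes the mode coincide with a maximum independent set — is routine.
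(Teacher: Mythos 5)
Your proof is correct, but it reaches the result by a partly different route than the paper. The paper handles all four problems through a single chain: minimum vertex cover (resp.\ \texttt{MC} on \texttt{monotone 2-CNF}, which is \#P-hard by Valiant) establishes hardness of \texttt{monotone 2-CNF}, which is then rewritten as \texttt{negative 2-CNF} by complementing variables and encoded into \texttt{Hex} via exclusion edges. Your \texttt{MPE} reduction from Maximum Independent Set is essentially the complement of that vertex-cover reduction (independent sets are exactly the satisfying states of the negative 2-CNF induced by the exclusion graph), so there the two arguments coincide up to phrasing. Where you genuinely diverge is on \texttt{PQE}/\texttt{EQE}: you inherit hardness from the hierarchical-only fragment \texttt{H} (whose hardness rests on \texttt{PP2CNF} counting), i.e.\ from the hierarchical edges, whereas the paper derives it from the exclusion edges; both are valid, and yours is the shorter argument given that the \texttt{H} proposition precedes this one. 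Your treatment of \texttt{ThreshEnum} is also more careful than the paper's one-line ``by reduction from \texttt{MPE}'': the nonemptiness argument, with the observation that an output-polynomial time bound evaluated at zero output size yields a polynomial halting time that decides emptiness of the answer set, is exactly the missing bridge between NP-hardness of the threshold decision problem and output-sensitive intractability, and it is the right way to make the paper's terse claim rigorous.
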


\begin{proof}
  We know that \texttt{monotone 2-CNF} is \texttt{MPE}-intractable by reduction from the minimum vertex cover which is NP-hard \cite{Karp1972}. Therefore, it is also \texttt{ThreshEnum}-intractable by reduction from \texttt{MPE}. Likewise, we know that \texttt{MC} is intractable for monotone 2-\texttt{CNF} \cite{valiant1979} and can be reduced to \texttt{PQE} and \texttt{EQE}.
  
  Besides \texttt{MPE}, \texttt{ThreshEnum}, \texttt{PQE} and \texttt{EQE} on monotone 2-\texttt{CNF} can be reduced to their equivalent on negative 2-\texttt{CNF} (the fragment of \texttt{CNF} where clauses contain 2 negative literals), which can be efficiently compiled into a \texttt{Hex} theory using exclusion edges.
\end{proof}

Another interesting fragment of the \texttt{Hex} language, which we call \texttt{Exclusive-Hierarchical}, is composed of theories where a pair of variables is mutually exclusive (\ie there is an exclusion edge between the two variables) iff they have no common descendants (\ie no vertex can be reached from both variables following hierarchical edges). Notice that the exclusion edges are fully determined by the hierarchical edges. Hence, \texttt{Exclusive-Hierarchical} can be represented using the same syntax as \texttt{H} with a different semantic. We will use this representation in the rest of the section. The language \texttt{TreeExclusive-Hierarchical} is composed of \texttt{Exclusive-Hierarchical} theories $(G:=(V, E), \varsigma)$ where $G$ is a tree.
\vspace{5mm}
\begin{proposition}
    \texttt{TreeExclusive-Hierarchical} is \texttt{MPE}, \texttt{ThreshEnum}, \texttt{PQE} and \texttt{EQE}-tractable.
\end{proposition}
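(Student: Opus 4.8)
The plan is to establish that \texttt{TE-H} can be efficiently compiled into \texttt{d-DNNF}; since \texttt{d-DNNF} is \texttt{MPE}, \texttt{ThreshEnum}, \texttt{PQE} and \texttt{EQE}-tractable (Section~\ref{sec:kc}), this yields all four tractability claims at once. Note that the route used for \texttt{T-H}, namely compilation to a 2-\texttt{CNF} of bounded tree-width, is unavailable here: the assumed exclusion edges of an \texttt{E-H} theory join \emph{every} pair of incomparable vertices, so the induced primal graph has unbounded tree-width in general. Instead, I would exploit the very rigid shape of the accepted states.

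First I would prove the key structural lemma characterizing the satisfying states. In a tree, two distinct vertices $u,v$ share a common descendant (counting each vertex as a descendant of itself) if and only if one is an ancestor of the other, since their descendant-subtrees overlap exactly when $u,v$ are comparable. Hence, under the \texttt{E-H} semantics, $u$ and $v$ are mutually exclusive precisely when they are incomparable, so an accepted state may never contain two incomparable vertices. Combined with the hierarchical (ancestor-closed) constraint, this forces the selected set to be an ancestor-closed chain, \ie either the empty set or the set of all ancestors of some vertex together with that vertex itself. Thus a \texttt{TE-H} theory on $(G:=(V,E),\varsigma)$ accepts exactly the empty state and the $|V|$ root-to-vertex paths, for a total of $|V|+1$ satisfying states.

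Second, I would turn this characterization into an explicit \texttt{d-DNNF}. Writing $V_{T_v}$ for the vertex set of the subtree rooted at $v$, I would build two families of nodes by induction from the leaves up: an empty-subtree node $E_v := \neg\varsigma(v) \land \bigwedge_{c} E_c$ forcing every variable of $V_{T_v}$ to $0$, and a path node $P_v := \varsigma(v) \land \big( \bigwedge_{c} E_c \ \lor\ \bigvee_i ( P_{c_i} \land \bigwedge_{j \ne i} E_{c_j} ) \big)$ representing the selections of $V_{T_v}$ that form a downward path starting at $v$, where $c_1,\dots,c_k$ are the children of $v$. The final circuit for a theory rooted at $r$ is $C := E_r \lor P_r$. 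Every $\land$-node is decomposable because its operands range over disjoint subtrees (and $\varsigma(v)$ is a fresh variable). Every $\lor$-node is deterministic because its branches disagree on which child of $v$ is selected: the stop branch sets every child to $0$, whereas the $i$-th branch sets $c_i$ to $1$ via $P_{c_i}$, and similarly $E_r$ and $P_r$ disagree on $\varsigma(r)$. The construction is smooth, has size $\mathcal{O}(|V|^2)$ and is produced in polynomial time, so \texttt{TE-H} is efficiently compiled to \texttt{d-DNNF}.

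The main obstacle I anticipate is the structural lemma rather than the compilation: one must reason correctly about the \texttt{E-H} exclusion semantics on a tree (the equivalence between having a common descendant and being comparable) and then verify that the induced disjunctions are genuinely deterministic. As a self-contained alternative that avoids circuits, one could instead observe directly that, since there are only $|V|+1$ satisfying states, each of probability computable in $\mathcal{O}(|V|)$ time, \texttt{PQE} and \texttt{EQE} reduce to explicit sums over these states while \texttt{MPE} and \texttt{ThreshEnum} reduce to scanning them, all in polynomial time.
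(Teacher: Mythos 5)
Your proof is correct and rests on the same key observation as the paper's: a \texttt{TE-H} theory on a tree accepts exactly the null state and, for each vertex $v$, the state consisting of $v$ and its ancestors, hence only $|V|+1$ satisfying states that can be listed in linear time. The paper then simply takes the disjunction of the terms encoding these explicit states (a \texttt{MODS}-style \texttt{d-DNNF}), so your recursive $E_v/P_v$ circuit is more elaborate than necessary, but your closing "self-contained alternative" of scanning the $|V|+1$ states is essentially the paper's argument.
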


\begin{proof}
    This comes from the fact that satisfying states of a theory $(G:=(V, E), \varsigma)$ in \texttt{TreeExclusive-Hierarchical} can be enumerated in linear time: for each vertex $v \in V$, the state that only contains $v$ and its ancestors is accepted by $(G, \varsigma)$. The null state is the only other state accepted by $(G, \varsigma)$. Assume that a non-null state $\mathbf{y}$ is accepted by $(G, \varsigma)$ and note $v$ a vertex belonging to $\mathbf{y}$ such that no children of $v$ belong to $\mathbf{y}$. This vertex exists because $G$ is acyclic and $\mathbf{y}$ is non-null. Because $G$ is a tree, any vertex that is not an ancestor nor a descendant of $v$ does not share any descendants with $v$. Hence, $\mathbf{y}$ is the state that only contains $v$ and its ancestors: it contains the ancestors of $v$ by satisfaction of the hierarchical constraints, does not contain its descendants as per the choice of $v$ and cannot contain any other vertex by satisfaction of the exclusion constraints.
    
    \texttt{TreeExclusive-Hierarchical} can be efficiently compiled to \texttt{d-DNNF} simply by conjunction of the assignments corresponding to its accepted states. Therefore, \texttt{Exclusive-Hierarchical} is \texttt{MPE}, \texttt{ThreshEnum}, \texttt{PQE} and \texttt{EQE}-tractable.
\end{proof}

\subsection{Cardinal constraints} \label{sec:card}
Cardinal constraints operate on the number of variables included in a valid state. These constraints can be captured by the \texttt{Card} language, a fragment of \texttt{BLP} where theories consist in a single linear constraint $\langle \mathbf{Y}, \mathbf{1} \rangle \diamond l$ with $\diamond \in \{\leq, \geq, =\}$ and $0 \leq l \leq |\mathbf{Y}|$. An approximate algorithm for \texttt{PQE} based on the Gumbel trick and perturbed \texttt{MPE} calls was introduced in \cite{pogancic_differentiation_2019}. Then, \cite{AhmedICLR23} developed a polynomial time (and differentiable) algorithm to compute marginal quantities (including \texttt{PQE}) on k-subset constraints (\ie cardinal constraints reduced to $=$). We extend these tractabilty results to \texttt{ThreshEnum} and \texttt{EQE} through knowledge compilation.

\vspace{5mm}
\begin{remark}
    The \texttt{CARD} language defined in \cite{LeBerre2018} is more expressive than the \texttt{Card} fragment described above as it allows conjunction of cardinal constraints on literals and not only variables. This explains why \texttt{Card} can be compiled to \texttt{d-DNNF} (and even \texttt{OBDD}) while \texttt{CARD} cannot.
\end{remark}

\begin{proposition}\label{prop:card2dnnf}
    \texttt{Card} can be efficiently compiled to \texttt{d-DNNF}.
\end{proposition}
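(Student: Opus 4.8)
The plan is to construct, for a cardinal constraint $\langle \mathbf{Y}, \mathbf{1} \rangle \diamond l$ on $k := |\mathbf{Y}|$ variables, a small \texttt{d-DNNF} circuit whose nodes track the number of selected variables among a prefix of $\mathbf{Y}$. First I would fix an arbitrary ordering $Y_1, \dots, Y_k$ of the variables and build a layered circuit indexed by pairs $(i, j)$, where $i$ ranges over $0, \dots, k$ (the number of variables already decided) and $j$ ranges over $0, \dots, i$ (the number of those that were set to $1$). The node $n_{i,j}$ will represent the boolean function over $Y_1, \dots, Y_i$ that accepts exactly the states selecting $j$ of the first $i$ variables. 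The recurrence is the obvious one: $n_{i,j}$ is an $\lor$-node over the two ways of deciding $Y_i$, namely $(\neg Y_i \land n_{i-1,j})$ when $j \le i-1$ and $(Y_i \land n_{i-1,j-1})$ when $j \ge 1$, with $n_{0,0} = \top$ as the base case.

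Next I would assemble the root. The accepting states are those whose total count satisfies $\diamond\, l$, so the root is an $\lor$-node collecting the top-layer nodes $n_{k,j}$ for every $j$ with $j \diamond l$ (i.e. $j \le l$, $j \ge l$, or $j = l$ depending on $\diamond$); I would also note that the whole thing can be read as an \texttt{OBDD}, matching the parenthetical remark preceding the statement. The size bound is immediate: there are $\mathcal{O}(k^2)$ counting nodes, each contributing a constant number of wires and auxiliary $\land$-nodes, so the circuit has $\mathcal{O}(k^2)$ wires and is built in polynomial time; since $k$ is polynomial in the size of the theory, this establishes efficient compilation.

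The substance of the proof is verifying the two structural properties that make the circuit a \texttt{d-DNNF}. For \textbf{decomposability}, each $\land$-node has the form $(\text{literal on } Y_i) \land n_{i-1,\cdot}$, and since $n_{i-1,\cdot}$ depends only on $Y_1, \dots, Y_{i-1}$ by an easy induction on $i$, its children share no variables. For \textbf{determinism}, I would check that at each $\lor$-node the two disjuncts are mutually exclusive: the two children of $n_{i,j}$ are separated by the opposite literals $\neg Y_i$ and $Y_i$, so no state satisfies both; and at the root the children $n_{k,j}$ for distinct $j$ accept disjoint state sets because a state has a unique total count. I would also confirm by induction that $\mathscr{s}(n_{i,j})$ is exactly the "$j$-out-of-$i$" function, which simultaneously gives correctness of the overall semantics.

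The main obstacle is not the combinatorial idea, which is a standard counting \texttt{OBDD}, but the bookkeeping needed to present it cleanly within the paper's circuit formalism: one must be careful that nodes labelled $\land$ genuinely have at least two children (so singleton conjunctions like $\top \land n$ should be simplified away, and boundary cases $j=0$ and $j=i$ handled so that no $\lor$-node or $\land$-node is left with a single child), and that sharing the $n_{i-1,j}$ subcircuits between the two parents at level $i$ does not break the acyclic single-root structure. Handling these degenerate layers and the $\diamond \in \{\le, \ge, =\}$ cases uniformly, while keeping every $\land$-node decomposable and every $\lor$-node deterministic, is where the care lies.
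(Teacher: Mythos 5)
Your construction is essentially the paper's own proof: both build the quadratic-size family of ``$j$-out-of-first-$i$'' counting nodes by conditioning on the variables in order, verify decomposability and determinism from the decision-node structure, and observe the result is in fact an \texttt{OBDD}. The only (immaterial) difference is that you handle $\diamond\in\{\leq,\geq\}$ via a deterministic root disjunction over the final-layer counts, whereas the paper handles them by replacing boundary nodes with $\top$; both are correct.
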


\begin{proof}
    A cardinal constraint can be efficiently compiled to an \texttt{d-DNNF} iteratively conditioning on variables following the natural order over the variables $\{Y_i\}_{1 \leq i \leq n}$ and keeping track of the number of positive variables at each step. See the complete proof in Section \ref{sec:proof_card}.
\end{proof}

\begin{corollary}
    \texttt{Card} is \texttt{MPE}, \texttt{ThreshEnum}, \texttt{PQE} and \texttt{EQE}-tractable.
\end{corollary}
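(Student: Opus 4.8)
The plan is to treat this corollary as an immediate consequence of Proposition \ref{prop:card2dnnf} together with the tractability properties of the target language recalled in Section \ref{sec:kc}; no new combinatorial argument is required, since all the work has already been done in establishing that \texttt{Card} can be efficiently compiled to \texttt{d-DNNF}.

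First I would invoke Proposition \ref{prop:card2dnnf}, which guarantees that every \texttt{Card} theory admits an equivalent \texttt{d-DNNF} theory of polynomial size, computable in polynomial time. Next I would recall the central principle of knowledge compilation stated in Section \ref{sec:kc}: whenever a source language can be efficiently compiled into a target language (written $\mathtt{L}_s \to_{ec} \mathtt{L}_t$), every problem tractable for the target is also tractable for the source, because a polynomial-time algorithm for the source is obtained simply by composing the polynomial-time compilation step with the polynomial-time algorithm for the target.

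Finally I would combine this with the facts, also recalled in Section \ref{sec:kc}, that \texttt{DNNF} is \texttt{MPE} and \texttt{ThreshEnum}-tractable and that \texttt{d-DNNF} is moreover \texttt{PQE} and \texttt{EQE}-tractable. Applying the compilation principle to each of the four problems then yields that \texttt{Card} is \texttt{MPE}, \texttt{ThreshEnum}, \texttt{PQE} and \texttt{EQE}-tractable, which is exactly the claim.

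Since the corollary is purely an inheritance of tractability through compilation, there is no genuine obstacle in its own proof; the real difficulty lies entirely in Proposition \ref{prop:card2dnnf}, whose explicit construction (iteratively conditioning on the variables in their natural order while tracking the running number of positive variables, so as to keep the resulting circuit both decomposable and deterministic) is deferred to Section \ref{sec:proof_card}. The one point I would be careful to check is that the tractability results cited for \texttt{d-DNNF} apply to our precise formulation of the four problems under non-trivial rational probabilities, but this setting is already fixed in Section \ref{sec:probs} and covered by the references given in Section \ref{sec:kc}.
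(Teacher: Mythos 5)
Your proposal is correct and follows exactly the route the paper intends: the corollary is an immediate consequence of Proposition \ref{prop:card2dnnf} combined with the fact, stated in Section \ref{sec:kc}, that efficient compilation to \texttt{d-DNNF} transfers \texttt{MPE}, \texttt{ThreshEnum}, \texttt{PQE} and \texttt{EQE}-tractability to the source language. The paper gives no separate argument for the corollary, so there is nothing further to compare.
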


\subsection{Simple paths constraints} \label{sec:paths}
Simple path constraints can be captured through the edge-based directed graphical language \texttt{SPath} where a state $\mathbf{y} \in \mathbb{B}^{\mathbf{Y}}$ satisfies a theory $(G:=(V, E), \varsigma)$ iff the set of selected edges (ie. $\{e \in E | \varsigma(e) \in \mathbf{y}\}$) forms a total simple path in the directed graph $G$.
\vspace{5mm}
\begin{example}
    An example of an edge-based directed theory $(G:=(V, E), \varsigma)$ is given on Figure \ref{fig:D}. It has one source $s$ and one sink $t$. The states that satisfy this theory regarding the total simple path semantics are $(Y_1, Y_4)$, $(Y_1, Y_3, Y_5, Y_6)$ and $(Y_2, Y_5, Y_6)$.
\end{example}

\begin{figure}[h]
    \centering
    \begin{tikzpicture}[>=Stealth, node distance=2cm]
      \node[circle,fill,inner sep=1.5pt, label=above:$s$] (1) {};
      \node[circle,fill,inner sep=1.5pt,right of=1] (2) {};
      \node[circle,fill,inner sep=1.5pt,below of=2] (3) {};
      \node[circle,fill,inner sep=1.5pt,right of=3] (4) {};
      \node[circle,fill,inner sep=1.5pt,right of=2, label=above:$t$] (5) {};
    
      \draw[->] (1) -- node[midway, above] {$Y_1$} (2);
      \draw[->] (1) -- node[midway, left] {$Y_2$} (3);
      \draw[->] (2) -- node[midway, right] {$Y_3$} (3);
      \draw[->] (2) -- node[midway, above] {$Y_4$} (5);
      \draw[->] (3) -- node[midway, below] {$Y_5$} (4);
      \draw[->] (4) -- node[midway, right] {$Y_6$} (5);
    \end{tikzpicture}
    \caption{An edge-based directed theory $(G:=(V, E), \varsigma)$: each edge $e$ is labeled with its corresponding variable $\varsigma(e) \in \mathbf{Y}$.}
    \label{fig:D}
\end{figure}
\vspace{5mm}

Simple path constraints are often encountered informed classification tasks (see Example \ref{ex:WSP1}) or other neurosymbolic AI domains (\eg reinforcement learning on routes \cite{Ling2021}).

\begin{example}[Warcraft Shortest Path] \label{ex:WSP1}
    The Warcraft Shortest Path dataset uses randomly generated images of terrain maps from the Warcraft II tileset. Maps are build on a $12 \times 12$ directed grid (each vertex is connected to all its \textit{neighbors}) and to each vertex of the grid corresponds a tile of the tileset. Each tile is a RGB image that depicts a specific terrain, which has a fixed traveling cost. For each map, the label encodes the shortest s-t path (\ie a path from the upper-left to the lower-right corners), where the weight of the path is the sum of the traveling costs of all terrains (\ie grid vertices) on the path. Terrain costs are used to produce the dataset but are not provided during training nor inference. This dataset has been used several times in the literature to build informed classification tasks to evaluate neurosymbolic techniques \cite{pogancic_differentiation_2019,Yang2020,niepert_implicit_2021,Ahmed2022spl}: prior knowledge about the task tells us that valid labels must correspond to simple paths in the grid.
\end{example}

Several work study probabilistic reasoning as well as knowledge compilation on simple path constraints in \textbf{undirected} graphs \cite{Nishino2017,Choi2017,Shen2019}. In this section we analyze the complexity of probabilistic reasoning for simple path constraints in \textbf{directed} graphs. In particular, we show that the acyclicity of the graph plays a crucial role in the tractability of probabilistic reasoning.

\vspace{5mm}
\begin{proposition}
    \texttt{SPath} is \texttt{MPE}, \texttt{ThreshEnum}, \texttt{PQE} and \texttt{EQE}-intractable.
\end{proposition}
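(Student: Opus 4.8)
The plan is to split the four claims into the counting pair (\texttt{PQE}, \texttt{EQE}) and the optimization/enumeration pair (\texttt{MPE}, \texttt{ThreshEnum}), and in each case exploit the fact that probabilistic reasoning on \texttt{SPath} collapses onto a classical hard problem on directed graphs precisely because the underlying graph is allowed to contain cycles.

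For the counting side, I would first observe that \texttt{MC} on an \texttt{SPath} theory $(G, \varsigma)$ counts the edge-subsets forming a total simple path in $G$; since, as noted in the preliminaries, an edge-set determines its simple path uniquely, this equals the number of total simple paths in $G$. Counting simple $s$–$t$ paths in a general (cyclic) directed graph is \#P-hard (Valiant~\cite{valiant1979}), and after arranging a unique source and sink this gives \#P-hardness of counting total simple paths, hence of \texttt{MC} on \texttt{SPath}. Because \texttt{MC} reduces to both \texttt{PQE} and \texttt{EQE} by taking identical edge probabilities (Section~\ref{sec:probs}), both problems are intractable.

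For the optimization side, I would use that under the independent Bernoulli distribution the log-probability of a state selecting an edge set $S$ is $\sum_i \log(1-p_i) + \sum_{i \in S} \log\frac{p_i}{1-p_i}$, so \texttt{MPE} on \texttt{SPath} amounts to finding the total simple path maximizing $\sum_{i \in S} w_i$ with edge weights $w_i := \log\frac{p_i}{1-p_i}$. Taking a uniform probability $p > \tfrac12$ makes all weights positive and equal, so the most probable total simple path is exactly the one with the most edges, i.e. the longest simple path. Since the longest path problem in directed graphs is NP-hard (it contains Hamiltonian path~\cite{Karp1972}), \texttt{MPE} is intractable; and because \texttt{MPE} reduces to \texttt{ThreshEnum} (the mode is recovered by enumerating above a suitable threshold, exactly as in the \texttt{Hex} argument), \texttt{ThreshEnum} is intractable too.

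The main obstacle I expect is making the graph-theoretic reductions line up precisely with the total-path semantics: the classical hardness results are phrased for simple $s$–$t$ paths, so I must construct source/sink gadgets guaranteeing that total simple paths of the resulting \texttt{SPath} theory correspond bijectively to the $s$–$t$ simple paths of the hard instance, with no spurious total paths between other source–sink pairs (which would both inflate the model count and corrupt the \texttt{MPE} reduction). This bookkeeping is routine but delicate, and it is exactly where cycles are essential: the same reductions fail on acyclic graphs, which foreshadows why \texttt{ASPath} will instead turn out to be tractable.
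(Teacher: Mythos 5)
Your proof follows the same route as the paper's: \#P-hardness of counting simple paths (Valiant) pushed through \texttt{MC} to \texttt{PQE}/\texttt{EQE}, NP-hardness of the longest/shortest simple path problem via Hamiltonian path for \texttt{MPE} (your ``uniform $p>\tfrac12$, maximize edge count'' framing is just the paper's ``shortest path with positive and negative real weights'' in different clothing), and \texttt{ThreshEnum} inherited by reduction from \texttt{MPE}. You are in fact more explicit than the paper about the source/sink gadgetry needed to align the classical $s$--$t$ hardness results with the total-path semantics of \texttt{SPath}, a point the paper's proof silently elides.
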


\begin{proof}
    Solving \texttt{MPE} for a \texttt{SPath} theory $(G, \varsigma)$ is equivalent to finding a shortest path in $G$ with (positive and negative) real weights on the edges, which is known to be NP-hard by a polynomial reduction from the Hamiltonian path problem \cite{Karp1972}. This also implies that \texttt{SPath} is \texttt{ThreshEnum}-intractable. Likewise, counting the number of simple paths of a graph $G$ is known to be \#P-hard \cite{valiant1979} and has a trivial polynomial reduction to solving \texttt{PQE} and \texttt{EQE} for a \texttt{SPath} theory $(G, \varsigma)$.
\end{proof}


The fragment of \texttt{SPath} composed only of acyclic theories is called \texttt{ASPath}.

\begin{proposition}\label{prop:aspath2dnnf}
    \texttt{ASPath} can be efficiently compiled to \texttt{d-DNNF}.
\end{proposition}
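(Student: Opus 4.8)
The plan is to build, in a single bottom-up pass along a reverse topological ordering, a \texttt{d-DNNF} circuit whose models are in bijection with the total simple paths of $G$, each model selecting exactly the edges of its path and nothing else. The key observation is that acyclicity makes every source-to-sink path automatically simple, so \texttt{ASPath} coincides with the ``total path'' semantics and, crucially, no vertex is reachable from its own descendants. For a vertex $v$, let $E_v$ denote the edge set of the subgraph induced by the vertices reachable from $v$ (computable in polynomial time). I would introduce a sub-circuit $D_v$ over the variables $\{\varsigma(e) : e \in E_v\}$ whose satisfying states are exactly those selecting the edges of a path from $v$ to a sink and setting every other edge of $E_v$ to $0$.

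The recurrence branches on the edge chosen to leave $v$: if $v$ is a sink then $D_v := \top$; otherwise
\[
D_v := \bigvee_{(v,u) \in E_{out}(v)} \Big( \varsigma(v,u) \wedge \!\!\bigwedge_{\substack{(v,w) \in E_{out}(v)\\ w \neq u}}\!\! \neg\varsigma(v,w) \wedge D_u \wedge \!\!\bigwedge_{e \in E_v \setminus (E_{out}(v) \cup E_u)}\!\! \neg \varsigma(e) \Big).
\]
I would then check that each node is legal \texttt{d-DNNF}. Decomposability of the $\wedge$-nodes is where acyclicity does the work: since $v$ is not reachable from $u$, no out-edge of $v$ lies in $E_u$, so the three groups of factors---the literals on $E_{out}(v)$, the sub-circuit $D_u$ on $E_u$, and the negative literals on $E_v \setminus (E_{out}(v) \cup E_u)$---have pairwise disjoint variables. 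Determinism of the $\vee$-node is immediate, because the disjunct for $(v,u)$ forces $\varsigma(v,u)=1$ and every other out-edge of $v$ to $0$, so distinct disjuncts disagree on some out-edge variable. Finally I would assemble the sub-circuits $D_s$ over all sources $s$ that are not sinks into
\[
F := \bigvee_{s \text{ source}} \Big( D_s \wedge \bigwedge_{e \in E \setminus E_s} \neg \varsigma(e) \Big),
\]
which is again decomposable (by the reachability partition) and deterministic: a source is reachable only from itself, so the first edge of an $s$-path lies in no $E_{s'}$ with $s' \neq s$, forcing the disjuncts to disagree.

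For the complexity bound, each $D_v$ contributes one $\vee$-node with $|E_{out}(v)|$ $\wedge$-children, and every child carries at most $O(|E|)$ negative literals plus a single shared pointer to some $D_u$; summing over vertices yields a circuit of size $O(|E|^2)$, and the whole construction runs in polynomial time once the reachability sets $E_v$ are precomputed and the $D_v$ are memoized along the reverse topological order. I expect the main obstacle to be exactly this decomposability bookkeeping: because the descendant subgraphs of distinct children of $v$ overlap, one cannot simply conjoin the children's circuits, and the construction must explicitly zero out the edges reachable from siblings but not from the chosen child---this is the role of the final conjunction, and it is precisely the step that would break for cyclic graphs, consistently with the intractability of \texttt{SPath}. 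A minor point to dispatch is the degenerate case of a vertex that is both a source and a sink, which I exclude from the top-level disjunction so that the empty (edgeless) selection is never produced, since a path is required to contain at least one edge.
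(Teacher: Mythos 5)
Your construction is correct, and it takes a genuinely different route from the paper's. You build the circuit by a backward recursion over \emph{vertices}: $D_v$ encodes the paths from $v$ to a sink over the reachable edge set $E_v$, the disjunction at $v$ branches on the chosen out-edge, and each disjunct explicitly conjoins negative literals to force every unused edge of $E_v$ to $0$; decomposability then hinges on the fact that acyclicity keeps $E_{out}(v)$, $E_u$ and the remainder of $E_v$ pairwise disjoint. The paper instead runs a forward dynamic program over the \emph{edges} in a topological order, keeping one sub-circuit $C_v^i$ per pair (prefix $e_1,\dots,e_i$, current path endpoint $v$) and making a binary decision on one variable at a time; since every variable is decided exactly once along each branch, no explicit zeroing-out of unused edges is needed, and the resulting circuit is in fact an \texttt{OBDD} (as the paper notes in a remark), a strictly stronger conclusion than membership in \texttt{d-DNNF}. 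Your circuit is a valid \texttt{d-DNNF} but not an ordered decision diagram as written, and your $\mathcal{O}(|E|^2)$ size bound is of the same polynomial order as the paper's. Both arguments exploit acyclicity at the same two essential points --- every total path in a DAG is automatically simple, and the recursion is well founded (topological order of the edges for the paper, nesting of the reachability sets $E_u \subset E_v$ for you) --- and your handling of multiple sources and sinks by top-level disjunction and $D_v=\top$ at sinks replaces the paper's preliminary merging of sources and sinks. The only step worth spelling out fully if you write this up is the semantic induction showing that the models of $D_v$ are exactly the edge sets of $v$-to-sink paths, but your partition of $E_v$ already contains all the ingredients for it.
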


\begin{proof}
    A simple path theory can be efficiently compiled to a \texttt{d-DNNF} by iteratively conditioning on variables following a topological ordering of the edges and keeping track of the last vertex reached by the path at each step. See the complete proof in Section \ref{sec:proof_path}.
\end{proof}

\begin{corollary}
    \texttt{ASPath} is \texttt{MPE}, \texttt{ThreshEnum}, \texttt{PQE} and \texttt{EQE}-tractable.
\end{corollary}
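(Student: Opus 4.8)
The plan is to compile a theory $(G:=(V,E),\varsigma)$ of \texttt{ASPath} into a layered circuit by conditioning on the edge-variables one at a time, following a topological ordering $e_1,\dots,e_m$ of the edges (which exists since $G$ is acyclic). The guiding principle is that the only information about a partial assignment needed to evaluate the rest of the constraint is the \emph{last vertex reached} by the path so far. Two facts make this a sound invariant. First, because $G$ is acyclic, every path in $G$ is automatically simple (a repeated vertex would force a cycle), so we never have to record the set of visited vertices. Second, along any path the edges occur in strictly increasing topological order: if $(u,v)$ and $(v,w)$ are consecutive, the trivial path from $v$ to $v$ forces $\sigma((u,v))<\sigma((v,w))$. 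Hence the selected edges of any total simple path are encountered by our sweep in path order, and after processing $e_1,\dots,e_i$ the selected edges form a contiguous prefix of the eventual path, running from a source to some current endpoint $w$.

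I would then introduce, for each level $i\in\llbracket 1,m+1\rrbracket$ and each state $q\in V\cup\{\emptyset\}$ (where $\emptyset$ means ``path not yet started''), a node $N_{i,q}$ representing the residual boolean function over the future variables $\{e_i,\dots,e_m\}$: ``starting in state $q$, do the remaining edges complete a valid total simple path''. The construction proceeds by Shannon expansion on $e_i=\varsigma^{-1}$, with the recurrence
\begin{equation*}
    N_{i,q} \;=\; \big(\neg e_i \wedge N_{i+1,q}\big) \;\vee\; \big(e_i \wedge N_{i+1,v_i}\big),
\end{equation*}
where writing $e_i=(u_i,v_i)$ the select-branch $e_i\wedge N_{i+1,v_i}$ is \emph{kept only when the edge legally continues the path}, namely when $u_i=w$ for $q=w\in V$, or when $u_i$ is a source for $q=\emptyset$; otherwise that branch is dropped (replaced by $\bot$), so $e_i$ is forced to $0$. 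The base case at level $m+1$ sets $N_{m+1,w}=\top$ iff $w$ is a sink and $N_{m+1,\emptyset}=\bot$, and the root of the circuit is $N_{1,\emptyset}$. Each full edge-assignment then traces a single deterministic trajectory through these states, so the circuit evaluates to $\top$ exactly on the assignments whose selected edges form a total simple path.

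Verifying the target fragment is then routine. The circuit is in \texttt{NNF}, since the only negations are the leaf literals $\neg e_i$. Every $\wedge$-node pairs the literal on $e_i$ with a residual $N_{i+1,\cdot}$ whose scope is contained in $\{e_{i+1},\dots,e_m\}$, so the two children have disjoint variable sets and the node is decomposable, giving \texttt{DNNF}. Every $\vee$-node splits on $e_i=0$ versus $e_i=1$, so its two disjuncts share no satisfying state and the node is deterministic, giving \texttt{d-DNNF}. There are at most $(|V|+1)$ states per level across $m=|E|$ levels, for $\mathcal{O}(|E|\cdot|V|)$ nodes and wires, and the reachable states and transitions are computed in polynomial time; hence the compilation is efficient. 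The corollary (\texttt{MPE}, \texttt{ThreshEnum}, \texttt{PQE}, \texttt{EQE}-tractability) then follows from the \texttt{d-DNNF} tractability results cited earlier.

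The main obstacle is justifying the ``endpoint-only'' invariant rigorously, i.e.\ that dropping the select-branch when $u_i\neq w$ (or $u_i$ non-source) loses no satisfying assignment, and that rejected edges out of $w$ with index $<i$ are irrelevant to the residual. Both rely on the topological-order lemma together with acyclicity: if the path at $w$ could later use $e_i$ with $u_i\neq w$, the edge leaving $w$ towards $u_i$ would have index $<i$ and would already have been selected, contradicting that $q=w$; and since any already-processed edge out of $w$ lies in the past, it can never be reused, so the residual is a function of the future variables alone and depends only on $w$. I would emphasize that this argument breaks precisely when $G$ has cycles (paths need no longer be simple, and the endpoint no longer summarizes the forbidden vertices), which is consistent with the intractability of the full \texttt{SPath} language.
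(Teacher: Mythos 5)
Your proposal is correct and follows essentially the same route as the paper: an efficient compilation of \texttt{ASPath} to \texttt{d-DNNF} by sweeping the edge variables in topological order with a dynamic program whose state is the last vertex reached, followed by an appeal to the \texttt{MPE}/\texttt{ThreshEnum}/\texttt{PQE}/\texttt{EQE}-tractability of \texttt{d-DNNF}. The only differences are presentational (you define the nodes as residual functions of the future variables with an explicit ``not yet started'' state, whereas the paper normalizes to a single source and sink and characterizes each node by the prefixes it accepts), and your decomposability, determinism, size, and correctness arguments match the paper's.
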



\begin{example}
    This case gives a great example how a better understanding of knowledge representation and probabilistic reasoning can influence the design of informed classification tasks. In its original version \cite{pogancic_differentiation_2019}, output variables correspond to vertices in the grid and a state satisfies the simple path constraint if the vertices set to $1$ constitute a simple s-t path. As pointed out in \cite{Ahmed2022spl}, the set of vertices ambiguously encode more than one path (because of cycles in the grid, there are several possible simple paths that go through the same vertices). Therefore, \cite{Ahmed2022spl} designs another version of the task where edges of the grid are chosen as output variables instead of vertices. However, because \texttt{SPath} is \texttt{MPE} and \texttt{PQE}-intractable, \cite{Ahmed2022spl} transforms prior knowledge to only keep simple paths with a maximal length of $29$ (the maximal length found in the training set) as valid states. This makes computations tractable but implies that test samples set might not be consistent with prior knowledge (it might contain a path longer than $29$ edges). Finally, such method would not scale to larger grids.
\end{example}



\subsection{Matching constraints} \label{sec:match}
Matching constraints naturally arise in artificial intelligence when one wants to find the best pairing between various entities (\eg individuals, tasks, resources, etc.). For instance, an informed classification task with matching constraints was build in \cite{Ahmed23} based on MNIST images \cite{LeCun1998}. Such constraints can be expressed using the edge-based undirected graphical language \texttt{Match} where a state $\mathbf{y} \in \mathbb{B}^{\mathbf{Y}}$ satisfies a theory $(G:=(V, E), \varsigma)$ iff the set of selected edges (ie. $\{e \in E | \varsigma(e) \in \mathbf{y}\}$) forms a \textbf{matching} in the graph $G$.
\vspace{5mm}
\begin{example}
    An example of an edge-based undirected theory $(G:=(V, E), \varsigma)$ is given on Figure \ref{fig:U}. The states that satisfy this theory regarding the perfect matching semantics are $(Y_2, Y_4, Y_7)$ and $(Y_1, Y_5, Y_8)$.
\end{example}

\begin{figure}[h]
    \centering
    \begin{tikzpicture}
      \node[circle, fill, inner sep=1.5pt] (A) at (0,0) {};
      \node[circle, fill, inner sep=1.5pt] (B) at (2,1) {};
      \node[circle, fill, inner sep=1.5pt] (C) at (3,-2) {};
      \node[circle, fill, inner sep=1.5pt] (D) at (1,-2) {};
      \node[circle, fill, inner sep=1.5pt] (E) at (-0.5,-1) {};
      \node[circle, fill, inner sep=1.5pt] (F) at (3.5,-0.5) {};
    
      \draw (A) -- node[midway, above] {$Y_1$} (B);
      \draw (A) -- node[midway, left] {$Y_2$} (E);
      \draw (B) -- node[midway, left] {$Y_3$} (C);
      \draw (C) -- node[midway, above] {$Y_4$} (D);
      \draw (D) -- node[midway, below] {$Y_5$} (E);
      \draw (A) -- node[midway, right] {$Y_6$} (D);
      \draw (B) -- node[midway, right] {$Y_7$} (F);
      \draw (C) -- node[midway, right] {$Y_8$} (F);
    \end{tikzpicture}
    \caption{An edge-based undirected theory $(G:=(V, E), \varsigma)$: each edge $e$ is labeled with its corresponding variable $\varsigma(e) \in \mathbf{Y}$.}
    \label{fig:U}
\end{figure}

\vspace{5mm}
\begin{proposition}
    \texttt{Match} is \texttt{MPE} and \texttt{ThreshEnum}-tractable but \texttt{PQE} and \texttt{EQE}-intractable.
\end{proposition}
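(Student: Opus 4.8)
The plan is to handle the two tractable claims through the additive structure of log-probabilities and the two intractable claims through a reduction to counting matchings. For \texttt{MPE}, I would first note that $\log\mathcal{P}(\mathbf{y}\mid\mathbf{p})$ decomposes additively over edges: setting $w_e := \log p_e - \log(1-p_e)$ gives $\log\mathcal{P}(\mathbf{y}\mid\mathbf{p}) = \sum_e \log(1-p_e) + \sum_{e\in\mathbf{y}} w_e$, so the additive constant is irrelevant to the $\argmax$ and \texttt{MPE} on \texttt{Match} is exactly the maximum-weight matching problem on $G$ with edge weights $w_e$. Edges of negative weight are discarded in any optimum (the empty set is a feasible matching), so this is unconstrained maximum-weight matching, solved in polynomial time by Edmonds' blossom algorithm even on non-bipartite graphs; hence \texttt{Match} is \texttt{MPE}-tractable. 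For \texttt{ThreshEnum} I would layer a Lawler/Murty-style ranked enumeration on top of this oracle: maintain a priority queue of subproblems, each forcing some edges into the matching and forbidding others. The constrained optimum of such a subproblem is again a maximum-weight matching (delete the forbidden edges, and for each forced edge $e$ delete its two endpoints together with all incident edges while adding $w_e$ to the objective), hence still polynomial. Repeatedly extracting the best matching, emitting it, and repartitioning the remaining solution space along the edges of the extracted matching enumerates all matchings in nonincreasing order of probability with polynomial delay; I halt as soon as the next best probability drops below the threshold $t$, which gives \texttt{ThreshEnum}-tractability.

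For the intractability of \texttt{PQE} and \texttt{EQE}, I would use that $\mathcal{P}(\kappa\mid\mathbf{p}) = \prod_e (1-p_e)\,\sum_{M}\prod_{e\in M}\tfrac{p_e}{1-p_e}$ is a weighted count over the matchings $M$ of $G$. Taking $p_e = \tfrac12$ for every edge collapses this to $(\#\text{matchings})/2^{|E|}$, and simultaneously makes the conditioned distribution $\mathcal{P}(\cdot\mid\mathbf{p},\kappa)$ uniform over matchings, so that its entropy equals $\log(\#\text{matchings})$. Thus \texttt{MC} on \texttt{Match}, i.e.\ counting all matchings of $G$, reduces in polynomial time to both \texttt{PQE} and \texttt{EQE}; since counting all matchings is \#P-hard, both problems are intractable.

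The main obstacle is the hardness side. In contrast to maximum-weight matching, the relevant counting problem is the \#P-hardness of counting \emph{all} matchings (the monomer--dimer count), which is stronger than the classical \#P-completeness of the permanent, i.e.\ of counting \emph{perfect} matchings \cite{valiant1979}. A self-contained route that sidesteps this stronger fact and yields \texttt{PQE}-hardness directly is polynomial interpolation, which also illustrates the role of the rational-probability assumption: with uniform probability $p$ a single \texttt{PQE} call returns $(1-p)^{|E|}\sum_k m_k\,(p/(1-p))^k$, where $m_k$ counts the size-$k$ matchings; evaluating at $\lfloor |V|/2\rfloor + 1$ distinct rational values of $p$ and interpolating recovers every coefficient $m_k$, in particular the top one, which is the number of perfect matchings. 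This Turing reduction from the permanent pins down \texttt{PQE}-hardness from \cite{valiant1979}, after which one still invokes the uniform-entropy identity above, together with the \#P-hardness of counting all matchings, to conclude \texttt{EQE}-hardness.
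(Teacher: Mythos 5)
Your proposal is correct and follows essentially the same route as the paper: \texttt{MPE} via edge log-odds weights, discarding negative-weight edges and running Edmonds' maximum-weight matching algorithm; \texttt{ThreshEnum} via ranked enumeration with a branch-and-partition scheme over constrained matching subproblems (the paper cites the Chegireddy--Hamacher $K$-best matching algorithm for this, whereas you spell out the Lawler/Murty partition explicitly); and \texttt{PQE}/\texttt{EQE} hardness by reducing counting all matchings, which is \#P-hard, using $p_e=\tfrac12$ and the uniform-entropy identity. Your additional polynomial-interpolation reduction from counting perfect matchings is a valid self-contained alternative for \texttt{PQE}-hardness, but it is not needed once the \#P-hardness of the monomer--dimer count is invoked, which is exactly what the paper does.
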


\begin{proof}
    We build a polynomial algorithm for \texttt{MPE} on \texttt{Match} based on two facts. First, if a state is a matching, than any subset of that state is also a matching (no matching constraints can be violated by removing an edge from the matching). Secondly, we know that there is a polynomial time algorithm for finding a maximum weight-sum matching \cite{Edmonds1965}. Therefore, our algorithm for \texttt{MPE} on \texttt{Match} is the following: transform the probabilities to logits with the inverse sigmoid function $s^{-1}(p)=\log(\frac{p}{1-p)}$, remove the edges with negative weights from the graph and find a maximum weight-sum matching using Edmond's algorithm.
    
    The algorithm in \cite{Chegireddy1987} can be adapted to perform \texttt{RankedEnum} on matching constraints in polynomial time (in the combined size of the theory and the output). Interestingly, \texttt{ThreshEnum} can be reduced to \texttt{RankedEnum} (just enumerate the state in decreasing order of probability and stop at the first one that goes beyond the probability threshold), which means that \texttt{Match} is \texttt{ThreshEnum}-tractable.
    
    Finally, it is known that counting the number of matchings of a graph is \#-P-hard \cite{valiant1979}. Therefore, because \texttt{MC} can be reduced to both \texttt{PQE} and \texttt{EQE}, \texttt{Match} is \texttt{PQE} and \texttt{EQE}-intractable.
\end{proof}

\begin{proposition}[adapted from Theorem 8.3 in \cite{Amarilli2020}]
    For any monotone \texttt{CNF} $\kappa$ of bounded arity and degree, the size of the smallest \texttt{DNNF} equivalent to $\kappa$ is $2^{\Omega(\tau(\kappa))}$, where $\tau(\kappa)$ is the tree-width of $\kappa$.
\end{proposition}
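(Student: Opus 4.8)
The plan is to prove this size lower bound through the now-standard bridge between \texttt{DNNF} and communication complexity, specialised to high-treewidth monotone instances. The starting point is the observation that a \texttt{DNNF} of size $s$ computing $\kappa$ induces a cover of the satisfying assignments $\mathscr{s}(\kappa)$ by at most $s$ combinatorial rectangles: each $\land$-node, being decomposable, partitions its variables, so tracing certificates from the root yields, for each model, a rectangle $A \times B$ of assignments to two disjoint variable blocks that split that model. Since the blocks may differ from node to node, this is a \emph{multi-partition} rectangle cover. Hence it suffices to show that any such cover of $\mathscr{s}(\kappa)$ respecting balanced partitions must use $2^{\Omega(\tau(\kappa))}$ rectangles; this reduction is exactly the communication-complexity view of \texttt{DNNF} underlying \cite{Amarilli2020}.

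The second ingredient is to turn the treewidth hypothesis into a partition-robust combinatorial obstruction. First I would pass from treewidth to a \textbf{well-linked set} (equivalently, a bramble) of order $\Omega(\tau(\kappa))$ in the primal graph of $\kappa$. Well-linkedness is the crucial property here because it holds simultaneously for \emph{every} balanced partition: for any way of splitting the variables into two roughly equal halves, there are $\Omega(\tau(\kappa))$ vertex-disjoint connections crossing the cut. Using that $\kappa$ has bounded arity and bounded degree, each such connection can be routed through a bounded-length chain of clauses, so that from the crossing structure I can extract $\Omega(\tau(\kappa))$ clauses whose variables straddle the partition and which can be controlled independently of one another.

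The third step is a fooling-set argument realising the disjointness-type hardness. For a fixed balanced partition $(X, Y)$, I would build a family of $2^{\Omega(\tau(\kappa))}$ satisfying assignments indexed by the independent binary choices supplied by the crossing clauses, such that any two distinct members $\mathbf{y}, \mathbf{y}'$ admit a mixed assignment---taking the $X$-block from one and the $Y$-block from the other---that sets to $0$ every variable of some crossing clause and is therefore rejected. Monotonicity supplies the clean criterion that a positive clause is violated exactly when all its variables vanish, which is what makes this zeroing-out possible. A set with this property is a fooling set, so no rectangle of the cover can contain two of its members, forcing at least $2^{\Omega(\tau(\kappa))}$ rectangles for that partition. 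Combining with the first step then gives $s \geq 2^{\Omega(\tau(\kappa))}$.

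The main obstacle will be the \emph{multi-partition} nature of unstructured \texttt{DNNF}: a fooling set tailored to one partition does not immediately defeat a cover whose rectangles each respect a different partition. This is precisely why the argument must be anchored in treewidth rather than in a single balanced cut---well-linkedness yields $\Omega(\tau(\kappa))$ crossing connections for \emph{all} balanced partitions at once, so the fooling-set bound transfers uniformly across the cover. The bounded arity and degree assumptions are what keep the passage from the graph parameter to an explicit family of independent clauses lossless up to constant factors: without them, a single high-degree variable or one wide clause could collapse many nominally independent crossings into one, and the exponential lower bound would degrade.
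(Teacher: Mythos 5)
First, a point of comparison: the paper does not actually prove this proposition --- it is imported, with attribution, from Theorem 8.3 of \cite{Amarilli2020}, and only the subsequent remark (the extension to negative \texttt{CNF}) is argued locally. Your sketch therefore has to be measured against the proof in that reference, whose overall architecture --- a \texttt{DNNF} of size $s$ yields a cover of the model set by at most $s$ balanced multi-partition rectangles; treewidth yields a well-linked set of order $\Omega(\tau(\kappa))$ in the primal graph; bounded arity and degree turn disjoint crossing paths into variable-disjoint monotone clauses split by the partition; and monotonicity lets a mixed assignment zero out such a clause --- you have reconstructed correctly at the level of ingredients.

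Two steps, however, are asserted where they fail as stated. First, it is not true that ``for any way of splitting the variables into two roughly equal halves, there are $\Omega(\tau(\kappa))$ vertex-disjoint connections crossing the cut'': if the high-treewidth core of the primal graph occupies a vanishing fraction of the variables (say a $\sqrt{k}\times\sqrt{k}$ grid attached to a long path), a partition balanced over \emph{all} variables can leave the core entirely on one side, with a single crossing edge. The cited proof avoids this by using the refined rectangle-cover lemma in which each rectangle is balanced \emph{with respect to the well-linked set} $S$ (each block contains between $|S|/3$ and $2|S|/3$ of its elements); only then does well-linkedness deliver the $\Omega(\tau(\kappa))$ disjoint crossing paths. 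Second, your fooling set is built for one fixed partition, and the closing claim that the bound ``transfers uniformly across the cover'' is precisely the multi-partition difficulty left unresolved: two members of a fooling set tailored to partition $P$ may lie in a common rectangle whose underlying partition is some other $P'$, and the mixed assignment that witnesses falsification for $P$ says nothing about $P'$. The standard repair is to abandon the global fooling set and argue per rectangle: for each balanced rectangle $R=A\times B$ contained in $\mathscr{s}(\kappa)$, each of the $\Omega(\tau(\kappa))$ disjoint crossing clauses forces either all of $A$ to satisfy its left part or all of $B$ to satisfy its right part, whence $R$ covers at most a $2^{-\Omega(\tau(\kappa))}$ fraction of a suitable family of models, and $s\geq 2^{\Omega(\tau(\kappa))}$ follows by counting. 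With these two corrections your outline matches the argument of \cite{Amarilli2020}.
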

\vspace{5mm}
\begin{remark}
    This also applies to negative \texttt{CNF} (\ie with only negative literals): otherwise we could simply rewrite a monotone \texttt{CNF} $\kappa^+$ of bounded arity and degree to a negative \texttt{CNF} $\kappa^-$ of bounded arity and degree (with variable change $Y_i \to \neg Z_i$), get a \texttt{DNNF} $C^-$ equivalent to $\kappa^-$ and smaller than $2^{\Omega(\tau(\kappa))}$, then negate all literals in $C^-$ to get a \texttt{DNNF} $C^+$ equivalent to $\kappa^+$.
\end{remark}

\begin{figure}[h]
    \hfill
    \begin{subfigure}[t]{0.4\linewidth}
        \centering
        \gridgraph{3}
        \caption{$3 \times 3$ grid}
        \label{fig:3x3grid}
    \end{subfigure}
    \hfill
    \begin{subfigure}[t]{0.4\linewidth}
        \centering
        \diamondgraph{3}
        \caption{Primal graph of $\kappa_{3 \times 3}$}
    \end{subfigure}
    \hfill \break
    
    \begin{subfigure}[t]{0.4\linewidth}
        \centering
        \gridgraph{5}
        \caption{$5 \times 5$ grid}
        \label{fig:5x5grid}
    \end{subfigure}
    \hfill
    \begin{subfigure}[t]{0.4\linewidth}
        \centering
        \diamondgraph{5}
        \caption{Primal graph of $\kappa_{5 \times 5}$}
    \end{subfigure}
\caption{$k \times k$ grid graphs and the corresponding primal graphs of $\kappa_{k \times k}$}
\label{fig:grids}
\end{figure}

\begin{proposition}\label{prop:match2dnnf}
    \texttt{Match} cannot be compiled to \texttt{DNNF}.
\end{proposition}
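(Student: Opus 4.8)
The plan is to refute compilability by exhibiting a family of \texttt{Match} theories of polynomial size whose smallest equivalent \texttt{DNNF} is forced to grow super-polynomially, which directly contradicts the definition of $\to_c$. The starting point is the observation that matching is a negative constraint: a set of selected edges is a matching iff no two \emph{incident} edges are selected together. So I would first rewrite a theory $(G:=(V, E), \varsigma) \in \mathtt{T_{Match}}$ as the negative 2-\texttt{CNF}
\begin{equation*}
    \kappa_G := \bigwedge_{\substack{e_1, e_2 \in E,\ e_1 \neq e_2 \\ e_1 \cap e_2 \neq \emptyset}} \left( \neg \varsigma(e_1) \lor \neg \varsigma(e_2) \right),
\end{equation*}
which represents the same boolean function as $(G, \varsigma)$. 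The key structural remark is that the primal graph of $\kappa_G$ has one vertex per variable $\varsigma(e)$ and joins two variables exactly when the underlying edges of $G$ share an endpoint, so it is precisely the \textbf{line graph} $L(G)$; hence $\tau(\kappa_G) = \tau(L(G))$.

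Next I would instantiate $G$ with the family of $k \times k$ grid graphs (Figure \ref{fig:3x3grid}, Figure \ref{fig:5x5grid}), giving negative \texttt{CNF}s $\kappa_{k \times k} := \kappa_{G_k}$. These have arity $2$, and since the grid has maximum degree $4$ every variable occurs in a bounded number of clauses, so $\kappa_{k \times k}$ also has bounded degree; this is exactly the regime required by the preceding proposition adapted from \cite{Amarilli2020}. The crux of the argument is the tree-width lower bound $\tau(\kappa_{k \times k}) = \tau(L(G_k)) = \Omega(k)$: I would establish it by showing that $L(G_k)$ contains a grid of side $\Omega(k)$ as a minor (the construction being the rotated grid traced by the red diagonals in the primal graphs of Figure \ref{fig:grids}), and then invoking minor-monotonicity of tree-width together with the fact that the $m \times m$ grid has tree-width $m$. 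Since $G_k$ has $\Theta(k^2)$ vertices and edges, the corresponding \texttt{Match} theory has size polynomial in $k$.

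Finally, applying the preceding proposition (extended to negative \texttt{CNF} by the subsequent remark) to the bounded-arity, bounded-degree formula $\kappa_{k \times k}$, the smallest \texttt{DNNF} equivalent to $\kappa_{k \times k}$ — and therefore to the \texttt{Match} theory on $G_k$ — has size $2^{\Omega(\tau(\kappa_{k \times k}))} = 2^{\Omega(k)}$. This is exponential in the side length $k$ while the \texttt{Match} theory itself has size $\Theta(k^2)$, so no polynomial bound on the compiled size can hold and $\mathtt{Match} \to_c \mathtt{DNNF}$ fails. The step I expect to be the main obstacle is the tree-width lower bound $\tau(L(G_k)) = \Omega(k)$: everything else is bookkeeping, whereas this requires an explicit embedding of a large grid minor inside the line graph of the grid (or an equivalent bramble/separator argument) to certify that the tree-width genuinely scales with $k$.
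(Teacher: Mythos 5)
Your proposal is correct and follows essentially the same route as the paper's proof: encode the matching theory as a negative 2-\texttt{CNF} of bounded arity and degree, observe that its primal graph is the line graph of $G$, instantiate with $k \times k$ grids to force tree-width $\Omega(k)$, and invoke the lower bound adapted from \cite{Amarilli2020} together with the remark extending it to negative \texttt{CNF}. The only cosmetic difference is that you certify the tree-width lower bound via a grid \emph{minor} of $L(G_k)$, whereas the paper exhibits the $k \times k$ grid directly as a \emph{subgraph} of the primal graph (for odd $k$); both are standard and equally valid.
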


\begin{proof}
    It is easy to see that any matching theory $(G:=(V, E), \varsigma)$ can compiled into a negative 2-\texttt{CNF} formula (a \texttt{CNF} formula with at most two literals per clause):
    \begin{equation}
        \kappa_G := \bigwedge_{v \in V} \bigwedge_{\substack{e_i=(u, v) \in E, \\ e_j=(w, v) \in E, \\ i \ne j}}  (\neg Y_i \lor \neg Y_j)
    \end{equation}

    The degree of $\kappa_G$ corresponds to the degree of $G$, hence if $G$ is of bounded degree then $\kappa_G$ is too. Besides, we know that $k \times k$ grid graphs have a bounded degree of $4$ and a tree-width of $k$ \cite{Diestel2017}. We can also show that the \texttt{CNF} $\kappa_{k \times k}$ corresponding to matching constraints on the $k \times k$ grid has a tree-width in $\mathcal{O}(k)$. In fact, it is easy to notice that for odd values of $k$, the $k \times k$ grid is a subgraph of the primal graph of $\kappa_{k \times k}$. We give two examples for $k=3$ and $k=5$ on Figure \ref{fig:grids}. Since $\kappa_{k \times k}$ has a bounded arity (2) and a bounded degree (4) but unbounded tree-width in $\mathcal{O}(k)$, the smallest \texttt{DNNF} equivalent to $\kappa_{k \times k}$ has an exponential size. Therefore, matching constraints on $k \times k$ grid graphs can not be compiled to \texttt{DNNF}.

    To conclude, \texttt{Match} cannot be compiled to \texttt{DNNF}.
\end{proof}


    

\section{Proofs} \label{sec:proofs}
\subsection{Cardinal constraints} \label{sec:proof_card}
In this section, we prove Proposition \ref{prop:card2dnnf}:
\begin{proposition*}
    \texttt{Card} can be efficiently compiled to \texttt{d-DNNF}.
\end{proposition*}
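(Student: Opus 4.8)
The plan is to build the target circuit by Shannon expansion on the variables $Y_1, \ldots, Y_n$ (where $n = |\mathbf{Y}|$), conditioning on them one at a time and memoizing identical residual constraints. The key observation is that after deciding the values of $Y_1, \ldots, Y_{k-1}$, the only feature of that partial assignment relevant to a cardinal constraint is the number $s$ of variables already set to $1$: the residual requirement on the remaining variables $\{Y_k, \ldots, Y_n\}$ is again a cardinal constraint with a shifted threshold. Since $s$ ranges over $\{0, \ldots, k-1\}$, there are at most $n+1$ distinct residual constraints at each level and hence $\mathcal{O}(n^2)$ residual constraints overall, which is precisely what will keep the circuit polynomial.

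Concretely, for each $1 \le k \le n+1$ and each integer $r$, I would introduce a node $N_{k,r}$ representing the boolean function over $\{Y_k, \ldots, Y_n\}$ that is true iff exactly $r$ of these variables are set to $1$. The base layer is $N_{n+1,0} := \top$ and $N_{n+1,r} := \bot$ for $r \ne 0$, and more generally $N_{k,r} := \bot$ whenever $r < 0$ or $r > n-k+1$. The recursive step is the decision node
\[
    N_{k,r} := (Y_k \land N_{k+1,r-1}) \lor (\neg Y_k \land N_{k+1,r}).
\]
A downward induction on $k$ shows each $N_{k,r}$ computes the intended indicator. Finally I would set the root according to $\diamond$: take $N_{1,l}$ for $=$, take $\bigvee_{r=0}^{l} N_{1,r}$ for $\le$, and take $\bigvee_{r=l}^{n} N_{1,r}$ for $\ge$.

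It then remains to verify the three structural requirements. The circuit is in \texttt{NNF} because negations occur only on the variable leaves $\neg Y_k$. Each $\land$-node is decomposable because one child is the literal $Y_k$ or $\neg Y_k$ while the other child $N_{k+1,\cdot}$ ranges only over $\{Y_{k+1}, \ldots, Y_n\}$, so they share no variable. Each internal $\lor$-node is deterministic because its two disjuncts force $Y_k = 1$ and $Y_k = 0$ respectively and hence accept disjoint sets of states; the root $\lor$-node used for $\le$ and $\ge$ is deterministic because the $N_{1,r}$ enforce pairwise distinct total counts and so are mutually exclusive. Thus the circuit is a \texttt{d-DNNF} equivalent to the input cardinal constraint.

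For size and running time, I would count $\mathcal{O}(n)$ levels with $\mathcal{O}(n)$ reachable values of $r$ per level, giving $\mathcal{O}(n^2)$ nodes, each with a constant number of outgoing wires, hence $\mathcal{O}(n^2)$ wires; memoizing the $N_{k,r}$ rather than unfolding a tree is exactly what avoids an exponential blow-up. Building these nodes by dynamic programming takes time polynomial in $n$, and since $n = |\mathbf{Y}|$ is bounded by the size of the theory, this is an efficient compilation. The main point requiring care is the determinism argument — in particular checking that the top-level disjunction for $\le$ and $\ge$ remains deterministic — together with the bookkeeping (merging equal residual constraints and pruning the out-of-range values of $r$) needed to guarantee the polynomial size bound.
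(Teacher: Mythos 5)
Your construction is correct and is essentially the same as the paper's: both build the circuit by Shannon expansion on the variables in a fixed order, memoizing nodes indexed by position and a running count so that only $\mathcal{O}(n^2)$ decision nodes arise, and both verify decomposability and determinism from the decision-node structure (the paper's $C^k_l$ tracks the count over the prefix $Y_1,\dots,Y_k$ where your $N_{k,r}$ tracks the residual requirement on the suffix, which is a mirror image of the same recurrence). The only cosmetic difference is the handling of $\le$ and $\ge$: you take a deterministic disjunction of the $N_{1,r}$ at the root, while the paper replaces the boundary nodes by $\top$; both are valid.
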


\begin{figure}[htb]
    \hfill
    \begin{subfigure}[t]{0.3\linewidth}
        \includegraphics[width=\textwidth]{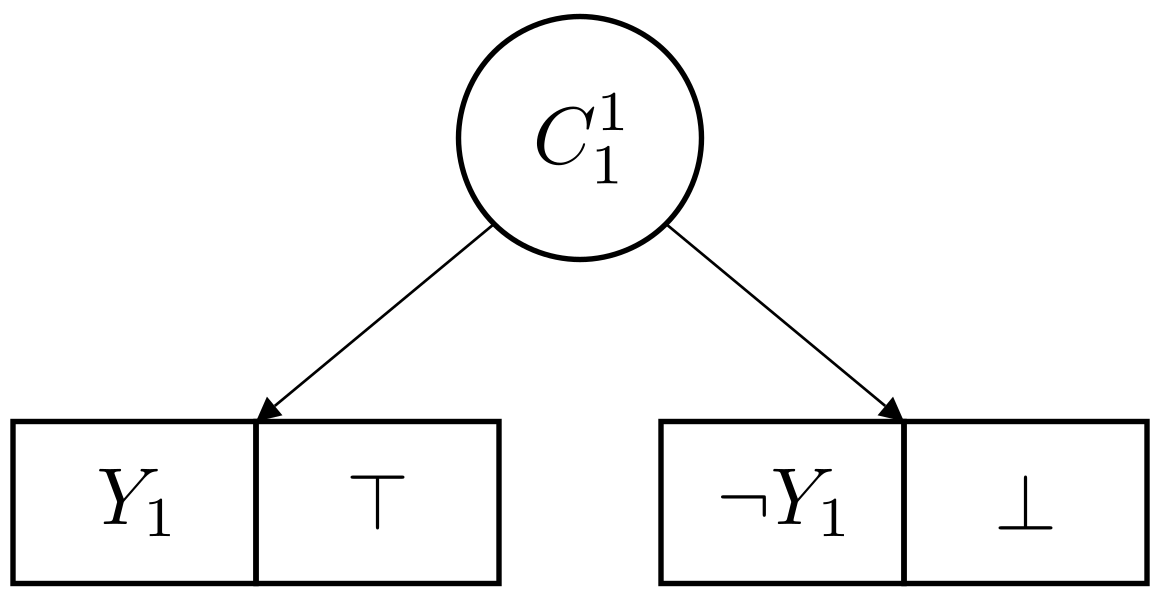}
        \caption{$C_1^1$}
        \label{fig:link_C11}
    \end{subfigure}
    \hfill
    \begin{subfigure}[t]{0.3\linewidth}
        \includegraphics[width=\textwidth]{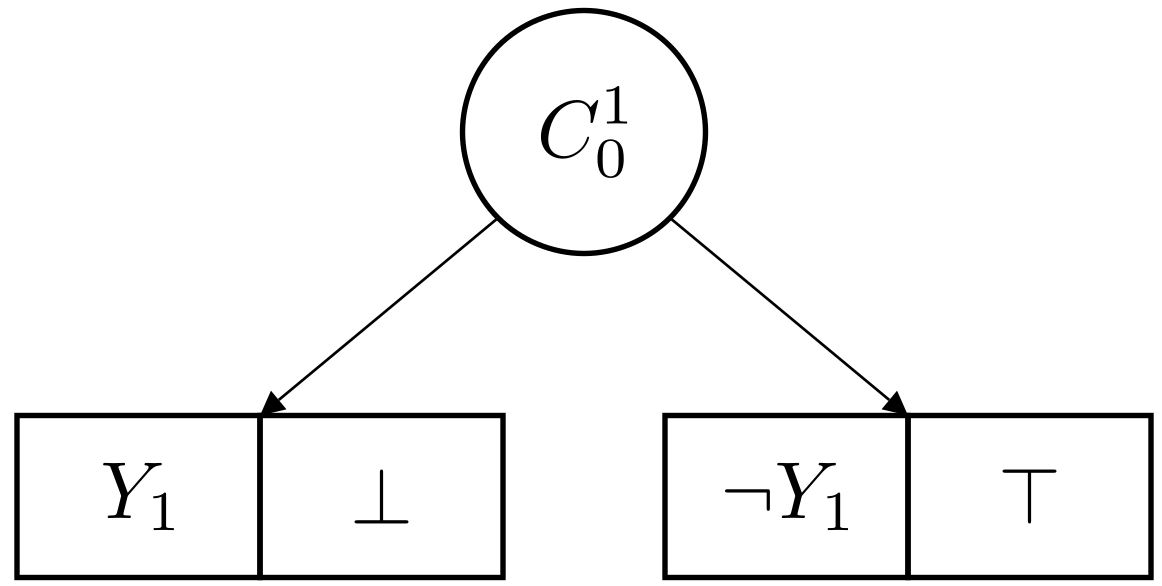}
        \caption{$C_0^1$}
        \label{fig:link_C01}
    \end{subfigure}
    \hfill \break
    
    \begin{subfigure}[t]{0.3\linewidth}
        \includegraphics[width=\textwidth]{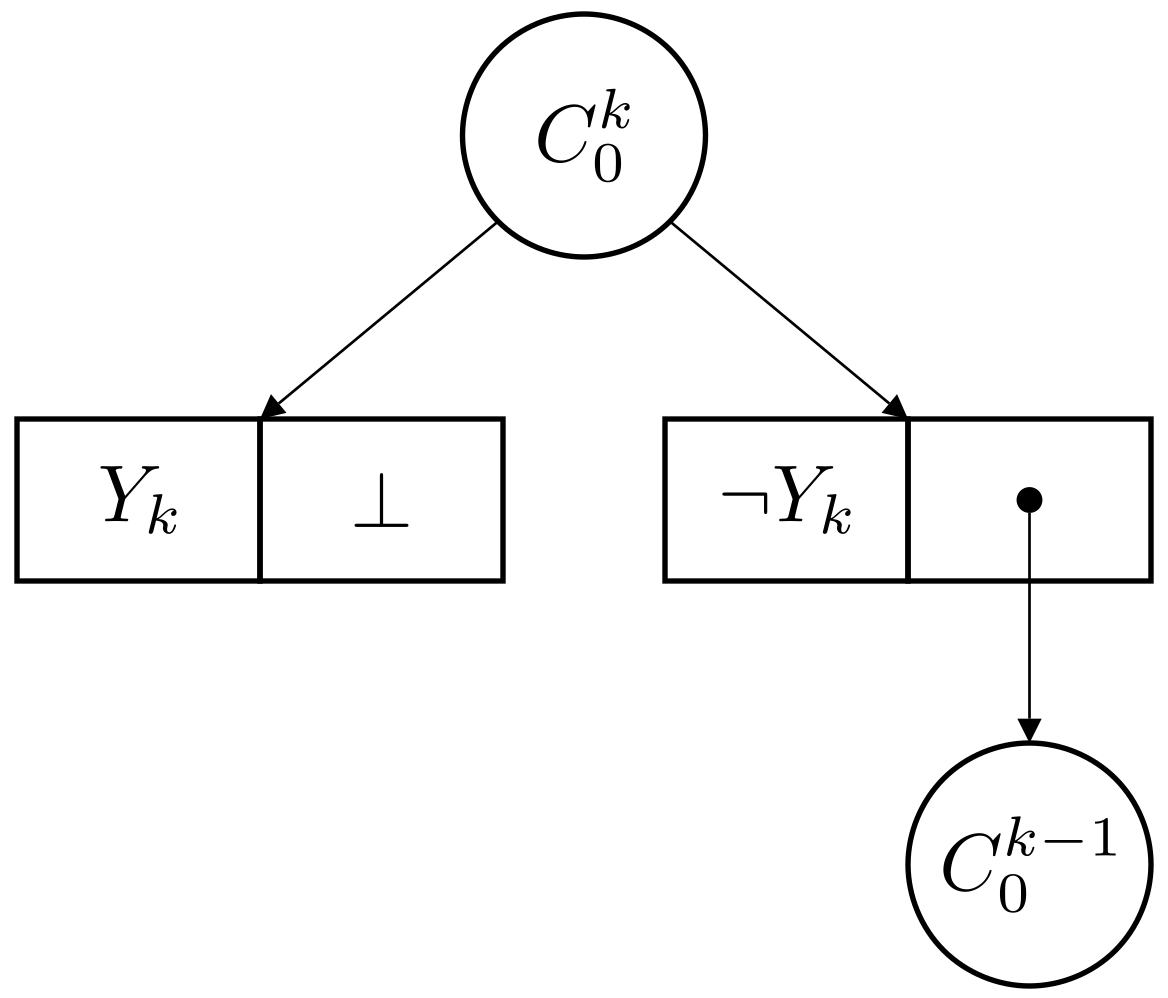}
        \caption{$C_0^k$ for $k>1$}
        \label{fig:link_C0k}
    \end{subfigure}
    \hfill
    \begin{subfigure}[t]{0.3\linewidth}
        \includegraphics[width=\textwidth]{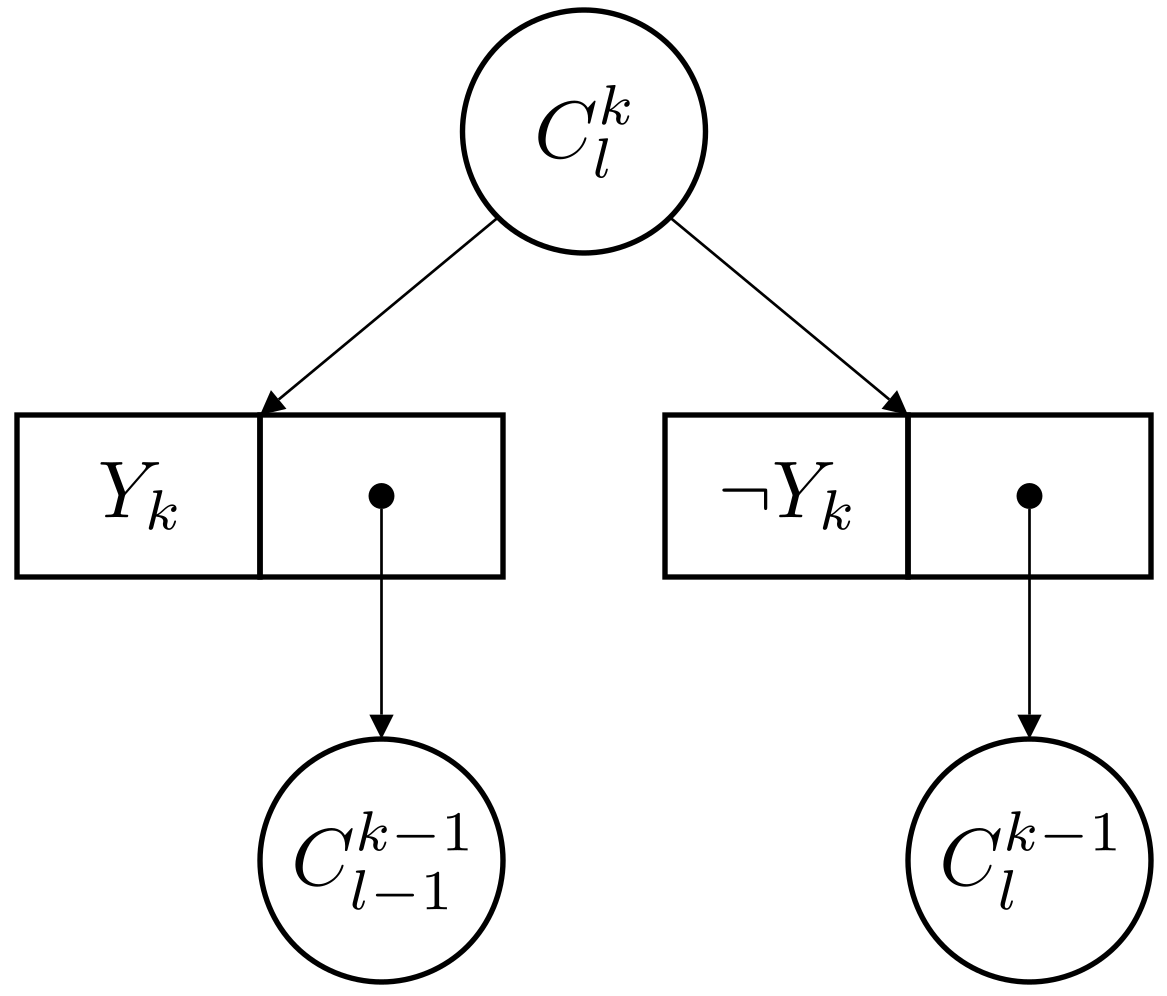}
        \caption{$C_l^k$ for $0<l<k$}
        \label{fig:link_Clk}
    \end{subfigure}
    \hfill
    \begin{subfigure}[t]{0.3\linewidth}
        \includegraphics[width=\textwidth]{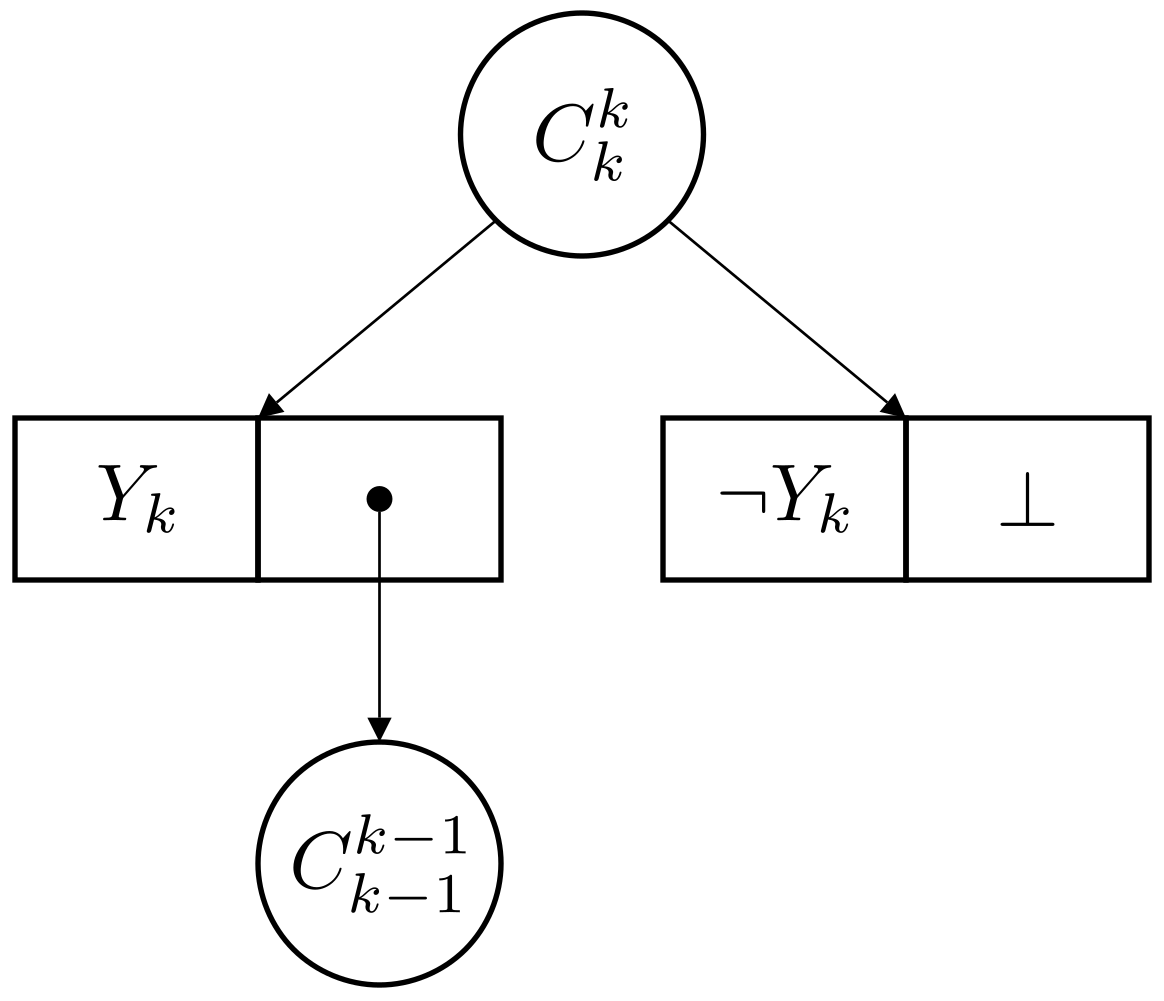}
        \caption{$C_k^k$ for $k>1$}
        \label{fig:link_Clk}
    \end{subfigure}

\caption{Decision nodes $C_l^k$}
\label{fig:link_nodes}
\end{figure}

\begin{proof}
    We prove below that Figure \ref{fig:link_nodes} gives a template to translate a cardinal constraint $r := \langle \mathbf{Y}_{1:k}, \mathbf{1} \rangle=l$ with $k \geq 1$ into an equivalent \texttt{d-DNNF} $C_l^k$ of polynomial size. It also gives us an efficient compilation algorithm to perform this translation in polynomial time: create all decision nodes and connect them appropriately.

    First, notice that only variables in $C_l^k$ are negated, therefore $C_l^k$ is in \texttt{NNF}. Moreover, $C_l^k$ is only composed of decision nodes on $\{Y_j\}_{1 \leq j \leq k}$. This implies that each $\land$-node is \textbf{decomposable}: the left side contains a variable $Y_m$ while the right side contains either no variables or variables $\{Y_j\}_{1 \leq j \leq m-1}$. This also implies that each $\lor$-node is \textbf{deterministic}: one side accepts a variable $Y_m$ while the other accepts $\neg Y_m$, hence they cannot be satisfied jointly. Therefore, $C_l^k$ is a \texttt{d-DNNF}.
    
    Besides, $C_l^k$ is only composed of nodes $(C_i^j)_{1 \leq j \leq k, 0 \leq i \leq \min(j, l)}$ with $6$ wires each, meaning that the size of $C_l^k$ is in $\mathcal{O}(k^2)$.

    Finally, we show by recurrence on $k$ that the circuit $C_l^k$ with $k \geq 1$ and $0 \leq l \leq k$ accepts a state $\mathbf{y} \in \mathbb{B}^{\mathbf{Y}_{i:k}}$ iff it contains exactly $l$ variables (ie. $|\mathbf{y}|=l$):
    \begin{itemize}
        \item Initialization for $k=1$: $C_1^1$ only accepts $y_1=1$ and $C_0^1$ only accepts $y_1=0$.
        \item Heredity from $k$ to $k+1$, for $0 \leq l \leq k$ and $\mathbf{y} \in \mathbb{B}^{\mathbf{Y}_{i:k+1}}$:
        \begin{itemize}
            \item if $l=0$: $C_0^{k+1}(\mathbf{y})=1$ iff $y_{k+1}=0$ and $C_0^k(\mathbf{y}_{i:k})=1$, which means $C_0^{k+1}$ accepts $\mathbf{y}$ iff:
            $$|\mathbf{y}|=|\mathbf{y}_{i:k}|+y_{k+1}=0+0=0$$
            \item if $l>0$: $C_l^{k+1}(\mathbf{y})=1$ in only two cases:
            \begin{itemize}
                \item if $y_{k+1}=0$ and $C_l^k(\mathbf{y}_{i:k})=1$, which means we have: $$|\mathbf{y}|=|\mathbf{y}_{i:k}|+y_{k+1}=l+0=l$$
                \item if $y_{k+1}=1$ and $C_{l-1}^k(\mathbf{y}_{i:k})=1$, which means we have: $$|\mathbf{y}|=|\mathbf{y}_{i:k}|+y_{k+1}=l-1+1=l$$
            \end{itemize}
        \end{itemize}
    \end{itemize}
\end{proof}

\begin{remark}
    This algorithm actually compiles cardinal constraints into an \texttt{OBDD} as all $\lor$-nodes are decision nodes ordered by the natural order on $\{Y_j\}_{1 \leq j \leq k}$. Besides, the algorithm can be easily modified to efficiently compile a cardinal constraint $\langle \mathbf{Y}_{1:k}, \mathbf{1} \rangle \leq l$ into an \texttt{OBDD} by replacing $C^1_1$ and $C^k_k$ nodes by $\top$ nodes. Similarly, cardinal constraints $\langle \mathbf{Y}_{1:k}, \mathbf{1} \rangle \geq l$ can be efficiently compiled to \texttt{OBDD}.
\end{remark}

\subsection{Simple path constraints} \label{sec:proof_path}
In this section, we prove Proposition \ref{prop:aspath2dnnf}:
\begin{proposition*}
    \texttt{ASPath} can be efficiently compiled to \texttt{d-DNNF}.
\end{proposition*}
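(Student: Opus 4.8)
The plan is to mirror the compilation scheme used for Proposition \ref{prop:card2dnnf}: I would read the edge variables one at a time in a topological ordering and assemble the circuit out of decision nodes, where the role played there by the running count of positive variables is now played by the current \emph{head} of the path under construction. Concretely, fix a topological ordering $e_1, \ldots, e_{|E|}$ of the edges of $G$, choosing one induced by a topological ordering of the vertices, so that all edges sharing a tail are consecutive and consecutive edges of any path appear in increasing order. I then introduce nodes $N_i^{s}$ indexed by a pair (number $i$ of edges already read, current state $s$), where $s$ is either a vertex $v \in V$ (the selected edges among $e_1, \ldots, e_i$ form a simple path with head $v$) or a distinguished symbol $\varnothing$ (no edge selected yet). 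Each node is meant to represent the residual boolean function on the remaining variables $\varsigma(e_{i+1}), \ldots, \varsigma(e_{|E|})$ expressing that these complete the committed prefix into a total simple path.

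Each node is built by Shannon decomposition on the next variable $\varsigma(e_{i+1})$, writing $e_{i+1} = (u, v)$. In state $\varnothing$ the edge may only start a path when $u$ is a source, so $N_i^{\varnothing} = (\neg \varsigma(e_{i+1}) \land N_{i+1}^{\varnothing}) \lor (\varsigma(e_{i+1}) \land N_{i+1}^{v})$ if $u$ is a source and $N_i^{\varnothing} = \neg \varsigma(e_{i+1}) \land N_{i+1}^{\varnothing}$ otherwise; in state $w$ the edge may only extend the path when $u = w$, giving $N_i^{w} = (\neg \varsigma(e_{i+1}) \land N_{i+1}^{w}) \lor (\varsigma(e_{i+1}) \land N_{i+1}^{v})$ in that case and the forced node $N_i^{w} = \neg \varsigma(e_{i+1}) \land N_{i+1}^{w}$ otherwise. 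After all edges are read the recursion bottoms out: the node for state $w$ is mapped to $\top$ iff $w$ is a sink, and the node for state $\varnothing$ is mapped to $\bot$. Since each node carries $O(1)$ wires and there are $O(|E| \cdot |V|)$ admissible (step, state) pairs, the circuit $C$ has size $O(|E| \cdot |V|)$, yielding a polynomial-time compilation algorithm.

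To certify that $C$ is a \texttt{d-DNNF} I would argue exactly as in the cardinal case: only variables are ever negated, so $C$ is in \texttt{NNF}; every $\land$-node pairs a literal on $\varsigma(e_{i+1})$ with a subcircuit over the strictly later variables $\varsigma(e_{i+2}), \ldots, \varsigma(e_{|E|})$, hence is decomposable; and every $\lor$-node branches on the value of $\varsigma(e_{i+1})$, so its two disjuncts disagree on that variable and cannot be satisfied jointly, hence is deterministic. Correctness, namely that $C$ accepts $\mathbf{y}$ iff the selected edges form a total simple path, is then proved by reverse induction on the processing step, showing that $N_i^{s}$ computes the claimed residual function. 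The crucial simplification is that $G$ is acyclic, so every total path is automatically simple and the head vertex alone is a sufficient summary of the committed prefix.

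The main obstacle is justifying that this single scalar state really captures all information about the processed prefix that the remaining edges can interact with, and this is exactly where the choice of ordering matters. I must verify that ordering edges by the topological rank of their tail guarantees: (i) once the head advances from $w$ to a downstream vertex, no later edge has tail $w$, so no second edge can leave $w$; (ii) an edge whose tail differs from the current head can only be discarded, forcing the selected set to be connected; and (iii) the ``start only at a source'' condition is correctly enforced out of state $\varnothing$. Establishing these monotonicity and disjointness facts, together with the base-case bookkeeping that makes empty selections and selections terminating at a non-sink reject, is the delicate part; everything else follows the template of Proposition \ref{prop:card2dnnf}.
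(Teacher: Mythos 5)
Your proposal is correct and follows essentially the same construction as the paper's proof: a decision-node circuit (in fact an \texttt{OBDD}) built along a topological ordering of the edges, with the dynamic-programming state being the current endpoint of the partial path, the same decomposability/determinism check, and the same reliance on acyclicity to make the head vertex a sufficient summary. The only differences are presentational: you unfold the recursion forward (each node is a residual function of the unread suffix, with a $\varnothing$ state and a sink test at the leaves, which also handles multiple sources and sinks directly), whereas the paper unfolds it backward (the subcircuit $C_v^i$ is a function of the prefix $\mathbf{Y}_{1:i}$ asserting that it forms a path $s \to v$, after first merging sources and sinks without loss of generality).
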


\vspace{5mm}
\begin{figure}[b]
\begin{subfigure}[t]{0.30\linewidth}
    \centering
    \includegraphics[width=\textwidth]{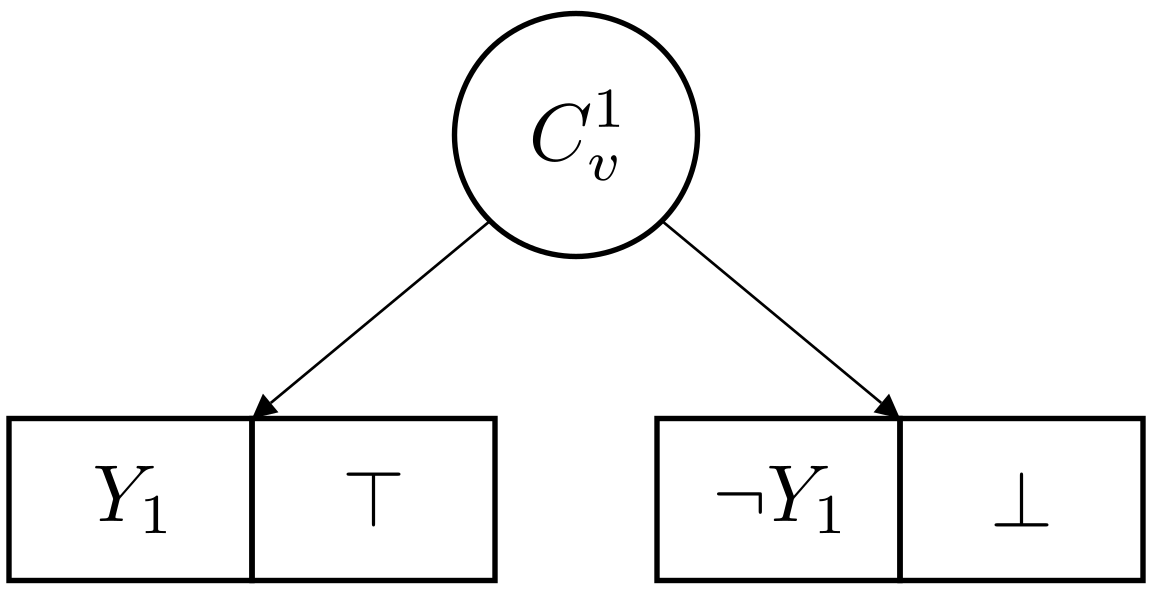}
    \caption{{\scriptsize $C_v^1$ if $e_1=(s, v)$}}
    \label{fig:Cnk_outgoing}
\end{subfigure}
\hfill
\begin{subfigure}[t]{0.30\linewidth}
    \centering
    \includegraphics[width=\textwidth]{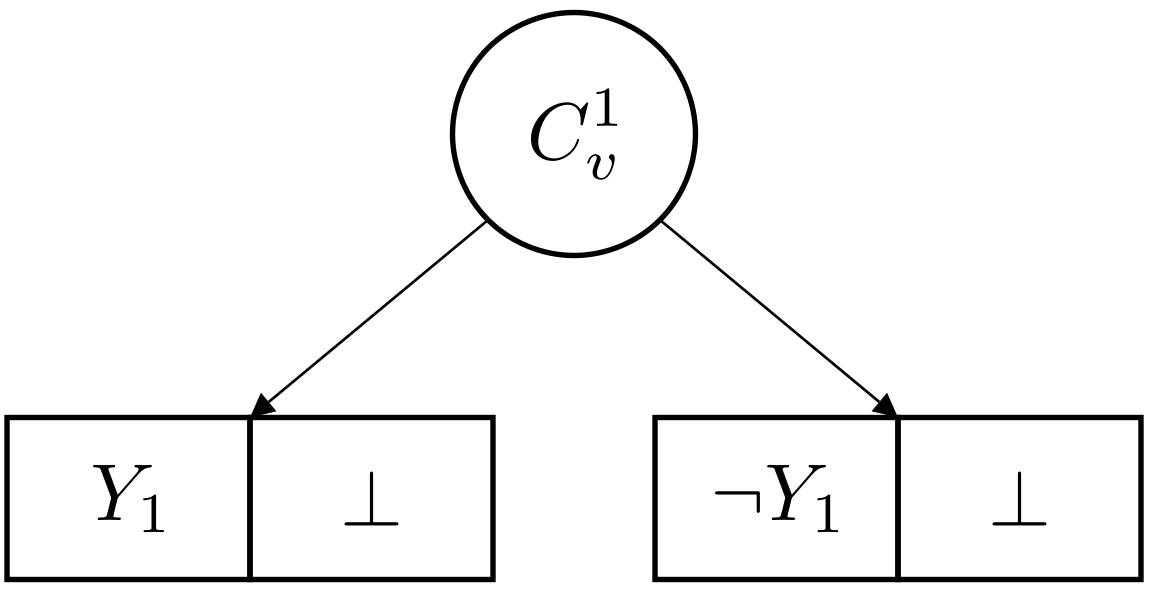}
    \caption{{\scriptsize $C_v^1$ if $v \neq s$ and $e_1 \neq (s, v)$}}
    \label{fig:Cnk}
\end{subfigure}
\hfill
\begin{subfigure}[t]{0.30\linewidth}
    \centering
    \includegraphics[width=\textwidth]{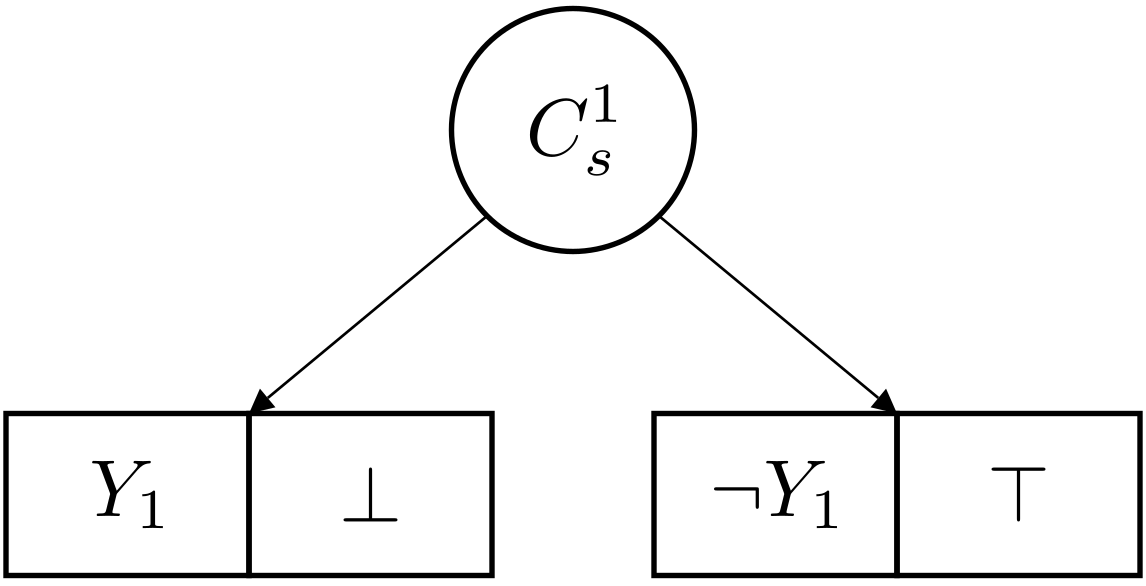}
    \caption{{\scriptsize $C_s^1$}}
    \label{fig:Cs1}
\end{subfigure}
\caption{Initial decision nodes $C_v^1$ for $v \in V$}
\label{fig:spk_nodes}
\end{figure}

\begin{figure}[t]
\hspace*{\fill}
\begin{subfigure}[t]{0.30\linewidth}
    \centering
    \includegraphics[width=\textwidth]{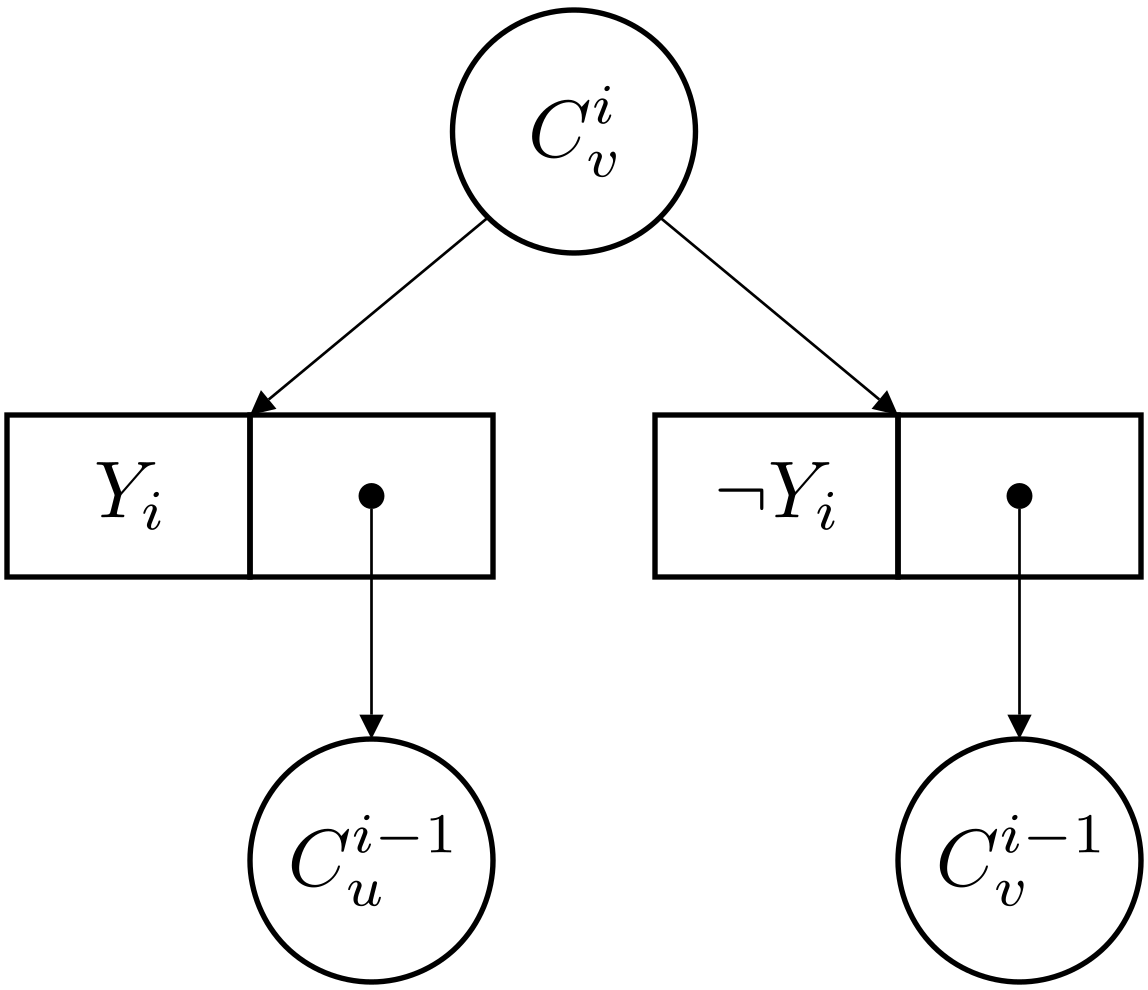}
    \caption{{\scriptsize $C_v^{i+1}$ if $e_{i+1}=(u, v)$}}
    \label{fig:Cni_outcoming}
\end{subfigure}
\hfill
\begin{subfigure}[t]{0.30\linewidth}
    \centering
    \includegraphics[width=\textwidth]{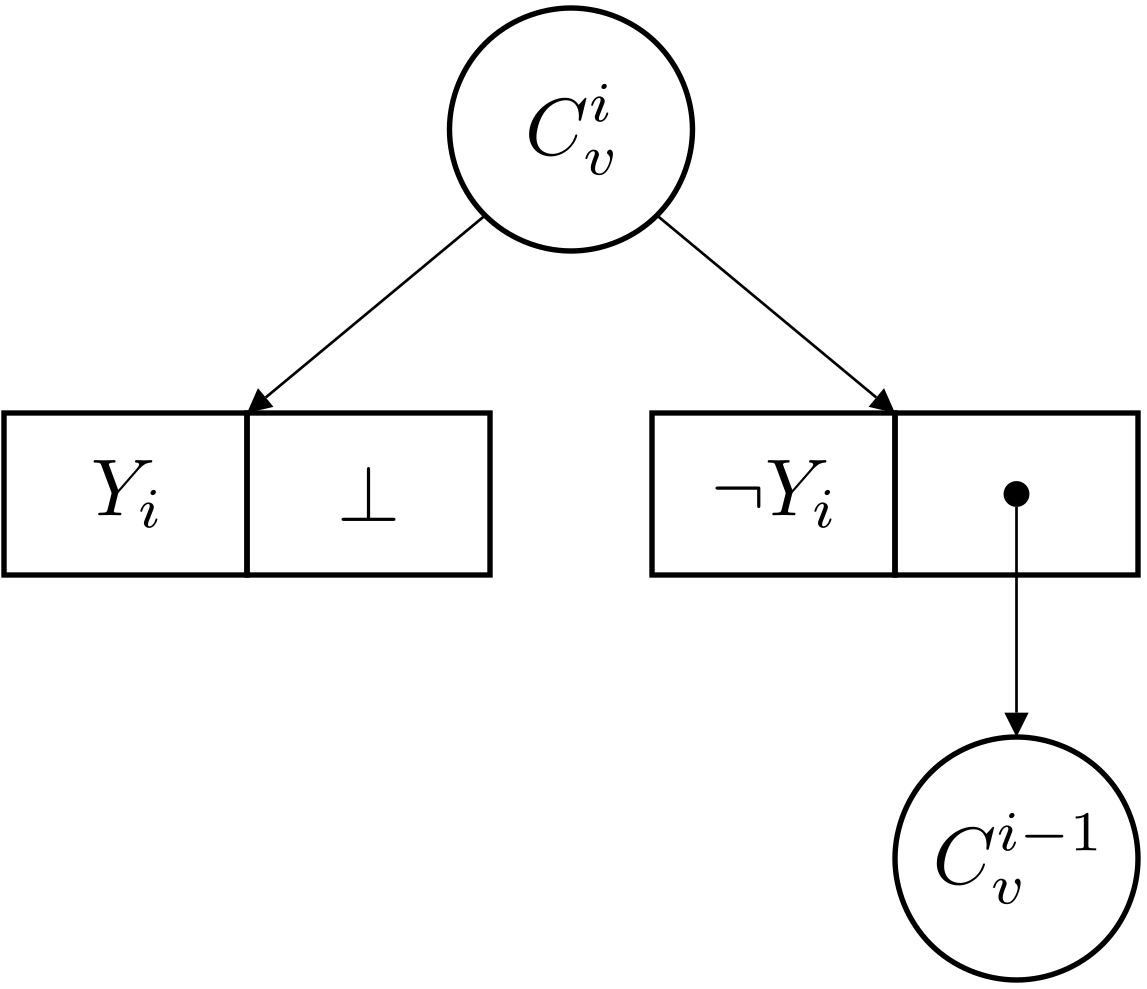}
    \caption{{\scriptsize $C_v^{i+1}$ if $e_{i+1}$ is not an incoming edge of $v$}}
    \label{fig:Cni_not_outcoming}
\end{subfigure}
\hspace*{\fill}
\caption{Decision nodes $C_v^i$ for $1 < i \leq k$ and $v \in V$}
\label{fig:spi_nodes}
\end{figure}

\begin{proof}
    Let's assume an acyclic simple path theory $(D:=(V, E), \varsigma) \in \mathtt{T_{ASP}}(\mathbf{Y})$ where $\mathbf{Y}:=\{Y_i\}_{1 \leq i \leq |E|}$. For simplicity, we note $e_i$ the edge such that $\varsigma(e_i) = Y_i$. We assume without loss of generality that $D$ only contains a single source and sink and that $\varsigma$ corresponds to a topological ordering of the edges in $E$ (\ie for $e_i=(u_i, v_i), e_j=(u_j, v_j) \in E$ there is a path from $v_i$ to $u_j$ iff $i<j$). If this is not the case:
    \begin{itemize}
        \item chose one source vertex $s$, delete all the others and reconnect their outgoing edges to $s$
        \item chose a sink vertex $t$, delete all the others and reconnect their incoming edges to $t$
        \item change $\varsigma$ to labeling that corresponds to a topological order, this can be reverted after compilation by renaming the variables to get a circuit equivalent to $(D:=(V, E), \varsigma)$
    \end{itemize}

    For a vertex $u \in V$, we will note $\varsigma_m(u)$ the index of the first incident edge to $u$ and $\varsigma_M(u)$ the index of the last outgoing edge of $u$. We also note $D^j$ the graph that contains edges $(e_i)_{1 \leq i \leq j}$ and all vertices that are endpoints of those edges.

    We prove below that Figures \ref{fig:spk_nodes} and \ref{fig:spi_nodes} gives a template to translate $(D:=(V, E), \varsigma)$ with $k \geq 1$ into an equivalent \texttt{d-DNNF} $C_t^k$ of polynomial size. It also gives us an efficient compilation algorithm to perform this translation in polynomial time: create all decision nodes and connect them appropriately.

    Similar to the proof of Proposition \ref{prop:card2dnnf}, we show that $C_t^k$ is a \texttt{d-DNNF}. First, only variables in $C_t^k$ are negated, therefore $C_t^k$ is in \texttt{NNF}. Moreover, $C_t^k$ is only composed of decision nodes on $\{Y_j\}_{1 \leq j \leq k}$. This implies that each $\land$-node is \textbf{decomposable}: the left side contains a variable $Y_m$ while the right side contains either no variables or variables $\{Y_j\}_{1 \leq j \leq m-1}$. This also implies that each $\lor$-node is \textbf{deterministic}: one side accepts a variable $Y_m$ while the other accepts $\neg Y_m$, hence they cannot be satisfied jointly. Therefore, $C_t^k$ is a \texttt{d-DNNF}.

    Besides, $C_t^k$ has at most $|V| \times k = \mathcal{O}(k^3)$ decision nodes with $6$ wires each, meaning that the size of the circuit is in $\mathcal{O}(k^3)$.

    Notice that $C_s^i$ with $1 \leq i \leq k$ only accepts the null state $\mathbf{0} \in \mathbb{B}^{\mathbf{Y}_{1:i}}$. We prove this by recurrence on $i$:
    \begin{itemize}
        \item Initialization for $i=1$: $C_s^{1}(y_1)=1$ iff $y_1=0$ by definition (see Figure \ref{fig:Cs1}).
        \item Heredity from $i$ to $i+1$: $s$ has no incoming edge in $D$ (because it is a source vertex), hence $C_s^{i+1}$ accepts $\mathbf{y} \in \mathbb{B}^{\mathbf{Y}_{1:i+1}}$ iff $y_{i+1}=0$ and $C_s^{i}(\mathbf{y}_{1:i})=1$. Which means by the recurrence hypothesis that $\mathbf{y}_{1:i}=\mathbf{0}$ and therefore $\mathbf{y}=0$.
    \end{itemize}

    Then, let's show that if $\mathbf{y} \in \mathbb{B}^{\mathbf{Y}_{1:i}}$ represents a total simple path $s \to v$ in $D^i$, $y_i=1$ and $e_i=(u, w)$, then $w=v$ and $u \in D^{i-1}$.
    
    We first show that $u \in D^{i-1}$:
    \begin{itemize}
        \item if $u=s$, then $u \in D^1 \subset D^{i-1}$.
        \item if $u \ne s$, since $\mathbf{y}$ represents a path $s \to v$, there is an edge $e_j=(r, u)$ with $j < i$, which means that $u \in D^{i-1}$. 
    \end{itemize}
    
    Now let's show that $w=v$ reasoning by the absurd. Let's assume that $w \ne v$, then since $\mathbf{y}$ represents a path from $s$ to $v$, there is an edge $e_l=(q, v)$ with $l<i$. Hence, since $y_i=1$, $e_i$ is on the path from $s$ to $v$ which implies that there is a path from the end point $w$ of $e_i$ to the start point $q$ of $e_l$ with $l<i$, which is in contradiction with edges following a topological order.

    We can now show that the circuit $C_v^i$ with $1 \leq i \leq k$ and $v \in D^i \setminus s$ accepts a state $\mathbf{y} \in \mathbb{B}^{\mathbf{Y}_{1:i}}$ iff it represents a total simple path in $D^i$. In particular, this shows that $C_t^k$ is equivalent to $(D, \varsigma)$ and concludes the proof.

    We will proceed by recurrence on $i$, first showing that all accepted states by $C_v^i$ are paths $s \to v$ in $D^i$ then showing that only them are accepted.
    \begin{itemize}
        \item Initialization for $i=1$: $D^1$ only contains the vertices $s$ and $v$ such that $e_1=(s, v)$ and $C_v^1$ accepts exactly $y_1=1$ which is the only path $s \to v$ in $D^1$.
        \item Heredity from $i$ to $i+1$:
        \begin{itemize}
            \item Assume a state $\mathbf{y} \in \mathbb{B}^{\mathbf{Y}_{1:i+1}}$ represents a path $s \to v$ in $D^{i+1}$ and note:
            \begin{itemize}
                \item if $y_{i+1}=1$, then according to Lemma \ref{lem:topo} $e_{i+1}=(u, v)$ for some $u \in D^i$
                \begin{itemize}
                    \item if $u=s$: then $\mathbf{y}_{1:i}=\mathbf{0}$ and by Lemma \ref{lem:Csi} $C_v^{i+1}(\mathbf{y})=C_s^{i}(\mathbf{y}_{1:i})=1$
                    \item if $u \ne s$: then $\mathbf{y}_{1:i}$ represents a path $s \to u$ in $D^i$ and by the recurrence hypothesis $C_v^{i+1}(\mathbf{y})=C_u^i(\mathbf{y}_{1:i})=1$
                \end{itemize}
                \item if $y_{i+1}=0$, then $\mathbf{y}_{1:i}$ represents a path $s \to v$ in $D^i$ and by the recurrence hypothesis $C_v^{i+1}(\mathbf{y})=C_v^i(\mathbf{y}_{1:i})=1$.
            \end{itemize}
            \item Assume a state $\mathbf{y} \in \mathbb{B}^{\mathbf{Y}_{1:i+1}}$ is accepted by $C_v^{i+1}$:
            \begin{itemize}
                \item if $y_{i+1}=1$, then according to Lemma \ref{lem:topo} $e_{i+1}=(u, v)$ for some $u \in D^i$
                \begin{itemize}
                    \item if $u=s$: then $C_s^{i}(\mathbf{y}_{1:i})=C_v^{i+1}(\mathbf{y})=1$, hence by Lemma \ref{lem:Csi} $\mathbf{y}_{1:i}=\mathbf{0}$ and $\mathbf{y}$ represents a path $s \to v$ in $D^{i+1}$.
                    \item if $u \ne s$: then $C_u^i(\mathbf{y}_{1:i})=C_v^{i+1}(\mathbf{y})=1$, hence by the recurrence hypothesis $\mathbf{y}_{1:i}$ represents a path $s \to u$ in $D^i$ and by adding $e_{i+1}$ $\mathbf{y}$ represents a path $s \to v$ in $D^{i+1}$.
                \end{itemize}
                \item if $y_{i+1}=0$, then $C_u^i(\mathbf{y}_{1:i})=C_u^{i+1}(\mathbf{y})=1$, hence by the recurrence hypothesis $\mathbf{y}_{1:i}$ represents a path $s \to v$ in $D^i$ and by not adding edge $e_{i+1}$ $\mathbf{y}$ represents a path $s \to v$ in $D^{i+1}$.
            \end{itemize}
        \end{itemize}
    \end{itemize}
\end{proof}

\begin{remark}
    It is easy to see that $C_v^i$ is even an \texttt{OBDD} as every $\lor$-nodes are decision nodes ordered by the topological order of the edges.
\end{remark}

\section{Related work} \label{sec:related}


\paragraph{Graphical models} Graphical models allow to specify a family of distributions over a finite set of variables by means of a graph \cite{Lauritzen1996}. The graph encodes a set of properties (\eg factorization, independence, etc.) shared by all distributions in the family. These properties can be exploited to produce compressed representations and efficient inference algorithms \cite{Kschischang2001,Kwisthout2011}. In the context of probabilistic reasoning, the primal graph of a \texttt{CNF} $\kappa$, which has one vertex for each variable in $\kappa$ and an edge between two variables if they appear in the same clique, specifies to which graphical model the family of exponential distributions conditioned on $\kappa$ (\ie $\{\mathcal{P}(\cdot | \mathbf{p}, \kappa)\}_{\mathbf{p} \in ]0,1[^k}$) belong. In particular, traditional algorithms for graphical models can be leveraged to solve \texttt{PQE} and \texttt{MPE} problems in time $\mathcal{O}(k2^{\tau(\kappa)})$ where $k$ is the number of variables and $\tau(\kappa)$ the tree-width of the primal graph of $\kappa$. Such algorithms were used to implement semantic conditioning in \cite{Deng2014}. These algorithms are similar to knowledge compilation in that they first compute offline another representation of the distribution (in this case a junction tree), before running inference algorithms (for instance the sum-product or max-product algorithms) on this new representation. However, they are typically less efficient than knowledge compilation techniques since they are limited to bounded tree-width instances.

\paragraph{Compilation complexity} \label{sec:cc} Compilation complexity is interested in cases of non-efficient compilation from a source language $\mathtt{L}_s$ into a target language $\mathtt{L}_t$ tractable for a given reasoning problem (\ie $\mathtt{L}_s \to_c \mathtt{L}_t$ but not $\mathtt{L}_s \to_{ec} \mathtt{L}_t$). In such cases, knowledge compilation can still remain relevant if many hard queries have to be answered on a single theory. Indeed, a \textit{hard} poly-size compilation step to a target language (that is tractable for the required type of queries) can be done \textit{offline} in exchange of \textit{online} tractable queries on the compiled theory. Compilation complexity theory \cite{Cadoli2002} is the formal study of such classes of complexity.


\section{Conclusion} \label{sec:conclusion}
This paper studies the scalability of neurosymbolic techniques based on probabilistic reasoning. After introducing a unified framework for propositional knowledge representation and probabilistic reasoning, we identify the key probabilistic reasoning problems (\ie \texttt{MPE}, \texttt{ThreshEnum}, \texttt{PQE} and \texttt{EQE}) on which several neurosymbolic techniques rely. We use knowledge compilation to \texttt{d-DNNF} to show tractability results for several succinct languages that represent popular types of knowledge in informed classification. However, we also point out the limits of this approach, in particular its inability to exploit the complexity gap between optimization/enumeration problems (\texttt{MPE}/\texttt{ThreshEnum}) and counting problems (\texttt{PQE}/\texttt{EQE}). We bring together previously known and our new results into the first complexity map for probabilistic reasoning that includes counting, optimization and enumeration problems. We hope this work will help neurosymbolic AI practitioners navigate the scalability landscape of probabilistic neurosymbolic techniques.

Future directions for our research include completing this complexity map with: other representation languages, other (probabilistic) reasoning problems (\eg marginal \texttt{MPE} and \texttt{PQE} queries), a sharper understanding of the complexity gap and a characterization of space complexity for enumeration problems. We would also like to explore compilation complexity classes for probabilistic reasoning and approximate algorithms for counting problems: in particular, we would like to identify which representation languages belong to \texttt{Comp-P} or \texttt{Comp-\#P} and which admit a fully polynomial-time approximation scheme (\texttt{FPTAS}) for \texttt{PQE} and \texttt{EQE}. Finally, a practical study of computation times for probabilistic neurosymbolic techniques would be interesting, to understand how much these theoritical results are informative regarding their domain of scalability in practice.

\section*{Acknowledgments}
This work has been supported by the French government under the "France 2030” program, as part of the SystemX Technological Research Institute within the SMD project.

\bibliography{main}

\end{document}